\newcommand{\myparagraph}[1]{\needspace{1\baselineskip}\medskip\noindent {\bf #1}}
\definecolor{penndarkestblue}{cmyk}{1,0.74,0,0.77}
\definecolor{penndarkerblue}{cmyk}{1,0.74,0,0.70}
\definecolor{pennblue}{cmyk}{0.99,0.66,0,0.57} 
\definecolor{pennlighterblue}{cmyk}{0.98,0.44,0,0.35}
\definecolor{pennlightestblue}{cmyk}{0.38,0.17,0,0.17} 
\definecolor{penndarkestred}{cmyk}{0,1,0.89,0.66}
\definecolor{penndarkerred}{cmyk}{0,1,0.88,0.55}
\definecolor{pennred}{cmyk}{0,1,0.83,0.42} 
\definecolor{pennlighterred}{cmyk}{0,1,0.6,0.24}
\definecolor{pennlightestred}{cmyk}{0,0.43,0.26,0.12} 
\definecolor{penndarkestgreen}{cmyk}{1,0,1,0.68}
\definecolor{penndarkergreen}{cmyk}{1,0,1,0.57}
\definecolor{penngreen}{cmyk}{1,0,1,0.44} 
\definecolor{pennlightergreen}{cmyk}{1,0,1,0.25}
\definecolor{pennlightestgreen}{cmyk}{0.43,0,0.43,0.13}
\definecolor{penndarkestorange}{cmyk}{0,0.65,1,0.49}
\definecolor{penndarkerorange}{cmyk}{0,0.65,1,0.33}
\definecolor{pennorange}{cmyk}{0,0.54,1,0.24} 
\definecolor{pennlighterorange}{cmyk}{0,0.32,1,0.13}
\definecolor{pennlightestorange}{cmyk}{0,0.15,0.46,0.06}
\definecolor{penndarkestpurple}{cmyk}{0,1,0.11,0.86}
\definecolor{penndarkerpurple}{cmyk}{0,1,0.13,0.82}
\definecolor{pennpurple}{cmyk}{0,1,0.11,0.71} 
\definecolor{pennlighterpurple}{cmyk}{0,1,0.05,0.46}
\definecolor{pennlightestpurple}{cmyk}{0,0.35,0.02,0.23}
\definecolor{pennyellow}{cmyk}{0,0.20,1,0.05} 
\definecolor{pennlightgray1}{cmyk}{0,0,0,0.05}
\definecolor{pennlightgray3}{cmyk}{0.01,0.01,0,0.18}
\definecolor{pennmediumgray1}{cmyk}{0.04,0.03,0,0.31}
\definecolor{pennmediumgray4}{cmyk}{0.08,0.06,0,0.54}
\definecolor{penndarkgray2}{cmyk}{0.09,0.07,0,0.71}
\definecolor{penndarkgray4}{cmyk}{0.1,0.1,0,0.92}
\def\Tr{\mathsf{T}}
\def\Hr{\mathsf{H}}
\newcommand{\vertiii}[1]{{\left\vert\kern-0.25ex\left\vert\kern-0.25ex\left\vert #1 \right\vert\kern-0.25ex\right\vert\kern-0.25ex\right\vert}}
\newtheorem{assumption}{\hspace{0pt}\bf AS\hspace{-0.15cm}}
\newtheorem{lemma}{\hspace{0pt}\bf Lemma}
\newtheorem{proposition}{\hspace{0pt}\bf Proposition}
\newtheorem{theorem}{\hspace{0pt}\bf Theorem}
\newtheorem{remark}{\hspace{0pt}\bf Remark}
\newtheorem{definition}{\hspace{0pt}\bf Definition}
\providecommand{\customgenericname}{}
\newcommand{\newcustomtheorem}[2]{%
  \newenvironment{#1}[1]
  {%
   \renewcommand\customgenericname{#2}%
   \renewcommand\theinnercustomgeneric{##1}%
   \innercustomgeneric
  }
  {\endinnercustomgeneric}
}
\begin{document}

\title{Transferability Properties of Graph Neural Networks}

\author{Luana~Ruiz, Luiz~F.~O.~Chamon~
        and~Alejandro~Ribeiro
\thanks{This work in this paper was supported by NSF TRIPODS, NSF CCF 1717120, ARO W911NF1710438, ARL DCIST CRA W911NF-17-2-0181, and Theorinet Simons. L. Ruiz and A. Ribeiro are with the Dept. of Electrical and Systems Eng., Univ. of Pennsylvania. L. F. O. Chamon is with the Simons Institute, U. C. Berkeley.  
}
}

\markboth{IEEE TRANSACTIONS ON SIGNAL PROCESSING (ACCEPTED)}%
{Transferability}

\maketitle

\begin{abstract}

Graph neural networks (GNNs) are composed of layers consisting of graph convolutions and pointwise nonlinearities. Due to their invariance and stability properties, GNNs are provably successful at learning representations from data supported on moderate-scale graphs. However, they are difficult to learn on large-scale graphs. In this paper, we study the problem of training GNNs on graphs of moderate size and transferring them to large-scale graphs. We use graph limits called graphons to define limit objects for graph filters and GNNs---graphon filters and graphon neural networks ($\bbW$NNs)---which we interpret as generative models for graph filters and GNNs. We then show that graphon filters and $\bbW$NNs can be approximated by graph filters and GNNs sampled from them on weighted and stochastic graphs. Because the error of these approximations can be upper bounded, by a triangle inequality argument we can further bound the error of transferring a graph filter or a GNN across graphs. 
Our results show that (i) the transference error decreases with the graph size, and (ii) that graph filters have a transferability-discriminability tradeoff that in GNNs is alleviated by the scattering behavior of the nonlinearity. These findings are demonstrated empirically in a recommendation problem and in a decentralized control task.

\end{abstract}

\begin{IEEEkeywords}
graph neural networks, transferability, graph signal processing, graphons
\end{IEEEkeywords}

\IEEEpeerreviewmaketitle


\section{Introduction} \label{sec:intro}



Graph neural networks (GNNs) are deep learning architectures for network data popularized by their state-of-the-art performance in a number of learning tasks \cite{kipf17-classifgcnn,defferrard17-cnngraphs,gama18-gnnarchit,bronstein17-geomdeep}. 
Besides working well in practice, GNNs are verifiably invariant to node relabelings and stable to graph perturbations \cite{ruiz19-inv,gama2019stability}, which are properties that help understand their practical success. Each layer of a GNN composes one or more graph convolutions and a pointwise nonlinearity; the permutation equivariance and the stability properties are inherited from the graph convolution. Because it is a local graph operator, the graph convolution is also to credit for the ability to implement GNNs in a distributed way \cite{gama2020graphs}.

An important consequence of locality is that, unlike in a fully connected neural network (FCNN), in a GNN the number of parameters does not depend on the number of nodes. In fact, the number of parameters of a GNN is typically much smaller than the graph size. In theory, this should mean that GNNs can scale to graphs with large number of nodes. In practice, learning GNNs for large-scale graphs is difficult. One reason for that is that graph convolutions need full knowledge of the graph structure, which may be hard to measure for large-scale graphs. Another is that they require calculating large matrix-vector multiplications, which, for an $n$-node graph, have in the order of $\ccalO(n^2)$ complexity if the graph is not sparse. 

Although the independence between the GNN weights and the graph does not guarantee scalability, it allows for the same GNN to be applied to different graphs.
Thus, we could train the GNN on a graph of moderate size and transfer the learned weights to a larger graph. For this to work, the graph on which the GNN is trained---the training graph---and the one on which it is executed---the target graph---have to be similar. 
In this paper, we define similar graphs as graphs that are associated with the same \textit{graphon} \cite{lovasz2012large,lovasz2006limits,borgs2008convergent,borgs2012convergent}. Graphons can be used to describe families of similar graphs because they are both the limits of convergent graph sequences and generative models for weighted (Definitions \ref{def:template} and \ref{def:weighted}) and stochastic graphs (Definition \ref{def:stochastic}). They proved to be a useful model in large-scale graph problems in statistics \cite{wolfe2013nonparametric,xu2017rates,gao2015rate}, game theory \cite{parise2019graphon}, network science \cite{avella2018centrality,vizuete2020laplacian} and controls \cite{gao2018graphon}.

To understand whether a GNN learned on the training graph would work well on the target graph, we need to estimate the difference between the outputs of the GNN on these graphs or its \emph{transference error}. 
To do so, we define the graphon filter and the graphon neural network ($\bbW$NN), which are the limit objects of the graph filter and the GNN on the graphon \cite{ruiz20-transf}. By interpreting the graphon filter and the $\bbW$NN as generative models for graph filters and GNNs, we show that they can be approximated by graph filters (Thm. \ref{lemma:graphon-graph-filter}) and GNNs (Thm. \ref{thm:graphon-graph-nn}) sampled from them. To illustrate these results with concrete examples, in Props. \ref{lemma:graphon-graph-filter-template}--\ref{thm:graphon-graph-filter-stochastic}, \ref{thm:graphon-graph-nn-stochastic} we particularize these theorems to template, weighted and stochastic graphs.
Since the error of these approximations can be upper bounded, by a triangle inequality argument we can further bound the error of transferring graph filters (Prop. \ref{thm:graph-graph-filter-stochastic}) and GNNs (Prop. \ref{thm:graph-graph-nn-stochastic}) across graphs. 

These results have two important implications. The first is that both graph filters and GNNs have an inherent \emph{transferability property}, i.e., they can be transferred across similar graphs with performance guarantees. But although transferability increases with the size of both the training and the target graph, in the finite-sample regime (i.e., on graphs of finite size) some spectral components are not transferable. 
The second is that there is a tradeoff between the transferability and the spectral discriminability of the graph convolution. While it would be natural to expect this tradeoff to be inherited by GNNs, numerical experiments on a movie recommendation problem and a decentralized control task (Sec. \ref{sec:sims}) show that GNNs are more transferable than convolutional filters. A plausible explanation is that, in GNNs, the transferability-discriminability tradeoff is alleviated by the scattering behavior of the nonlinearities; see Sec. \ref{sbs:discussion}.


\subsection{Related Work}

This paper follows from a series of works on graphon signal processing (WSP), including \cite{ruiz2020graphonsp}---which introduces WSP and proves convergence of graph to graphon filters---and \cite{ruiz20-transf}---which defines $\bbW$NNs and studies GNN transferability over a narrow class of weighted graphs, where nodes are sampled from a regular partition and edges have deterministic weights given by the graphon. Herein, we extend upon the results of \cite{ruiz20-transf} by considering graphs with both stochastic nodes and edges, and proving a tighter transferability bound (of order $\ccalO(n^{-1})$) than the $\ccalO(n^{-1/2})$ bound in \cite{ruiz20-transf}.

Transferability of GNNs has also been studied in \cite{levie2019transferability,maskey2021transferability,keriven2020convergence}. These papers focus on the same problem we address in this paper, but use different methods and arrive at different results. More importantly, what sets our work apart is that it unveils a tradeoff between the transferability and the spectral behavior of the graph convolution (and thus of GNNs)---which is not brought up in any other works.

Paper \cite{levie2019transferability} considers graphs sampled from generic topological spaces as opposed to graphons. Graphs and graph signals are obtained by application of a generic sampling operator, and mapped back to the topological space via an analogous reconstruction map. The bound on the GNN transferability error depends on the quality of the sampling-reconstruction process, whose error is assumed bounded. Moreover, the transferability analysis in \cite{levie2019transferability} is restricted to signals in Paley-Wiener spaces, i.e., signals which are bandlimited. In contrast, in our paper we give three examples of how one might sample graphs and graph signals from graphons and graphon signals (Definitions \ref{def:template}--\ref{def:stochastic}), and do not make any bandlimitedness assumptions on the input signal. The latter is particularly important because it allows understanding the transferability behavior of the GNN when the data has high-frequency information.

Same as in our paper, paper \cite{maskey2021transferability} considers graphs sampled from graphons and proves transferability of GNNs by  showing that graph convolutions are transferable. However, while in our paper we do not impose any restrictions on the input signal and only require the filter $h$ to be Lipschitz, in \cite{maskey2021transferability} more restrictive assumptions are placed on either the signal---which has to be in $L^p$ for $p>2$---or the filter $h$---where $h'$ must be Lipschitz and $h(0)=0$. Our assumptions lead to a detailed analysis of the spectral implications of transferability, showing that graph convolutions have a transferability-discriminability tradeoff and suggesting a hypothesis for why this tradeoff is less pronounced in GNNs. These considerations, which are the main result from our analysis, are not made in \cite{maskey2021transferability}.

Paper \cite{keriven2020convergence} gives finite-sample bounds for the convergence and stability of GNNs on graphs sampled from a random graph models $G(n,p_n)$ where the probability $p_n$ of sampling an edge is given by a weighting of a graphon and a decreasing function of $n$. As a consequence, their analysis applies to graphs ranging from dense (i.e, sampled from graphons) to moderately sparse. The main difference of our paper with respect to \cite{keriven2020convergence} is that their transferability bound depends on the number of coefficients of the graph convolution, whereas ours is a function of the filter's Lipschitz constant. It is by focusing on the Lipschitz property of the filter---without imposing any restrictions on its number of coefficients---that we are able to visualize the relationship between the transferability and the spectral discriminability of the graph convolution. This relationship is not discussed in \cite{keriven2020convergence}.

Somewhat related to our paper, \cite{thangtransferability} analyzes transferability from a different angle by assuming that the graph structure is a latent function of the node features, which can be seen as a homophily assumption. This latent function then defines a graph family in which GNNs are shown to transfer well empirically.
Finally, GNN stability is closely related to transferability and is studied in the seminal paper by \cite{gama2019stability}---which follows from stability analyses of graph scattering transforms \cite{gama2019stabilityscat}---, as well as in \cite{elvin19-spectral,kenlay2021interpretable,kenlay2021stability}. Other related work includes the convergence analyses of graph filters in \cite{ruiz2021graphon-filters} and \cite{morency2021graphon}. These differ from our paper in that they only focus on graph filters, and only give asymptotic transferability results. 


\myparagraph{Notation.} All norms $\|\cdot\|$ refer to the $L^2$ norm. We use $\vertiii{\cdot}$ to denote the operator norm induced by the $L^2$ norm.


\section{Preliminary definitions} \label{sec:prelim}



We consider data supported on undirected graphs $\bbG=(\ccalV,\ccalE,\ccalW)$ where $\ccalV$ is a set of $|\ccalV|=n$ nodes, $\ccalE \subseteq \ccalV \times \ccalV$ is a set of edges, and $\ccalW: \ccalE \to \reals$ is a symmetric function assigning real weights to the edges in $\ccalE$. This data is represented in the form of vectors $\bbx \in \reals^n$ called graph signals where $[\bbx]_i$ is the value of the signal at node $i$ \cite{shuman13-mag,ortega2018graph}. To parametrize signal processing operations that depend on the graph topology by the graph $\bbG$, a generic graph matrix representation $\bbS \in \reals^{n \times n}$, called the graph shift operator (GSO), is defined. The GSO $\bbS$ is such that $[\bbS]_{ij} = s_{ij} \neq 0$ if and only if $i=j$ or $(i,j) \in \ccalE$.
The graph adjacency $[\bbS]_{ij} = [\bbA]_{ij} = \ccalW(i,j)$ \cite{sandryhaila13-dspg}, the graph Laplacian $\bbS = \bbL=\diag (\bbA\boldsymbol{1})-\bbA$ \cite{shuman13-mag}, and the random walk Laplacian $\bbS = \diag (\bbA \boldsymbol{1})^{-1} \bbL$ \cite{boukrab2021random} are all examples of matrices that satisfy this property. Unless otherwise specified, in this paper we consider $\bbS=\bbA$ and replace $\ccalW$ by $\bbS$ in the triplet $\bbG=(\ccalV,\ccalE,\bbS)$ to simplify notation. 

The GSO gets its name from the fact that it defines a notion of shift for signals $\bbx$ on the graph $\bbG$. This is because, at each node $i$, the signal $\bbz = \bbS\bbx$ resulting from an application of the GSO to $\bbx$ can be written elementwise as
\begin{equation*}
[\bbz]_i = \sum_{j=1}^n [\bbS]_{ij}[\bbx]_j = \sum_{j:[\bbS]_{ij} \neq 0} [\bbS]_{ij}[\bbx]_j = \sum_{j \in \ccalN(i)} [\bbS]_{ij}[\bbx]_j
\end{equation*}
where $\ccalN(i) = \{j: [\bbS]_{ij} \neq 0\}$ is the neighborhood of $i$. In other words, the outcome of $\bbS\bbx$ at node $i$ is a result of the neighboring nodes $j$ \textit{shifting} their data values $[\bbx]_j$, each weighted by the proximity measure $[\bbS]_{ij}$, to the node $i$. 

The notion of a graph shift, in turn, allows defining linear shift-invariant (LSI) or \textit{convolutional} filters on graphs. Let $\bbh = [h_0, \ldots, h_{K-1}]^\Tr$ be a vector of $K$ real coefficients. The graph convolution with coefficients $\bbh$ is defined as \cite{du2018graph,segarra17-linear}
\begin{equation} \label{eqn:graph_convolution}
\bbH(\bbS) \bbx = \sum_{k=0}^{K-1} h_k \bbS^k \bbx 
\end{equation}
i.e., it is given by the application of an order $K$ polynomial of $\bbS$ to the signal $\bbx$, similarly to how the convolution is defined as a shift-and-sum operation in Euclidean domains.

Because $\bbS$ is symmetric, it can be diagonalized as $\bbS = \bbV \bbLam \bbV^{\Hr}$. The matrix $\bbLam$ is a diagonal matrix whose diagonal elements are the graph eigenvalues which we assume to be ordered according to their sign and in decreasing order of absolute value, i.e., $ \lambda_{-1} \leq \lambda_{-2} \leq \ldots \leq 0 \leq \ldots \leq \lambda_2 \leq \lambda_1$. These are interpreted as \textit{graph frequencies}.  We consider \emph{high magnitude} eigenvalues to be low frequencies and \emph{low magnitude} eigenvalues to be high frequencies because we relate the graph signal's frequency with the absolute value of its normalized Rayleigh quotient. Explicitly, we define the frequency of a graph signal $\bbx$ as $1-\smash{\left|\frac{\bbx^\Hr\bbA\bbx}{n\bbx^\Hr\bbx}\right|}$. The graph eigenvectors, given by the columns of $\bbV$, can be seen as the graph's \textit{oscillation modes} \cite{sandryhaila14-freq}. Importantly, the eigenvectors in $\bbV$ form an orthonormal basis of $\reals^{n \times n}$ which we call the graph spectral basis. The graph Fourier transform (GFT) of $\bbx$ is defined as the projection of $\bbx$ onto this basis,
\begin{equation} \label{eqn:gft}
\hbx = \mbox{GFT}\{\bbx\} = \bbV^\Hr \bbx.
\end{equation}
Similarly, the inverse GFT is defined as $\bbx = \bbV \hbx$.

Substituting $\bbS=\bbV\bbLam\bbV^\Hr$ into \eqref{eqn:graph_convolution} and calculating the GFT of $\bby = \bbH(\bbS)\bbx$, we get
\begin{equation} \label{eqn:spec-lsi-gf}
\hby = \bbV^\Hr \bbH(\bbS)\bbx = \sum_{k=0}^{K-1} h_k \bbLam^k \bbV^\Hr \bbx = \sum_{k=0}^{K-1} h_k \bbLam^k \hbx
\end{equation}
from which we obtain the spectral representation of the graph convolution $h(\lambda)= \sum_{k=0}^{K-1} h_k \lambda^k$. 
There are two important things to note about the function $h$. First, while the GFTs of $\bbx$ and $\bby$ depend on the eigenvectors of $\bbS$, the spectral response of $\bbH(\bbS)$ is obtained by evaluating $h$ \textit{only} at the eigenvalues of the graph. This is illustrated in Fig. \ref{fig:eigenvalues_and_filter}, where the red curve represents $h(\lambda)$ and the green lines represent the eigenvalues of the graph $\bbG$. Second, on a given graph $\bbG$ polynomial filters $\bbH(\bbS)$ of order $K=N$ can be used to implement filters with spectral response given by any arbitrary function $f$ at the eigenvalues $\lambda_i$, i.e., $h(\lambda_i)=f(\lambda_i)$, as a consequence of the Cayley-Hamilton theorem.
In particular, if $f$ is analytic, this function can be approximated by a polynomial filter $h(\lambda)$ for all $\lambda$. The error of the approximation will depend on the filter order $K$.


\subsection{Graph Neural Networks} \label{sbs:gnns}

A GNN is a deep convolutional architecture where each layer consists of (i) a bank of convolutional filters like the one in \eqref{eqn:graph_convolution} and (ii) a nonlinear activation function.
The convolutional filterbank transforms the $F_{\ell-1}$ features $\bbx_{\ell-1}^g$ from layer $\ell-1$ into $F_\ell$ intermediate linear features given by
\begin{equation} \label{eqn:gcn_layer1}
\bbu^f_{\ell} =  \sum_{g=1}^{F_{\ell-1}}\bbH_{\ell}^{fg} (\bbS) \bbx_{\ell-1}^g 
\end{equation}
where $1 \leq f \leq F_{\ell}$. Specifically, the filter $\bbH_\ell^{fg}(\bbS)$ maps feature $g$ of layer $\ell-1$ into feature $f$ of layer $\ell$. The collection of filters $\bbH_\ell^{fg}(\bbS)$ for all $f,g$ yields a filterbank $\{\bbH_\ell^{fg}(\bbS)\}_{f,g}$ with a total of $F_{\ell-1}\times F_\ell$ filters like the one in \eqref{eqn:graph_convolution}. 

The activation function is usually a pointwise nonlinearity such as the ReLU or sigmoid, but localized implementations incorporating the graph structure have also been proposed \cite{ruiz19-inv}. 
Letting $\sigma$ denote the activation function, the $\ell$th layer of the GNN can be written as
\begin{equation} \label{eqn:gcn_layer2}
\bbx^f_{\ell} = \sigma \left(\bbu^f_\ell \right)\text{.}
\end{equation}
Because $\sigma$ is pointwise, at each node $i \in \ccalV$ we have $[\bbx_\ell^f]_i = \sigma([\bbu_\ell^f]_i)$.
If the total number of layers of the GNN is $L$, \eqref{eqn:gcn_layer1}--\eqref{eqn:gcn_layer2} are repeated for layers $\ell = 1$ through $\ell = L$, the output of each layer being the input to the next. At $\ell=1$, the input features $\bbx^g_0$ are the input data $\bbx^g$ for $1 \leq g \leq F_0$. The GNN output is given by the features of the last layer, i.e., $\bby^f = \bbx_L^f$. 

The goal of graph machine learning is to obtain meaningful representations $\bby$ from input data $\bbx$ (e.g., through empirical risk minimization). This can be thought of as learning a map
$\Phi: \bbx \mapsto \bby$ where $\Phi$ is parametric on the graph. The function $\Phi$  could be, for instance, parametrized by a graph filter $\bby = \bbH(\bbS)\bbx = \sum_{k=0}^{K-1}h_k \bbS^k \bbx$ [cf. \eqref{eqn:graph_convolution}]. In this case, we would write
\begin{equation} \label{eqn:filter_map}
\bby = \Phi(\bbx; \bbh, \bbS)
\end{equation}
where $\bbh = [h_0, \ldots, h_{K-1}]$. Alternatively, $\Phi$ could be parametrized by a $L$-layer GNN with inputs $\bbx = \{\bbx^g\}_{g=1}^{F_0}$ and outputs $\bby = \{\bby^f\}_{f=1}^{F_L}$ [cf. \eqref{eqn:gcn_layer1}--\eqref{eqn:gcn_layer2}]. In this case, we would write
\begin{equation} \label{eqn:gcn_map}
\bby = \Phi(\bbx; \ccalH, \bbS)
\end{equation}
where $\ccalH = [\bbh^{fg}_\ell]_{\ell f g}$ is a tensor grouping the GNN parameters $\bbh^{fg}_\ell$ for all layers $1 \leq \ell \leq L$ and features $1 \leq f \leq F_\ell$ and $1 \leq g \leq F_{\ell-1}$. This notation is particularly convenient for GNNs because it summarizes \eqref{eqn:gcn_layer1}--\eqref{eqn:gcn_layer2} for $\ell=1,\ldots,L$ into a single function $\Phi$. It also highlights the fact that the graph filter parameters $\bbh$ and the GNN parameters $\ccalH$ are independent of the GSO $\bbS$ (and thus of the graph $\bbG$).


\subsection{Graphs and Graphons} \label{sbs:graphons}

Graphons are the limit objects of sequences of dense undirected graphs. They are defined as functions $\bbW: [0,1]^2 \to [0,1]$ where $\bbW$ is bounded, measurable and symmetric. A sequence of graphs converging to the constant or Erd\"os-R\'enyi graphon is shown in Fig. \ref{fig:graphon_example}.

\begin{figure*}[t]
\centering
\begin{subfigure}{0.22\textwidth}
\centering
\includegraphics[width=3.7cm]{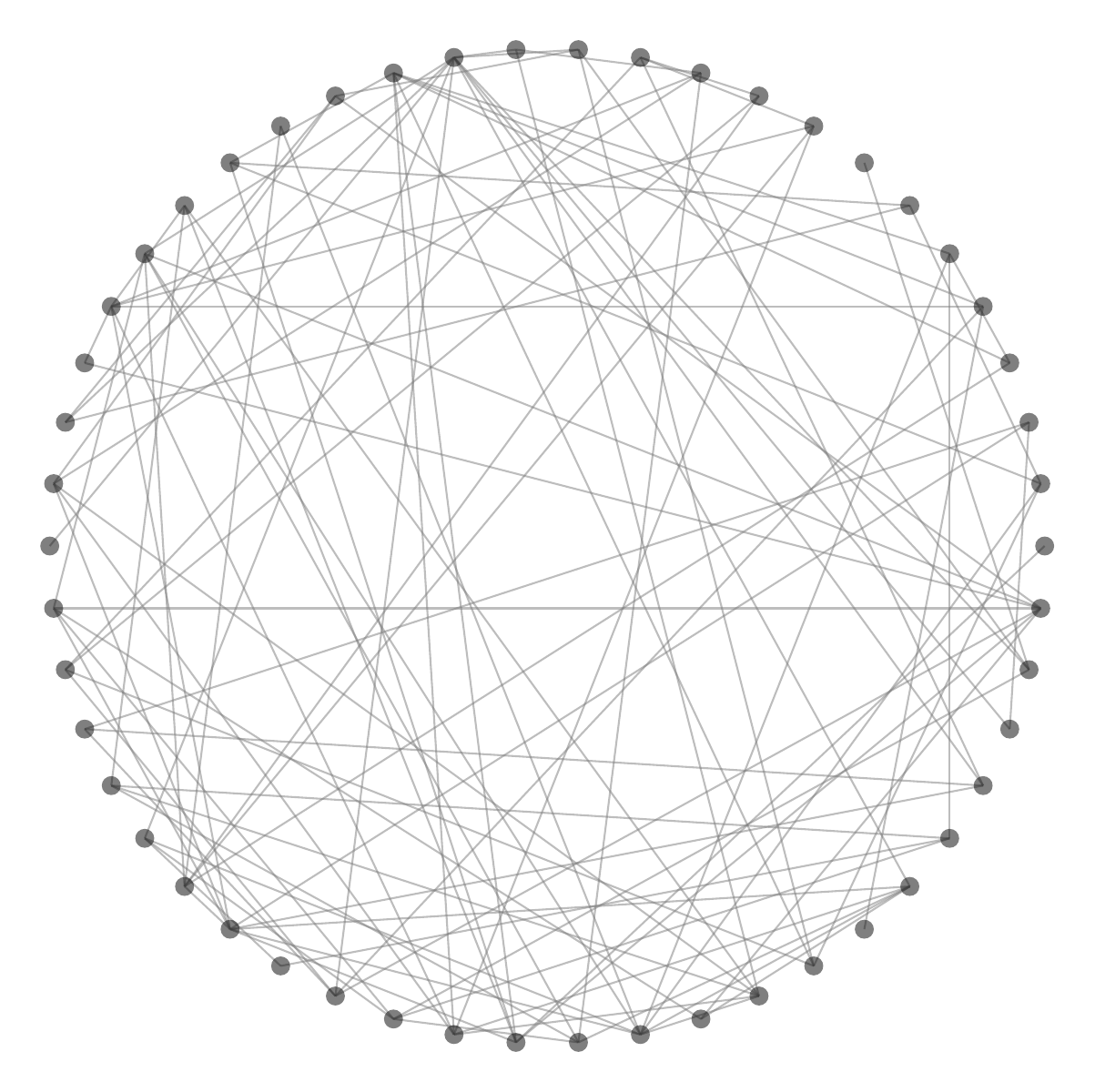}
\caption{50 nodes}
\label{fig:50nodes}
\end{subfigure}
\hspace{0.3cm}
\begin{subfigure}{0.22\textwidth}
\centering
\includegraphics[width=3.7cm]{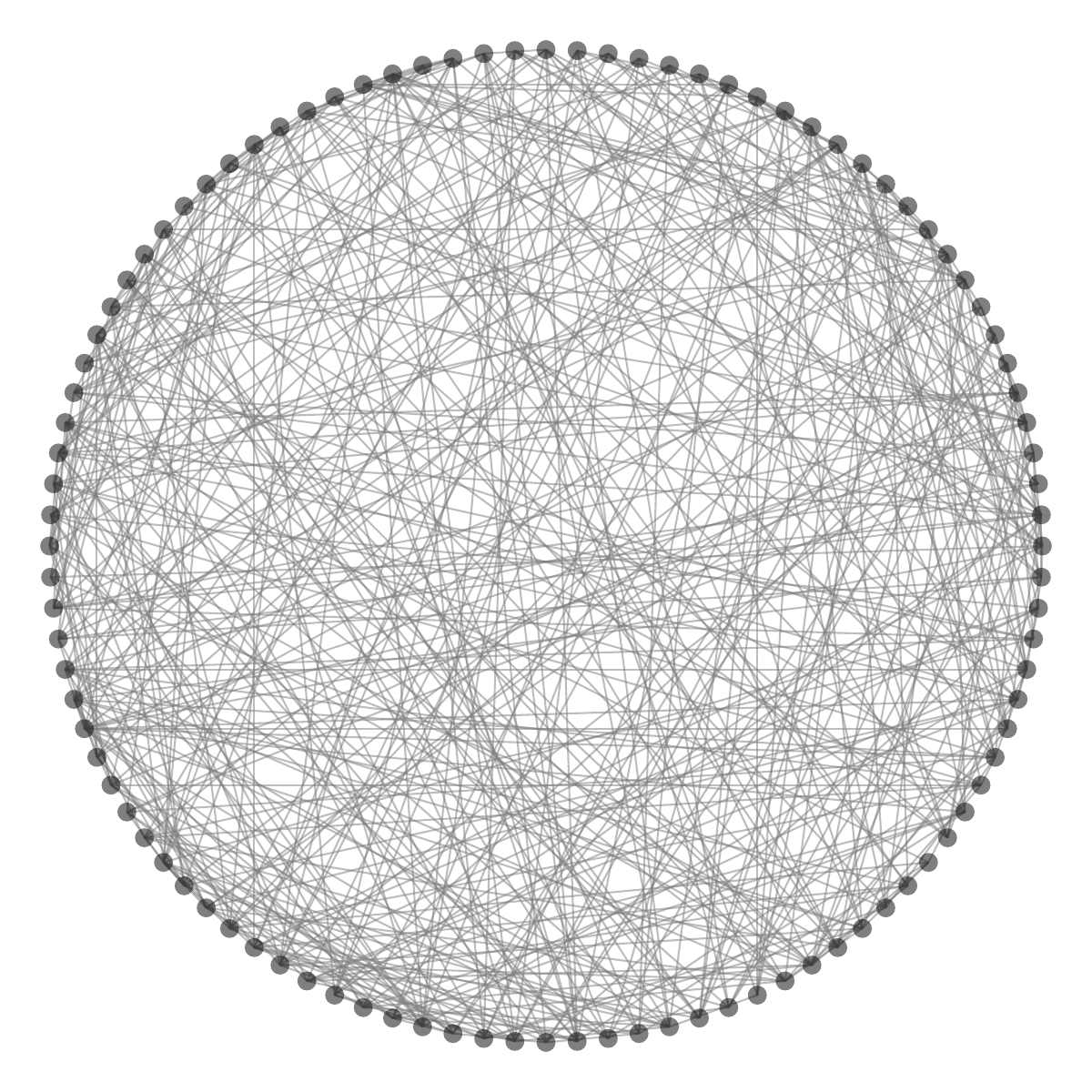}
\caption{100 nodes}
\label{fig:100nodes}
\end{subfigure}
\hspace{0.3cm}
\begin{subfigure}{0.22\textwidth}
\centering
\includegraphics[width=3.7cm]{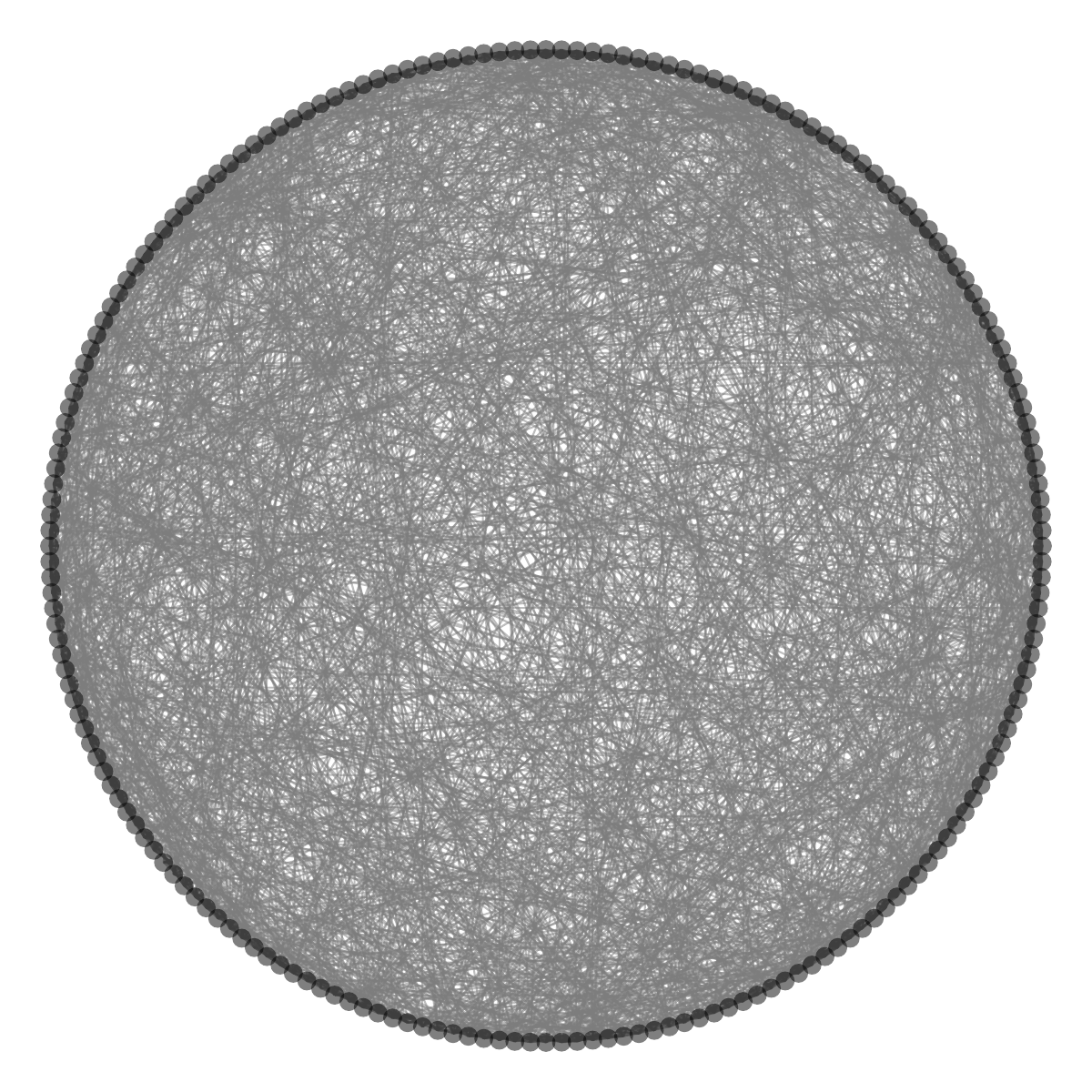}
\caption{200 nodes}
\label{fig:200nodes}
\end{subfigure}
\hspace{0.3cm}
\begin{subfigure}{0.22\textwidth}
\centering
\includegraphics[width=3.7cm]{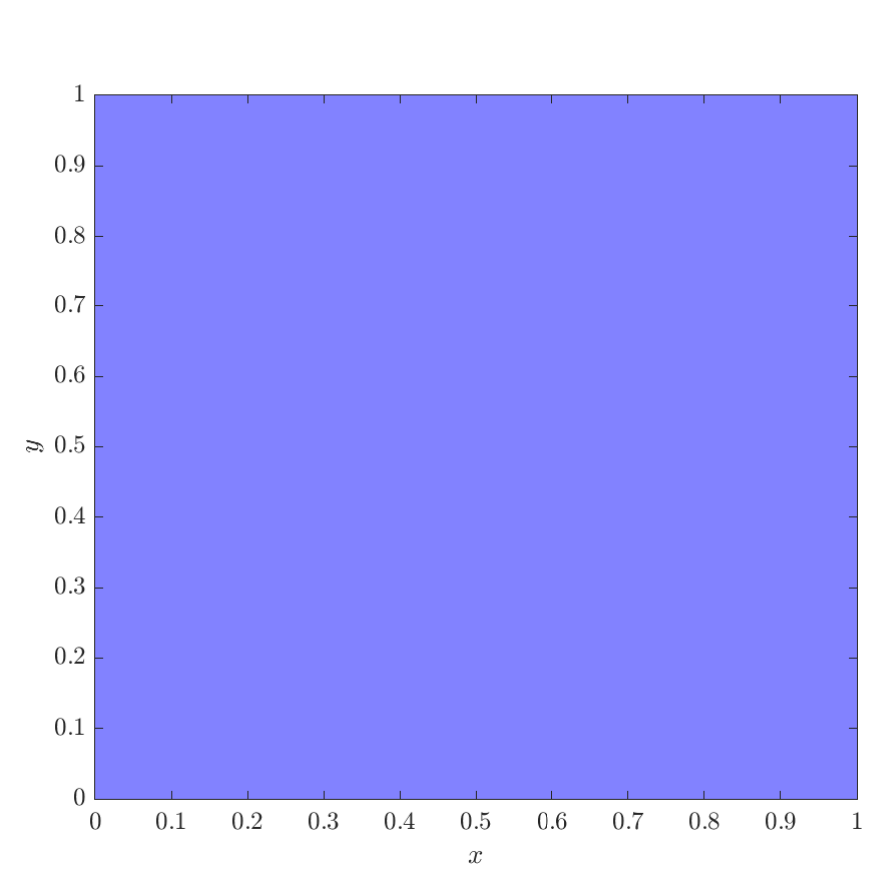}
\caption{Graphon}
\label{fig:er_graphon}
\end{subfigure}

\caption{Constant graphon with $\bbW(u,v)=0.4$ for all $u,v \in [0,1]$ (d). Sequence of stochastic graphs with 50 (a), 100 (b), and 200 nodes (c) which converges to this graphon.}
\label{fig:graphon_example}
\end{figure*}

A sequence of graphs converges to a graphon in the sense that densities of homomorphisms from certain graph motifs onto these graphs converge. Given an unweighted graph $\bbF = (\ccalV', \ccalE')$ and a (possibly weighted) graph $\bbG=(\ccalV,\ccalE,\bbS)$, a homomorphism between $\bbF$ and $\bbG$ is defined as a map $\beta: \ccalV' \to \ccalV$ such that, if $(i,j) \in \ccalE'$, $(\beta(i),\beta(j)) \in \ccalE$. Because there are multiple such maps between the nodes of $\bbF$ and $\bbG$, to count them we define the number of homomorphisms between $\bbF$ and $\bbG$ as
\begin{equation*}
\mbox{hom}(\bbF,\bbG) = \sum_\beta \prod_{(i,j)\in\ccalE'}[\bbS]_{\beta(i)\beta(j)}\ .
\end{equation*}
For unweighted graphs $\bbG$, $\mbox{hom}(\bbF,\bbG)$ is the total number of adjacency preserving maps between $\ccalV'$ and $\ccalV$. For weighted graphs, each map $\beta$ is weighted proportionally to the product of the weights of the edges $(\beta(i),\beta(j)) \in \ccalE$ for all $i, j \in \ccalV$.

Since homomorphisms are only a fraction of the total number of maps between the nodes of $\bbF$ and the nodes of $\bbG$, we can define a density of homomorphisms from $\bbF$ to $\bbG$. This density is denoted $t(\bbF,\bbG)$ and given by
\begin{equation*}
t(\bbF,\bbG) = \frac{\mbox{hom}(\bbF,\bbG)}{|\ccalV|^{|\ccalV'|}}\ .
\end{equation*}
The density $t(\bbF,\bbG)$ can be likened to the probability of sampling a homomorphism when picking a random map $\beta:\ccalV' \to \ccalV$. This allows extending the notion of density of homomorphisms to graphons. Explicitly, we define the density of homomorphisms from the graph $\bbF$ to the graphon $\bbW$ as
\begin{equation*} \label{eqn:hom_graphon}
t(\bbF,\bbW) = \int_{[0,1]^{|\ccalV'|}} \prod_{(i,j) \in \ccalE'} \bbW(u_i,u_j) \prod_{i \in \ccalV'} du_i
\end{equation*}
i.e., $t(\bbF,\bbW)$ is the probability of sampling $\bbF$ from $\bbW$.

A sequence of graphs $\{\bbG_n\}$ converges to $\bbW$ if and only if the density of homomorphisms between any finite, undirected and unweighted graph $\bbF = (\ccalV', \ccalE')$ and the $\bbG_n$ converges to the density of homomorphisms between $\bbF$ and $\bbW$. More formally, $\bbG_n \to \bbW$ if and only if
\begin{equation}\label{eqn_convergence}
   \lim_{n \to \infty} t(\bbF,\bbG_n) = t(\bbF, \bbW)
\end{equation}
for all finite, undirected and unweighted $\bbF$ \cite{lovasz2006limits}. We refer to graphs $\bbF$ as \textit{motifs} to emphasize that a graph sequence only converges when the densities of these structural motifs converge.

%


\subsection{Graphons as Graph Generative Models}

The other important interpretation of graphons is as generative models for graphs. A graph $\bbG_n = (\ccalV_n,\ccalE_n, \bbS_n)$ can be obtained from the graphon $\bbW$ in two steps. First, $n$ points $u_i \in [0,1]$ are chosen to be the labels of the nodes $i \in \ccalV_n$. Then, the edges $\ccalE_n$ and the adjacency matrix $\bbS_n$ are determined either by assigning weight $\bbW(u_i,u_j)$ to $(i,j)$, or by sampling the edge $(i,j)$ with probability $\bbW(u_i,u_j)$. There are many different strategies for choosing the node labels $u_i$ and, given the possibility of defining either weighted (deterministic) or stochastic (unweighted) edges $\ccalE_n$, there are many types of graphs  that can be generated in this way. We focus on three: template graphs, weighted graphs, and stochastic graphs.

\begin{definition}[Template graphs] \label{def:template}
Let $\{u_i\}_{i=1}^n$ be the regular $n$-partition of $[0,1]$,
\begin{equation*}
u_i = \frac{i-1}{n}
\end{equation*}
for $1 \leq i \leq n$. The $n$-node template graph~$\mbG_n$, whose GSO we denote~$\mbS_n$, is obtained from $\bbW$ as
\begin{equation*}
[\mbS_n]_{ij} = \bbW(u_i,u_j) \mbox{ for } 1 \leq i,j \leq n \text{.}
\end{equation*}
\end{definition}

Template graphs are the simplest type of graph that can be generated from a graphon. The node labels $u_i$ are determined by partitioning $[0,1]$ into $n$ intervals of equal length, and the edge weights $[\mbS_n]_{ij}$ are obtained by evaluating the graphon at $(u_i,u_j)$. Template graphs are always weighted and, if $\bbW$ is strictly positive, they are also complete (i.e., all nodes are connected to every node of the graph).

\begin{definition}[Weighted graphs] \label{def:weighted}
Let $\{u_i\}_{i=1}^n$ be $n$ points sampled independently and uniformly at random from $[0,1]$,
\begin{equation*}
u_i \sim \mbox{unif}(0,1)
\end{equation*}
for $1 \leq i \leq n$. The $n$-node weighted graph~$\mbG_{\tilde{n}}$, whose GSO we denote~$\mbS_{\tilde{n}}$, is obtained from $\bbW$ as
\begin{equation*}
[\mbS_{\tilde{n}}]_{ij} = \bbW(u_i,u_j) \mbox{ for } 1 \leq i,j \leq n \text{.}
\end{equation*}
\end{definition}

Weighted graphs are obtained by sampling $u_i$ uniformly at random from the unit interval and assigning weights $\bbW(u_i,u_j)$ to edges $(i,j)$. As the name implies, these graphs are weighted and, like template graphs, they can also be complete.

\begin{definition}[Stochastic graphs] \label{def:stochastic}
Let $\{u_i\}_{i=1}^n$ be $n$ points sampled independently and uniformly at random from $[0,1]$,
\begin{equation*}
u_i \sim \mbox{unif}(0,1)
\end{equation*}
for $1 \leq i \leq n$. The $n$-node stochastic graph~$\bbG_{\tilde{n}}$, whose GSO we denote~$\bbS_{\tilde{n}}$, is obtained from $\bbW$ as
\begin{equation*}
[\bbS_{\tilde{n}}]_{ij} = [\bbS_{\tilde{n}}]_{ji} \sim \mbox{Bernoulli}(\bbW(u_i,u_j)) \mbox{ for } 1 \leq i,j \leq n \text{.}
\end{equation*}
\end{definition}

Stochastic graphs are stochastic in both the node labels $u_i$, which are sampled uniformly at random, and in the edges $(i,j)$, which are sampled with probabilities $\bbW(u_i,u_j)$. Also called $\bbW$-random graphs \cite[Ch. 10.1]{lovasz2012large}, these graphs are always unweighted, i.e., $\bbS_{\tilde{n}} \in \{0,1\}^{n \times n}$. Note that stochastic graphs can also be obtained from weighted graphs as
\begin{equation} \label{eqn:stochastic_from_weighted}
[\bbS_{\tilde{n}}]_{ij} = [\bbS_{\tilde{n}}]_{ji} \sim \mbox{Bernoulli}([\mbS_{\tilde{n}}]_{ij}) \ .
\end{equation}
Thus, weighted graphs may be used as random graph models for $n$-node unweighted graphs.

We conclude this section by introducing the notion of a graphon induced by a graph, which will be useful in later derivations. Let $\bbG$ be a graph with GSO $\bbS \in \reals^{n \times n}$ and node labels $\{u_i\}_{i=1}^n$, $u_i \in [0,1]$. If the node labels are unknown, simply define $u_i = (i-1)/n$ for $1 \leq i \leq n$. Further define the intervals $I_i = [u_i,u_{i+1})$ for $1 \leq i \leq n-1$ and $I_n = [u_n,1]\cup[0,u_1)$. The graphon $\bbW_\bbG$ induced by $\bbG$ is given by
\begin{align} \label{eqn:graphon_ind}
{\bbW}_\bbG(u,v) = \sum_{i=1}^n\sum_{j=1}^{n}[{\bbS}]_{ij}\mbI(u \in I_i) \mbI(v \in I_j)
\end{align}
where $\mbI$ is the indicator function.


\section{Transferability of graph filters} \label{sec:filter-transferability}


 
In the following, we discuss the transferability properties of graph filters. We begin by defining the limits of graph filters---graphon filters---in Section \ref{sbs:graphon-filters}. Then, in Section \ref{sbs:filter-transf} we interpret graphon filters as generative models for graph filters, and derive an upper bound to the transferability or approximation error between a graphon filter and a graph filter with the same coefficients on a generic graph sampled from the graphon (Theorem \ref{lemma:graphon-graph-filter}). To illustrate this result with concrete examples, in Propositions \ref{lemma:graphon-graph-filter-template}--\ref{thm:graphon-graph-filter-stochastic} we particularize Theorem \ref{lemma:graphon-graph-filter} to template, weighted and stochastic graphs. Sticking with the stochastic graph example, in Proposition \ref{thm:graph-graph-filter-stochastic} we then show that the error of transferring a graph filter across graphs associated with the same graphon can also be upper bounded as an immediate consequence of the graph filter approximation result.
 

\subsection{Graphon Filters} \label{sbs:graphon-filters} 

The natural limit of signals supported on graphs that are sampled from or converge to a graphon is the graphon signal.
A graphon signal is a function $X \in L_2([0,1])$ and, like the graphon, it is both the limit of a convergent sequence of graph signals and a generative model for graph signals on graphs instantiated from a graphon \cite{ruiz2020graphonsp}. For instance, given a template graph $\mbG_n$ generated from the graphon $\bbW$ as in Definition \ref{def:template}, the graphon signal $X$ can be used to generate the graph signal
\begin{equation} \label{eqn:template_graph_signal}
[\mathbb{x}_n]_i = X\left(\frac{i-1}{n}\right)
\end{equation}
for $1 \leq i \leq n$. Similarly, on the weighted graph $\mbG_{\tilde{n}}$ and on the stochastic graph $\bbG_{\tilde{n}}$ [cf. Definitions \ref{def:weighted} and \ref{def:stochastic}], $X$ can be used to generate equivalent signals $\mathbb{x}_{\tilde{n}}$ and $\bbx_{\tilde{n}}$ defined as
\begin{equation} \label{eqn:weighted_graph_signal}
[\mathbb{x}_{\tilde{n}}]_i = [\bbx_{\tilde{n}}]_i  = X\left(u_i\right)
\end{equation}
for $1 \leq i \leq n$, where the $u_i$ are sampled independently and uniformly at random from $[0,1]$.

Graphon signals can take any form, but the most straightforward ones are those induced by graph signals. The graphon signal induced by a graph signal $\bbx_n \in \reals^n$ is a step function given by \cite{ruiz2019graphon}
\begin{equation} \label{eqn:graphon_signal_ind}
X_n(u) = \sum_{i=1}^n [\bbx_n]_i\ \mbI(u \in I_i)
\end{equation}
where $I_i = [u_i,u_{i+1})$ for $1 \leq i \leq n-1$ and $I_n = [u_n,1]\cup[0,u_1)$. When the node labels are not defined, we set $u_i = (i-1)/n,i/n)$ for $1 \leq i \leq n$.

The diffusion operator for graphon signals, denoted $T_\bbW$, is the integral operator with kernel $\bbW$, i.e.,
\begin{equation} \label{eqn:graphon_shift}
T_\bbW X(v) = \int_0^1 \bbW(u,v)X(u)du \text{.}
\end{equation}
In analogy with the GSO, we refer to this operator as the graphon shift operator (WSO) of $\bbW$.

Because graphons are bounded and symmetric, the WSO is a self-adjoint Hilbert-Schmidt (HS) operator. This allows expressing $\bbW$ in the operator's spectral basis as $\bbW(u,v) = \sum_{i \in \mbZ\setminus \{0\}} \lambda_i \varphi_i(u)\varphi_i(v)$ and rewriting $T_\bbW$ as 
\begin{equation} \label{eqn:graphon_spectra}
T_\bbW X(v) = \sum_{i \in \mbZ\setminus \{0\}}\lambda_i \varphi_i(v) \int_0^1 \varphi_i(u)X(u)du
\end{equation}
where the eigenvalues $\lambda_i$, $i \in \mbZ\setminus \{0\}$, follow the ordering $1 \geq \lambda_1 \geq \lambda_2 \geq \ldots \geq \ldots \geq \lambda_{-2} \geq \lambda_{-1} \geq -1$. Since $T_\bbW$ is compact, the eigenvalues accumulate around $0$ as $|i| \to \infty$ \cite[Thm. 3, Chapter 28]{lax02-functional}. This is illustrated in Fig. \ref{fig:eigenvalues_and_filter}.

Similarly to how the graph eigenvalues are interpreted as graph frequencies, we interpret the graphon eigenvalues as \textit{graphon frequencies} and the graphon eigenfunctions as the graphon's \textit{oscillation modes}. High magnitude eigenvalues represent the low frequencies, and vice-versa.
Since the eigenfunctions $\varphi_i$ form an orthonormal basis of $L_2([0,1])$, we can define the graphon Fourier transform (WFT) of a graphon signal $X$ as its projection onto this eigenbasis. Explicitly, the WFT of $X$ is denoted $\hat{X}$ and given by \cite{ruiz2019graphon}
\begin{equation} \label{eqn:gft}
[\hat{X}]_i = \int_0^1 X(u) \varphi_i(u) du \text{.}
\end{equation}
Conversely, the inverse graphon Fourier transform or iWFT is $X = \sum_{i \in \mbZ \setminus \{0\}}[\hat{X}]_i\varphi_i$.

A graphon filter is an operator $T_{\bbH}: L_2([0,1]) \to L_2([0,1])$ which maps graphon signals $X$ to $Y=T_\bbH X$. In particular, the graphon shift operator allows defining LSI graphon filters given by
\begin{align}\begin{split} \label{eqn:lsi-wf}
&T_\bbH X = \sum_{k=0}^{K-1} h_k T_{\bbW}^{(k)} X \quad \mbox{with} \\
&T_{\bbW}^{(k)}X = \int_0^1 \bbW(u,v)T_\bbW^{(k-1)} X(u)du
\end{split}\end{align}
where $T_{\bbW}^{(0)} = \bbI$ is the identity operator and $\bbh = [h_0, \ldots, h_{K-1}]$ are the filter coefficients. This filter is shift-invariant because $T_\bbH$ and $T_\bbW$ commute. I.e., if we shift the input as $X' = T_\bbW X$, the output is equivalently shifted as $Y' = T_\bbH X' = T_\bbH T_\bbW X = T_\bbW T_\bbH X = T_\bbW Y$. LSI graphon filtering is also called graphon convolution \cite{ruiz2020graphonsp}.

Using the spectral decomposition of $T_\bbW$ in \eqref{eqn:graphon_spectra}, $T_\bbH$ can be written as
\begin{align} \label{eqn:spec-graphon_filter}
\begin{split}
T_\bbH X(v) &= \sum_{i \in \mbZ\setminus \{0\}} \sum_{k=0}^{K-1} h_k \lambda_i^k \varphi_i(v) \int_0^1 \varphi_i(u)X(u)du\\
&= \sum_{i \in \mbZ\setminus \{0\}} h(\lambda_i) \varphi_i(v) \int_0^1 \varphi_i(u)X(u)du \text{.}
\end{split}
\end{align}
This expression highlights the filter's spectral representation, which is given by $h(\lambda) = \sum_{k=0}^{K-1} h_k \lambda^k$. Because the WFT of the filter is a polynomial on the graphon eigenvalues, note that as $K\to \infty$ LSI graphon filters can be used to approximate any graphon filter with spectral representation $h(\lambda)=f(\lambda)$ where $f$ is analytic. This is a consequence of the Cayley-Hamilton theorem in the limit of matrices. Moreover, unlike the WFT of the signals $X$ and $Y$, the spectral representation of the LSI graphon filter does not depend on the graphon eigenfunctions---only on the graphon eigenvalues and the coefficients $\bbh$. This is illustrated in Fig. \ref{fig:eigenvalues_and_filter} where the red curve represents $h(\lambda)$ and the blue lines are the eigenvalues of $\bbW$.

The fact that the spectral response of the LSI graphon filter only depends on the eigenvalues of the graphon and on the filter coefficients is something that these filters have in common with LSI graph filters. This can be seen by comparing \eqref{eqn:spec-lsi-gf} and \eqref{eqn:spec-graphon_filter}. Indeed, if the coefficients $h_k$ in these equations are the same, the spectral responses of the LSI graph filter and of the LSI graphon filter are determined by the same function $h(\lambda)$. Given a graph $\bbG_n$ (with GSO $\bbS_n$) and a graphon $\bbW$, the only thing that changes is where this function is evaluated: at the graph eigenvalues $\{\lambda_i(\bbS_n)\}_{i \in\ccalL \subset \mbZ \setminus \{0\}}$, or at the graphon eigenvalues $\{\lambda_i(T_\bbW)\}_{i \in \mbZ \setminus \{0\}}$ [cf. Fig. \ref{fig:eigenvalues_and_filter}].
The implication is that graphon filters can be used as generative models for graph filters as long as the coefficients $h_k$ in \eqref{eqn:spec-lsi-gf} and \eqref{eqn:spec-graphon_filter} are the same.

For example, let $Y = T_\bbH X$ be a LSI graphon filter which we represent, for simplicity, as the map $Y=\Phi(X;\bbh,\bbW)$ to emphasize the dependence on the coefficients $\bbh$ and the graphon $\bbW$. We can generate a graph filter $\bby_n = \Phi(\bbx_n;\bbh,\bbS_n)$ from this graphon filter by instantiating a graph $\bbG_n$ from $\bbW$ as in Definitions \ref{def:template}, \ref{def:weighted} or \ref{def:stochastic}, and the corresponding graph signal $\bbx_n$ as in \eqref{eqn:template_graph_signal} or \eqref{eqn:weighted_graph_signal}. Generating graph filters from graphon filters is useful because it allows designing filters for graphons and \textit{transferring} them to graphs. Since graphons are limits of convergent graph sequences, it also justifies transferring filters across graphs obtained from the same graphon.

Before delving into the transferability analysis of graph filters, we introduce the definition of graphon filters induced by graph filters, which will allow comparing graph and graphon filters directly. The graphon filter induced by the graph filter $\Phi (\bbx_n; \bbh, \bbS_n/n)=\sum_{k=0}^{K-1} h_k \bbS_n^k\bbx_n/n^k$ is given by
\begin{align}\begin{split} \label{eqn:graphon_filter_ind}
&\Phi (X_n; \bbh, \bbW_n)= \sum_{k=0}^{K} h_k T_{\bbW_n}^{(k)} X_n(v) \quad \mbox{with} \\
&T_{\bbW_n}^{(k)}X_n(v) = \int_0^1 \bbW_n(u,v)T_{\bbW_n}^{(k-1)} X_n(u)du
\end{split}\end{align}
where the graphon $\bbW_n = \bbW_{\bbG_n}$ is the graphon induced by $\bbG_n$ \eqref{eqn:graphon_ind} and $X_n$ is the graphon signal induced by the graph signal $\bbx_n$ \eqref{eqn:graphon_signal_ind}. Note that in the graph filter $\bbPhi(\bbx_n;\bbh,\bbS_n/n)$, we consider the normalized shift operator $\bbS_n/n$. This is important to guarantee that the output of the graph filter and its corresponding induced graphon filter match, as shown by the following lemma. The proof is deferred to Appendix \ref{sec:appendix0} in the supplementary material.

\begin{lemma} \label{lemma:induced_filters}
Let $\bby_n = \Phi (\bbx_n; \bbh, \bbS_n/n)=\sum_{k=0}^{K-1} h_k \bbS_n^k\bbx_n/n^k$ be a graph filter, and let $Y_n = \Phi (X_n; \bbh, \bbW_n)$ be its corresponding induced graphon filter \eqref{eqn:graphon_filter_ind}. Then, it holds that 
\begin{equation}
 Y_n(u) = \sum_{i=1}^n[\bby_n]_i \mbI(u \in I_i)   
\end{equation}
where $I_i = [u_i,u_{i+1})$ for $1 \leq i \leq n-1$ and $I_n = [u_n,1]\cup[0,u_1)$, and $u_i$ are the node labels associated with $\bbG_n$. When the node labels are not defined, $u_i = (i-1)/n$ for $1 \leq i \leq n$.
\end{lemma}


\subsection{Graph Filter Transferability} \label{sbs:filter-transf}

\begin{figure*}
\centering


%
\def \figurewidth   {17 cm}
\def \figureheight  { 4 cm}
\def \unit          { 1 cm}

%
\def \ymin { 0} \def \ymax {1}
\def \xmin {-0.4} \def \xmax {0.4}
\def \yaxispad {0.05 }
\def \xaxispad {0.05 }

\def \xfactor {18.89} 
\def \xoffset {8.5} 


{\fontsize{7}{7}\selectfont

\begin{tikzpicture} [ x = 1*\unit, y=1*\unit]

   {
   \path[fill=red, opacity = 0.05] 
              (-0.1*\xfactor + \xoffset, 0.05*\figureheight) --
              (-0.1*\xfactor + \xoffset, 0.95*\figureheight) --  
              ( 0.1*\xfactor + \xoffset, 0.95*\figureheight) -- 
              ( 0.1*\xfactor + \xoffset, 0.05*\figureheight) -- cycle;

   \path[fill=red, opacity = 0.2] 
              (-0.1*\xfactor + \xoffset, 0.38*\figureheight) --
              (-0.1*\xfactor + \xoffset, 0.64*\figureheight) --  
              ( 0.1*\xfactor + \xoffset, 0.64*\figureheight) -- 
              ( 0.1*\xfactor + \xoffset, 0.38*\figureheight) -- cycle;

   }

   %
   \begin{axis} [ scale only axis,
                  enlarge x limits = false,
                  width   = \figurewidth,
                  height  = \figureheight,
                  ymin        = \ymin - \yaxispad, 
                  ymax        = \ymax + \yaxispad,
                  xmin        = \xmin - \xaxispad, 
                  xmax        = \xmax + \xaxispad,
                  axis x line       = bottom,
                  axis x line shift = -\yaxispad,
                  axis line style   = {line width = 1.0, stealth-stealth},
                  axis y line       = none,
                  axis y line shift = -\xaxispad + \xmin,                  
                  xtick style = {draw=none},
                  xtick       = {\xmin, 0, \xmax, -0.1, 0.1},
                  xticklabels = {\xmin, 0, \xmax, -0.1, 0.1},
                  ytick style = {draw=none},
                  ytick       = {\ymin, \ymax},
                  yticklabels = { , },
                ]

   \def \spikesgraphon { +0.010, +0.020, +0.040, +0.070, +0.11, +0.16, +0.22, 
                         +0.290, +0.370, +0.460, +0.570, +0.69, +0.82, +0.96, 
                         -0.010, -0.030, -0.070, -0.130, -0.21, 
                         -0.310, -0.430, -0.570, -0.730, -0.91 }

   \def \spikesgraph {   +0.009, +0.018, +0.037, +0.066, +0.10, +0.14, +0.20, 
                         +0.270, +0.345, +0.435, +0.545, +0.66, +0.78, +0.91, 
                         -0.011, -0.023, -0.074, -0.135, -0.22, 
                         -0.330, -0.460, -0.610, -0.770, -0.96 }

   \def \frequencyresponse { ( 0.70*exp(-(1/0.3*(x+0.50))^2) +
                               0.50*exp(-(1/0.2*(x-0.10))^2) +
                               0.65*exp(-(1/0.2*(x-0.70))^2) + 0.1
                             ) 
                           }

   \addplot+[ domain=-1:1, 
              samples = 200,  
              mark         = node*, 
              solid, 
              line width   = 1.2,              
              color        =  red!90, 
            ]
            { \frequencyresponse };

   \addplot+[ samples at   = \spikesgraphon, 
              solid, 
              ycomb, 
              mark         = otimes*, 
              mark size    = 1.5pt,
              line width   = 0.8,              
              color        =  blue!20, 
              mark options = {blue!20}
            ]
            { 1 };
            
   \addplot+[ samples at   = \spikesgraphon, 
              solid, 
              ycomb, 
              mark         = otimes*, 
              mark size    = 1.5pt,
              line width   = 0.8,              
              color        =  blue!60, 
              mark options = {blue!60}
            ]
            { \frequencyresponse };

   \addplot+[ samples at   = \spikesgraph, 
              solid, 
              ycomb, 
              mark         = otimes*, 
              mark size    = 1.5pt,
              line width   = 0.8,              
              color        =  mygreen!20, 
              mark options = {mygreen!20}
            ]
            { 1  };         

   \addplot+[ samples at   = \spikesgraph, 
              solid, 
              ycomb, 
              mark         = otimes*, 
              mark size    = 1.5pt,
              line width   = 0.8,              
              color        =  mygreen!60, 
              mark options = {mygreen!60}
            ]
            { \frequencyresponse };         

   \addplot+[ samples at   = {-1, 0, 1}, 
              solid, 
              ycomb, 
              mark         = stealth, 
              line width   = 0.8,              
              color        =  black, 
              mark options = {blue!60}
            ]
            { 1.0 };

   \addplot+[ domain       = -1:1,
              samples      = 2,
              solid, 
              mark         = none, 
              line width   = 1.0,              
              color        =  black, 
              mark options = {blue!60}
            ]
            { 1 };

\end{axis}

\end{tikzpicture}

}
\caption{Graphon eigenvalues (blue) and graph eigenvalues (green) for a graph $\bbG_n$ taken from a sequence converging to the graphon $\bbW$. For a fixed set of parameters $h_k$ [cf. \eqref{eqn:spec-lsi-gf} and \eqref{eqn:spec-graphon_filter}], the red curve represents a filter's frequency or spectral response. The graphon filter with coefficients $h_k$ has frequency response given by the blue dots. The graph filter with coefficients $h_k$ has frequency response given by the green dots. Note that, to have transferability (Thm. \ref{lemma:graphon-graph-filter}), the frequency response must vary slowly for eigenvalues with magnitude close to zero. This is indicated by the shaded red rectangle from -0.1 to 0.1.}
\label{fig:eigenvalues_and_filter}
\end{figure*}
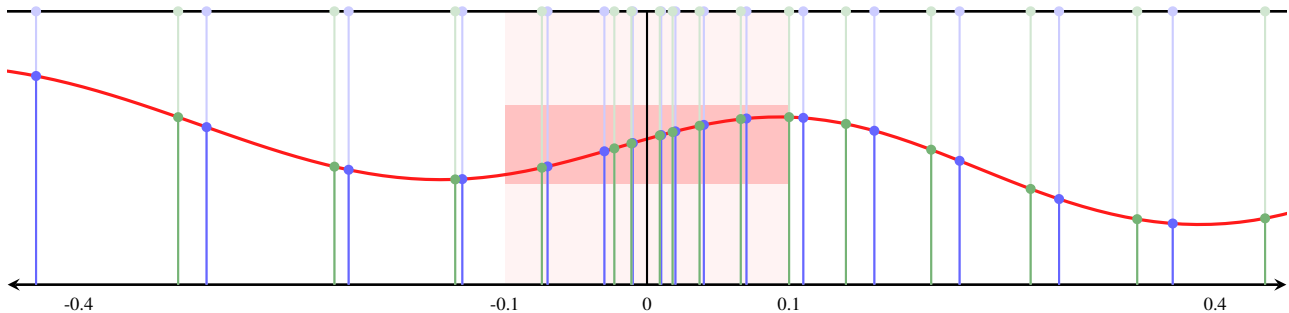


The graph filter (and GNN) transferability results require introducing two definitions: the $c$-band cardinality and the $c$-eigenvalue margin, which are both graphon spectral properties associated with a fixed eigenvalue threshold $c$.

\begin{definition}[$c$-band cardinality of $\bbW$] \label{def:c_band_card}
The $c$-band cardinality of a graphon $\bbW$, denoted $B^c_{\bbW}$, is the number of eigenvalues $\lambda_i$ of $T_\bbW$ with absolute value larger or equal to $c$, i.e., $$B^c_{\bbW} = \#\{\lambda_i\ :\ |\lambda_i| \geq c\}.$$
\end{definition}

\begin{definition}[$c$-eigenvalue margin of $\bbW$ and $\bbW'$] \label{def:c_eig_margin}
For two graphons $\bbW$ and $\bbW'$, the $c$-eigenvalue margin, denoted $\delta^c_{\bbW\bbW'}$, is given by $$\delta^c_{\bbW\bbW'} = \min_{i, j\neq i} \{ |\lambda_i(T_{\bbW'}) - \lambda_j(T_\bbW)|\ :\ |\lambda_i(T_{\bbW'})| \geq c\}$$ where $\lambda_i(T_{\bbW'})$ and $\lambda_i(T_\bbW)$ denote the eigenvalues of $T_\bbW'$ and $T_\bbW$ respectively.
\end{definition}

We also introduce the following Lipschitz continuity assumptions on the
the filter spectral response.

\begin{assumption} \label{as2}
The spectral response of the convolutional filter $h$ is $A_h$-Lipschitz in $[-1,-c] \cup [c,1]$ and $a_h$-Lipschitz in $(-c,c)$, with $a_h < A_h$ Moreover, $|h(\lambda)|<1$.
\end{assumption} 

Under assumption AS\ref{as2}, we can prove the following theorem, which states that a graphon filter can be approximated by a graph filter on a large graph sampled from this graphon.

\begin{theorem}[Graphon filter transferability to a generic graph] \label{lemma:graphon-graph-filter}
Let $Y=\Phi (X; \bbh, \bbW)$ be a graphon filter [cf. \eqref{eqn:spec-graphon_filter}] satisfying assumption AS\ref{as2}. Given a generic graph $\bbG_n$ with GSO $\bbS_n$ sampled from $\bbW$, let $\bby_n = \Phi(\bbx_n;\bbh,\bbS_n/n)$ be a graph filter instantiated from $\Phi (X; \bbh, \bbW)$ on this graph. For any $0 < c \leq 1$, it holds that
\begin{align} \label{eqn:thm4result1}
\begin{split}
\|Y_{n}-Y\| \leq \bigg(A_h &+ \frac{\pi B_{\bbW_n}^c}{\delta^c_{\bbW\bbW_n}}\bigg)\|\bbW-\bbW_n\|\|X\| \\
&+ (A_h c + 2)\|X-X_n\| + 2a_h c\|X\|
\end{split}
\end{align}
where $\bbW_n := \bbW_{\bbG_n}$ is the graphon induced by $\bbG_n$ and $Y_{n} = \Phi (X_{n}; \bbh, \bbW_n)$ is the graphon filter induced by $\bby_n = \Phi (\bbx_n; \bbh, \bbS_n/n)$ \eqref{eqn:graphon_filter_ind}.
\end{theorem}
\begin{proof}
Refer to Appendix \ref{sec:appendixA}.
\end{proof}

Thm. \ref{lemma:graphon-graph-filter} shows that the error incurred when transferring a graphon filter to a graph sampled from the graphon is upper bounded by the sum of three terms. The first depends on the distance between the graph and the graphon, i.e., it is the graph sampling error. Since sequences of graphs sampled from a graphon converge to it\footnote{Sequences of graphs converging to a graphon converge in the cut norm up to node permutations \cite[Chapter 11.7]{lovasz2012large}, and cut norm convergence implies convergence in $L^2$ (see \cite[Prop. 4]{ruiz2020graphonsp}).}, we can expect this distance to be small for large $n$, and to eventually vanish as $n\to\infty$. Disregarding the norm of the input signal $X$ (it is finite), this term also depends on what we call the filter's \textit{transferability constant}.
By being proportional to $A_h$ and $B^c_{\mbW_n}$, and inversely proportional to $\delta^c_{\bbW\mbW_n}$, this constant is directly proportional to the filter variability, so the smaller it is, the more transferable the filter. Note that although the bound in Thm. \ref{lemma:graphon-graph-filter} is somewhat implicit due to the dependence of $\delta^c_{\bbW\mbW_n}$ on $n$, for a given graph $\bbG_n$ and graphon $\bbW_n$ this quantity can always be calculated, and for large $n$ it approaches the smallest graphon eigengap in the interval $[-1,-c]\cup[c,1]$.

The second term of the bound is related to the distance between the graphon signal and the graph signal. I.e., it stems from sampling the signal. Since $X$ is in $L^2$, $X_n$ converges up to node permutations. Hence, $\|X_n-X\|$ vanishes as $n\to\infty$.
Note that this term also depends on $a_h c$, which upper bounds the maximum filter amplification in a part of the eigenvalue spectrum determined by a threshold $c$---the band $(-c, c)$, see Fig. \ref{fig:eigenvalues_and_filter}. Since $a_h < A_h$, the contribution of this term to the approximation error is small.

The constant $c$ also appears in the third term of the bound, $2 a_h c \|X\|$, which is related to the output signal's non-transferable energy for eigenvalues smaller than $c$.
For a given value of $c$, spectral components associated with $\lambda \in [-1,-c] \cup [c,1]$ (i.e., low-frequency components) are completely transferable in the sense that, if the signal $X$ or the filter $\Phi (X; \bbh, \bbW)$ are $c$-bandlimited (i.e., spectral components associated with eigenvalues $|\lambda| \in [0,c)$ are zero), the transferability bound is only given by the first two terms of \eqref{eq:prop1} and thus vanishes with the size of the graph. In the more general case in which the input signal is not bandlimited, regardless of how we fix $c \in (0,1]$ there will be some residual non-transferable energy associated with the spectral components $|\lambda| \in [0,c)$. However, the effect of the non-transferable spectral components on the transferability error is attenuated by the filter since its Lipschitz constant on the $(-c,c)$ interval, $a_h$, is smaller than $A_h$, the Lipschitz constant on $[-1,-c]\cup[c,1]$. In particular, if $a_h \ll A_h$ the filter resembles a filter with constant frequency response on the $(-c,c)$ band. See Sec. \ref{sbs:discussion} for more in-depth discussion about this non-transferable energy and filters with constant band.


Next, we show how Thm. \ref{lemma:graphon-graph-filter} particularizes to the simplest type of graph that can be sampled from a graphon---the template graph in Definition \ref{def:template}. To do so, we need two additional assumptions on the Lipschitz continuity of the graphon and the graphon signal respectively.

\begin{assumption} \label{as1}
The graphon $\bbW$ is $A_w$-Lipschitz, i.e., $|\bbW(u_2,v_2)-\bbW(u_1,v_1)| \leq A_w(|u_2-u_1|+|v_2-v_1|)$.
\end{assumption} 
\begin{assumption} \label{as3}
The graphon signal $X$ is $A_x$-Lipschitz.
\end{assumption}

\begin{proposition}[Graphon filter transferability to a template graph] \label{lemma:graphon-graph-filter-template}
Let $Y=\Phi (X; \bbh, \bbW)$ be a graphon filter [cf. \eqref{eqn:spec-graphon_filter}] satisfying assumptions AS\ref{as2}--AS\ref{as3}. Given a template graph $\mbG_n$ with GSO $\mbS_n$ [cf. Definition \ref{def:template}], let $\mathbb{y}_n = \Phi(\mathbb{x}_n;\bbh,\mbS_n/n)$ be a graph filter instantiated from $\Phi (X; \bbh, \bbW)$ on this graph. For any $0 < c \leq 1$, it holds that
\begin{align} \label{eq:prop1}
\begin{split}
\|\mbY_{n}-Y\| \leq \bigg(A_h &+ \frac{\pi B_{\mbW_n}^c}{\delta^c_{\bbW\mbW_n}}\bigg)\frac{2A_w}{n}\|X\| \\
&+ \frac{A_x(a_h c+2)}{n} + 2a_h c\|X\|
\end{split}
\end{align}
where $\mbW_n = \bbW_{\mbG_n}$ is the graphon induced by $\mbG_n$ and $\mbY_{n} = \Phi (\mbX_{n}; \bbh, \mbW_n)$ is the graphon filter induced by $\mathbb{y}_n = \Phi (\mathbb{x}_n; \bbh, \mbS_n/n)$ \eqref{eqn:graphon_filter_ind}.
\end{proposition}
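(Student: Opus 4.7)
The plan is to apply the triangle inequality to decompose $\|\mbY_n - Y\|$ by interposing the graphon filter applied to the original signal $X$ but driven by the \emph{induced} graphon $\mbW_n$ instead of $\bbW$:
\begin{equation*}
\|\mbY_n - Y\| \leq \|\Phi(\mbX_n; \bbh, \mbW_n) - \Phi(X; \bbh, \mbW_n)\| + \|\Phi(X; \bbh, \mbW_n) - \Phi(X; \bbh, \bbW)\|.
\end{equation*}
The first term is a \emph{signal-discretization} error and the second is a \emph{graphon-discretization} error. Both are driven by two elementary estimates that follow from Assumptions AS\ref{as1}--AS\ref{as3} and the regular $n$-partition of $[0,1]$: namely $\|\mbX_n - X\|_{L_2} \leq A_x/n$ (piecewise-constant interpolation on a mesh of size $1/n$ against an $A_x$-Lipschitz signal) and $\|T_\bbW - T_{\mbW_n}\|_{\mathrm{op}} \leq \|\bbW - \mbW_n\|_{L_2([0,1]^2)} \leq 2A_w/n$ (the Lipschitz constant $A_w$ integrated against the same partition).

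For the signal-discretization term, I would factor out the induced filter and use its operator norm, which is controlled by partitioning the spectrum at the threshold $c$. By Assumption AS\ref{as2}, on $(-c,c)$ the spectral response is close to its value at the origin with Lipschitz constant $a_h$, while on the complement it satisfies $|h|<1$. Combining these gives an operator-norm bound of order $a_h c + 2$, so multiplying by $\|\mbX_n - X\|_{L_2} \leq A_x/n$ recovers the $A_x(a_h c + 2)/n$ contribution in \eqref{eq:prop1}.

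For the graphon-discretization term, the idea is to use the spectral representation in \eqref{eqn:spec-graphon_filter} and expand the difference as a sum over the paired eigenpairs $(\lambda_i',\varphi_i')$ and $(\lambda_i,\varphi_i)$ of $T_{\mbW_n}$ and $T_\bbW$, split according to whether the eigenvalues lie above or below $c$. The tail $|\lambda| < c$ is bounded directly by $2 a_h c \|X\|$ using the small-$a_h$ regime of $h$ together with Parseval on the graphon eigenbasis. On the low-frequency part, both eigenvalues and eigenfunctions must be perturbed: Weyl's inequality combined with the $A_h$-Lipschitz bound from AS\ref{as2} controls the eigenvalue differences, while the Davis-Kahan $\sin\Theta$ theorem controls each eigenfunction difference $\|\varphi_i - \varphi_i'\|$ by $\pi \|T_\bbW - T_{\mbW_n}\|_{\mathrm{op}}/\delta^c_{\bbW\mbW_n}$. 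Aggregating over the $B^c_{\mbW_n}$ active components (via Cauchy-Schwarz and orthogonality) and invoking the estimate on $\|T_\bbW - T_{\mbW_n}\|_{\mathrm{op}}$ produces the remaining $\bigl(A_h + \pi B^c_{\mbW_n}/\delta^c_{\bbW\mbW_n}\bigr)(2A_w/n)\|X\|$ term.

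The main obstacle is the eigenfunction bookkeeping on the low-frequency part: each of the $B^c_{\mbW_n}$ active components contributes a Davis-Kahan perturbation scaled by $1/\delta^c_{\bbW\mbW_n}$, and these must be combined without loss of tightness. This is exactly the step that explains why the bound degrades with the number of active eigenvalues and with small spectral gaps, and it foreshadows the transferability-discriminability tradeoff discussed after the proposition: shrinking $c$ to make the leftover $2 a_h c \|X\|$ small inflates $B^c_{\mbW_n}$ and can collapse the gap $\delta^c_{\bbW\mbW_n}$, degrading the first term.
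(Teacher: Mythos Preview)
Your proposal is correct and follows essentially the same route as the paper: the paper proves a generic Lemma (called Lemma~\ref{lemma:graphon-graph-filter} in the appendix) that bounds $\|Y_n-Y\|$ in terms of $\|\bbW-\bbW_n\|$ and $\|X-X_n\|$ via exactly the spectral split at $c$, Weyl-type eigenvalue perturbation, and Davis--Kahan eigenfunction perturbation you describe, and then specializes to the template graph using the elementary estimates $\|\bbW-\mbW_n\|\leq 2A_w/n$ and $\|X-\mbX_n\|\leq A_x/n$.

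One small bookkeeping point: your justification of the $(a_hc+2)$ coefficient on $A_x/n$ via ``operator-norm bound of order $a_hc+2$'' is not quite right---since $|h|<1$, the full induced filter has operator norm at most $1$, and your first triangle-inequality term would in fact give the tighter $A_x/n$. The paper's extra $(a_hc+2)$ arises because it performs the spectral split $h=h^{\geq c}+h^{<c}$ \emph{before} the triangle inequality, so both pieces separately contribute a $\|X-X_n\|$ term (one from $h^{\geq c}$ and a $(1+a_hc)$ from bounding $h^{<c}$ directly). This is harmless for the stated inequality but worth noting so you are not reverse-engineering the constant.
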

\begin{proof}\renewcommand{\qedsymbol}{}
Refer to Appendix \ref{sec:appendixA.2} in the supplementary material.
\end{proof}

From Prop. \ref{lemma:graphon-graph-filter-template}, we see that, for template graphs, the parts of the graphon filter approximation bound associated with the graph and the graph signal sampling errors decrease as $n^{-1}$. However, the dependence on the Lipschitz constants $A_w$ and $A_x$ implies that graphon filters are harder to approximate when the graphon and the graphon signal are not smooth.


Using concentration inequalities for the uniform distribution, Prop. \ref{lemma:graphon-graph-filter-template} can be extended to weighted graphs. 
First, we need an additional definition---node stochasticity---which arises when the node labels are no longer given by a regular partition, but sampled uniformly at random.

\begin{definition}[Node stochasticity] \label{def:node_stoc}
For a fixed probability $\chi \in [0,1]$, the node stochasticity constant on $n$ nodes, denoted $\alpha(\chi, n)$, is defined as 
$$\alpha(n,\chi)=\log{({{(n+1)}^2}/{\log{(1-\chi)^{-1}}})} \text{.}$$
\end{definition}

\begin{proposition}[Graphon filter transferability to a weighted graph] \label{lemma:graphon-graph-filter-weighted}
Let $Y=\Phi (X; \bbh, \bbW)$ be a graphon filter [cf. \eqref{eqn:spec-graphon_filter}] satisfying assumptions AS\ref{as2}--AS\ref{as3}. Given a weighted graph $\mbG_{\tilde{n}}$ with GSO $\mbS_{\tilde{n}}$ [cf. Definition \ref{def:weighted}], let $\mathbb{y}_{\tilde{n}} = \Phi(\mathbb{x}_{\tilde{n}};\bbh,\mbS_{\tilde{n}}/n)$ be a graph filter instantiated from $\Phi (X; \bbh, \bbW)$ on this graph. Let $\chi_1, \chi_2 \in (0,0.3]$. For any $0 < c \leq 1$ and $n \geq 4/\chi_2$, with probability at least $[1-2\chi_1]\times[1-\chi_2]$ it holds that
\begin{align} \label{eq:prop2}
\begin{split}
\|\mbY_{\tilde{n}}-Y\| \leq &\bigg(A_h + \frac{\pi B_{\mbW_{\tilde{n}}}^c}{\delta^c_{\bbW\mbW_{\tilde{n}}}}\bigg)\frac{2A_w\alpha (n,\chi_1)}{n}\|X\| \\
&+ \frac{A_x\alpha (n,\chi_1)(a_h c+2)}{n} + 2a_h c\|X\|
\end{split}
\end{align}
\mbox{where $\mbW_{\tilde{n}} = \bbW_{\mbG_{\tilde{n}}}$ is the graphon induced by $\mbG_{\tilde{n}}$ and $\mbY_{\tilde{n}} = $}  \mbox{$\Phi (\mbX_{\tilde{n}};\bbh, \mbW_{\tilde{n}})$ is the graphon filter induced by $\mathbb{y}_{\tilde{n}} = \Phi (\mathbb{x}_{\tilde{n}}; \bbh,$} $\mbS_{\tilde{n}}/n)$ \eqref{eqn:graphon_filter_ind}.
\end{proposition}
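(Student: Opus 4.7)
The plan is to reduce Proposition \ref{lemma:graphon-graph-filter-weighted} to Proposition \ref{lemma:graphon-graph-filter-template} by controlling the deviation of the random node labels from the regular $n$-partition, and then mirroring the template-graph argument on a high-probability event. Let $u_{(1)} \leq \ldots \leq u_{(n)}$ denote the order statistics of the iid uniform sample $\{u_i\}_{i=1}^n$. Using a concentration inequality for the uniform order statistics---for example, a union bound combined with a Chernoff/Hoeffding estimate on each empirical CDF value, calibrated so that the constant $\alpha(n,\chi_1)$ of Definition \ref{def:node_stoc} emerges---I would first establish that with probability at least $1-2\chi_1$ the event
\begin{equation*}
\ccalE_1 := \Bigl\{\max_{1 \leq i \leq n}\bigl|u_{(i)} - (i-1)/n\bigr| \leq \alpha(n,\chi_1)/n\Bigr\}
\end{equation*}
holds, with the factor of $2$ coming from a two-sided bound over consecutive order statistics.

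On $\ccalE_1$, after sorting the random labels, the induced graphon $\mbW_{\tilde{n}}$ and signal $\mbX_{\tilde{n}}$ can be compared to the template graphon $\mbW_n$ and template signal $\mbX_n$ from Proposition \ref{lemma:graphon-graph-filter-template}. Both pairs are step functions, so the Lipschitz assumptions AS\ref{as1} and AS\ref{as3} translate the label perturbation $\alpha(n,\chi_1)/n$ into $L_2$ estimates of order $A_w\,\alpha(n,\chi_1)/n$ and $A_x\,\alpha(n,\chi_1)/n$, respectively. Next, I would introduce a second event $\ccalE_2$ of probability at least $1-\chi_2$ (which requires $n \geq 4/\chi_2$) on which the $c$-eigenvalue margin $\delta^c_{\bbW\mbW_{\tilde{n}}}$ stays bounded away from zero and the $c$-band cardinality $B^c_{\mbW_{\tilde{n}}}$ remains finite. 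This step invokes a Weyl-type inequality together with an operator-norm estimate on $T_{\mbW_{\tilde{n}}} - T_{\bbW}$ under the random sampling, the condition $n \geq 4/\chi_2$ ensuring that the concentration is tight enough to preserve eigenvalue separation.

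On the intersection $\ccalE_1 \cap \ccalE_2$, which has probability at least $(1-2\chi_1)(1-\chi_2)$, I would rerun the proof of Proposition \ref{lemma:graphon-graph-filter-template} with the sorted random labels in place of the regular $n$-partition. The three contributions of the template bound carry over almost verbatim: a spectral/transferability term proportional to $\bigl(A_h + \pi B^c_{\mbW_{\tilde{n}}}/\delta^c_{\bbW\mbW_{\tilde{n}}}\bigr) \cdot 2 A_w\|X\|$, now scaled by $\alpha(n,\chi_1)/n$ instead of $1/n$; a signal-discretization term $A_x(a_h c + 2)\alpha(n,\chi_1)/n$; and the non-transferable band contribution $2 a_h c \|X\|$, which does not depend on the node sampling and is therefore unchanged.

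The main obstacle is the spectral perturbation step. Unlike the deterministic template case, $\mbW_{\tilde{n}}$ is random, so $\delta^c_{\bbW\mbW_{\tilde{n}}}$ is itself a random quantity that could in principle collapse when two eigenvalues of $\bbW$ happen to fall near a sampled eigenvalue of $\mbW_{\tilde{n}}$. Constructing $\ccalE_2$ with the prescribed probability---so that the Weyl/Davis--Kahan comparisons between eigenpairs of $T_{\mbW_{\tilde{n}}}$ and $T_{\bbW}$ yield the stated constants---is the technically delicate part of the argument.
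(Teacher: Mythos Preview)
Your proposal misidentifies both the concentration quantity and the role of $\chi_2$. The paper does not control $\max_i |u_{(i)}-(i-1)/n|$ via a DKW/Chernoff bound; it controls the \emph{maximum spacing} $S_{(n+1)}=\max_i\bigl(U_{(i)}-U_{(i-1)}\bigr)$ of the order statistics (with $U_{(0)}=0$, $U_{(n+1)}=1$). This is exactly what is needed because $\mbW_{\tilde n}$ and $\mbX_{\tilde n}$ are step functions on the random intervals $[u_{(i)},u_{(i+1)})$, so the Lipschitz assumptions AS\ref{as1} and AS\ref{as3} give $|\bbW-\mbW_{\tilde n}|\le 2A_w S_{(n+1)}$ and $|X-\mbX_{\tilde n}|\le A_x S_{(n+1)}$ pointwise. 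The specific form $\alpha(n,\chi_1)=\log\bigl((n+1)^2/\log(1-\chi_1)^{-1}\bigr)$ arises from a tailored analysis of $S_{(n+1)}$ (via the exponential-representation $S_{(n+1)}\equiv X_{(n+1)}/\sum X_i$ for iid exponentials), not from an empirical-process bound on the order statistics themselves.

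More importantly, there is no event $\ccalE_2$ of the kind you describe. The quantities $B^c_{\mbW_{\tilde n}}$ and $\delta^c_{\bbW\mbW_{\tilde n}}$ are left \emph{random} in the final bound \eqref{eq:prop2}; the paper never controls them, and your ``main obstacle'' is therefore a non-issue. The parameter $\chi_2$ and the requirement $n\ge 4/\chi_2$ both come from the \emph{same} maximum-spacing concentration: in bounding $P\bigl((n+1)S_{(n+1)}-\log(n+1)\le x\bigr)$ one conditions on a Chebyshev event for the normalized sum $\sum X_i/(n+1)$, and this conditioning costs the factor $1-\chi_2$ and forces $n\ge 4/\chi_2$. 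The factor $1-2\chi_1$ then comes from the Gumbel-type approximation for $X_{(n+1)}$. So both probability losses live inside a single lemma on $S_{(n+1)}$, after which one simply plugs $\|\bbW-\mbW_{\tilde n}\|\le 2A_w\alpha(n,\chi_1)/n$ and $\|X-\mbX_{\tilde n}\|\le A_x\alpha(n,\chi_1)/n$ into the generic-graph bound (Lemma~\ref{lemma:graphon-graph-filter}), rather than rerunning the template proof.
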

\begin{proof}\renewcommand{\qedsymbol}{}
Refer to Appendix \ref{sec:appendixA.2} in the supplementary material.
\end{proof}

Therefore, graphon filters are also transferable to weighted graphs. The main difference between Prop. \ref{lemma:graphon-graph-filter-weighted} and Prop. \ref{lemma:graphon-graph-filter-template} is that the first and second terms of the transferability bound are now multiplied by the node stochasticity constant $\alpha(n,\chi_1)$ [cf. Definition \ref{def:node_stoc}]. This constant shows up because of the randomness associated with the node labels $u_i$, which are sampled uniformly at random. Note that  $\alpha(n,\chi_1)$ depends on both the graph size and $\chi_1$. The value of $\chi_1$ determines the confidence of the transferability bound. This confidence, given by $[1-2\chi_1]\times[1-\chi_2]$, also depends on the parameter $\chi_2$. Although $\chi_2$ does not appear in \eqref{eq:prop2}, there is an interplay between the confidence bound and the minimum graph size, since Prop. \ref{lemma:graphon-graph-filter-weighted} holds with probability $[1-2\chi_1]\times[1-\chi_2]$ for $n \geq 4/\chi_2$.


Next, we extend the graphon-graph filter approximation result to its more general form---graphon filter approximation by graph filters supported on stochastic graphs. To do so, we need to define edge stochasticity and make one additional assumption on the size of the graph. 

\begin{definition}[Edge stochasticity] \label{def:edge_stoc}
For a fixed probability $\chi \in [0,1]$, the edge stochasticity constant on $n$ nodes, denoted $\beta(\chi, n)$, is defined as 
$$\beta(n, \chi) = \sqrt{n\log{(2n/\chi)}} \text{.}$$
\end{definition}

Assumption \ref{as4} imposes a restriction on $n$ related to the graphon variability $A_w$ and its maximum degree $d_\bbW$.

\begin{assumption} \label{as4}
Given $\chi_3 \in (0,1)$, $n$ is such that
\begin{equation*}
n - \frac{\log{2n/\chi_3}}{d_\bbW} > 2\frac{A_w}{d_\bbW}
\end{equation*}
where $d_\bbW = \max_u \int_0^1 \bbW(u,v)dv$.
\end{assumption}

We also need the following lemma, which upper bounds the transferability error of a graph filter across a weighted graph and a stochastic graph that is sampled from it (see \eqref{eqn:stochastic_from_weighted}). 

\begin{lemma}[Graph filter transferability from weighted to stochastic graphs] \label{lemma:transf-weighted-stochastic}
Consider a graphon $\bbW$ satisfying assumption AS\ref{as1}, and let $\mbG_{\tilde{n}}$ be a weighted graph with GSO $\mbS_{\tilde{n}}$ [cf. Definition \ref{def:weighted}] and $\bbG_{\tilde{n}}$ a stochastic graph with GSO $\bbS_{\tilde{n}}$ obtained from $\mbS_{\tilde{n}}$ as in \eqref{eqn:stochastic_from_weighted}. Given a graphon signal $X$ satisfying assumption AS\ref{as3}, let $\mathbb{y}_{\tilde{n}} = \Phi(\mathbb{x}_{\tilde{n}}; \bbh, \mbS_{\tilde{n}}/n)$ and $\bby_{\tilde{n}} = \Phi(\bbx_{\tilde{n}}; \bbh, \bbS_{\tilde{n}}/n)$ be graph filters satisfying assumption AS\ref{as2} and acting on the graph signals $\mathbb{x}_{\tilde{n}}$ and $\bbx_{\tilde{n}}$ instantiated from $X$ on $\mbG_{\tilde{n}}$ and $\bbG_{\tilde{n}}$ respectively (note that $\mathbb{x}_{\tilde{n}}=\bbx_{\tilde{n}}$). Let $\chi_3 \in (0,1)$. For any $0 < c \leq 1$ and $n$ satisfying assumption AS\ref{as4}, with probability at least $1-\chi_3$ it holds that
\begin{align*}
\|\mathbb{y}_{\tilde{n}}-\bby_{\tilde{n}}\| \leq \Bigg(A_h + \frac{\pi B^c_{\bbW_{\tilde{n}}}}{\delta^c_{\mbW_{\tilde{n}}\bbW_{\tilde{n}}}}\Bigg)2\sqrt{\frac{\log{(2n/\chi_3)}}{n}}\|\bbx_{\tilde{n}}\| \\ + 2a_h c \|\bbx_{\tilde{n}}\|
\end{align*}
where $\mbW_{\tilde{n}} = \bbW_{\mbG_{\tilde{n}}}$ and $\bbW_{\tilde{n}} = \bbW_{\bbG_{\tilde{n}}}$ are the graphons induced by $\mbG_{\tilde{n}}$ and $\bbG_{\tilde{n}}$ respectively.
\end{lemma}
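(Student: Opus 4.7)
The plan is to reduce the problem to one about induced graphon filters. Since the nodes of $\mbG_{\tilde{n}}$ and $\bbG_{\tilde{n}}$ share the same labels $\{u_i\}$, the induced graphon signals satisfy $\mbX_{\tilde{n}} = \bbX_{\tilde{n}}$, and the $L_2([0,1])$ norm of a difference of induced graphon signals equals (up to the standard $1/\sqrt{n}$ factor absorbed into the induction) the Euclidean norm of the difference of the underlying graph signals. Thus it suffices to bound
\begin{equation*}
\bigl\| \Phi(\mbX_{\tilde{n}}; \bbh, \mbW_{\tilde{n}}) - \Phi(\bbX_{\tilde{n}}; \bbh, \bbW_{\tilde{n}}) \bigr\|
\end{equation*}
where $\mbW_{\tilde{n}}$ and $\bbW_{\tilde{n}}$ are the induced graphons and the two filters act on the \emph{same} input signal.

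Next I expand both filter outputs in the respective eigenbases of $T_{\mbW_{\tilde{n}}}$ and $T_{\bbW_{\tilde{n}}}$ [cf.~\eqref{eqn:spec-graphon_filter}] and split each spectral sum into the $c$-band part ($|\lambda_i|\ge c$) and the tail ($|\lambda_i|<c$). For the tail, since $h$ is $a_h$-Lipschitz on $(-c,c)$ the spectral response differs by at most $2a_h c$ between any two tail eigenvalues; orthonormality of each eigenbasis then yields the term $2 a_h c \|\bbx_{\tilde{n}}\|$. For the $c$-band part, I use the standard telescoping decomposition already employed in the proof of Proposition~\ref{lemma:graphon-graph-filter-template}: add and subtract a hybrid operator in order to separate the contribution of eigenvalue perturbation from that of eigenfunction perturbation. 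The first is handled by Weyl's inequality, $|\lambda_i(T_{\mbW_{\tilde{n}}}) - \lambda_i(T_{\bbW_{\tilde{n}}})| \le \|T_{\mbW_{\tilde{n}}} - T_{\bbW_{\tilde{n}}}\|$, together with the $A_h$-Lipschitz property of $h$ outside $(-c,c)$. The second is handled by the Davis--Kahan $\sin\Theta$ theorem applied with the $c$-eigenvalue margin $\delta^{c}_{\mbW_{\tilde{n}}\bbW_{\tilde{n}}}$, summed over the $B^{c}_{\bbW_{\tilde{n}}}$ indices in the band; this produces the $\pi B^{c}_{\bbW_{\tilde{n}}}/\delta^{c}_{\mbW_{\tilde{n}}\bbW_{\tilde{n}}}$ factor. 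Combined with the $A_h$ coming from the eigenvalue-perturbation branch, this yields the first term in the statement, modulo one remaining ingredient, namely a high-probability bound on $\|T_{\mbW_{\tilde{n}}} - T_{\bbW_{\tilde{n}}}\|$.

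The main obstacle is precisely this probabilistic step. The kernel $\mbW_{\tilde{n}} - \bbW_{\tilde{n}}$ is, conditional on $\{u_i\}$, a step function whose $n\times n$ blocks are the centered Bernoulli variables $[\mbS_{\tilde{n}}]_{ij} - [\bbS_{\tilde{n}}]_{ij}$, symmetric in $(i,j)$ and independent above the diagonal, with variance bounded by $\bbW(u_i,u_j)(1-\bbW(u_i,u_j))$. The operator norm of the corresponding integral operator equals $(1/n)$ times the spectral norm of the symmetric random matrix $\mbS_{\tilde{n}} - \bbS_{\tilde{n}}$, so a non-asymptotic matrix concentration inequality for random symmetric matrices with independent bounded entries (e.g., the Oliveira/Lu--Peng-type bound used for inhomogeneous Erd\H{o}s--R\'enyi graphs) gives
\begin{equation*}
\|T_{\mbW_{\tilde{n}}} - T_{\bbW_{\tilde{n}}}\| \;\le\; 2\sqrt{\frac{\log(2n/\chi_3)}{n}}
\end{equation*}
with probability at least $1-\chi_3$. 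Assumption~AS\ref{as4} enters here to guarantee that $n$ is large enough for this concentration inequality to apply at the stated level and to remain compatible with the graphon scale $A_w/d_\bbW$.

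Finally, I substitute this high-probability estimate into the deterministic decomposition obtained in the previous step. The Weyl-based eigenvalue-perturbation piece contributes $A_h \cdot 2\sqrt{\log(2n/\chi_3)/n}\,\|\bbx_{\tilde{n}}\|$, the Davis--Kahan-based eigenfunction-perturbation piece contributes $(\pi B^{c}_{\bbW_{\tilde{n}}}/\delta^{c}_{\mbW_{\tilde{n}}\bbW_{\tilde{n}}}) \cdot 2\sqrt{\log(2n/\chi_3)/n}\,\|\bbx_{\tilde{n}}\|$, and the tail piece contributes $2 a_h c\|\bbx_{\tilde{n}}\|$, matching the stated bound. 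Apart from the concentration step, the remainder of the proof closely parallels the arguments in Proposition~\ref{lemma:graphon-graph-filter-template} and Proposition~\ref{lemma:graphon-graph-filter-weighted}, so the only genuinely new technical ingredient is the random-graph operator-norm estimate.
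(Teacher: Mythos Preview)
Your proposal is correct and follows essentially the same route as the paper: the paper likewise splits $h=h^{\geq c}+h^{<c}$, bounds the $h^{<c}$ piece by $2a_h c\|\bbx_{\tilde n}\|$ via the $a_h$-Lipschitz property, and for $h^{\geq c}$ applies the eigenvalue perturbation bound (Proposition~\ref{prop:eigenvalue_diff}) and Davis--Kahan (Proposition~\ref{thm:davis_kahan}) together with the matrix concentration estimate $\|\mbS_{\tilde n}-\bbS_{\tilde n}\|\le 2\sqrt{n\log(2n/\chi_3)}$ from \cite[Theorem~1]{chung2011spectra}. The one step you leave implicit that the paper carries out in detail is the verification that AS\ref{as4} guarantees the maximum-expected-degree hypothesis $d_{\bbG_{\tilde n}}>4\log(2n/\chi_3)/9$ required by that concentration result, which the paper obtains via the estimate $d_{\bbG_{\tilde n}}\ge n\,d_\bbW - 2A_w$.
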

\begin{proof}\renewcommand{\qedsymbol}{}
Refer to Appendix \ref{sec:appendixB} in the supplementary material.
\end{proof}


The graphon filter approximation result is obtained by combining Prop. \ref{lemma:graphon-graph-filter-weighted} and Lemma \ref{lemma:transf-weighted-stochastic} through the triangle inequality.

\begin{proposition}[Graphon filter transferability to a stochastic graph] \label{thm:graphon-graph-filter-stochastic}
Let $Y=\Phi (X; \bbh, \bbW)$ be a graphon filter [cf. \eqref{eqn:spec-graphon_filter}] satisfying assumptions AS\ref{as2}--AS\ref{as3}. Given a stochastic graph $\bbG_{\tilde{n}}$ with GSO $\bbS_{\tilde{n}}$ [cf. Definition \ref{def:stochastic}], let $\bby_{\tilde{n}} = \Phi(\bbx_{\tilde{n}};\bbh,\bbS_{\tilde{n}}/n)$ be a graph filter instantiated from $\Phi (X; \bbh, \bbW)$ on this graph. Let $\chi_1, \chi_2, \chi_3 \in (0,0.3]$. For any $0 < c \leq 1$ and $n \geq 4/\chi_2$ satisfying assumption AS\ref{as4}, with probability at least $[1-2\chi_1]\times[1-\chi_2]\times[1-\chi_3]$ it holds that
\begin{align*}
\begin{split}
\|Y_{\tilde{n}}-Y\| \leq \bigg(A_h &+ \frac{\pi B_{\bbW_{\tilde{n}}}^c}{\delta^c_{\bbW\bbW_{\tilde{n}}}}\bigg)\frac{2(A_w\alpha (n,\chi_1)+\beta(n,\chi_3))}{n}\|X\| \\
&+ \frac{A_x\alpha (n,\chi_1)(a_h c+2)}{n} + 4a_h c\|X\|
\end{split}
\end{align*}
where $\bbW_{\tilde{n}} = \bbW_{\bbG_{\tilde{n}}}$ is the graphon induced by $\bbG_{\tilde{n}}$ and $Y_{\tilde{n}} = \Phi (X_{\tilde{n}}; \bbh, \bbW_{\tilde{n}})$ is the graphon filter induced by $\bby_{\tilde{n}} = \Phi (\bbx_{\tilde{n}}; \bbh, \bbS_{\tilde{n}}/n)$ \eqref{eqn:graphon_filter_ind}.
\end{proposition}
\begin{proof}
Prop. \ref{thm:graphon-graph-filter-stochastic} follows directly from Lemma \ref{lemma:transf-weighted-stochastic}, Prop. \ref{lemma:graphon-graph-filter-weighted} and the triangle inequality. Note that the probabilities $[1-\chi_1] \times [1-\chi_2]$ and $1-\chi_3$ are multiplied because Lemma \ref{lemma:transf-weighted-stochastic} holds for any weighted graph $\mbG_{\tilde{n}}$, i.e., it is independent of the weighted graph. 
\end{proof}

This approximation bound is similar to the approximation bound derived for weighted graphs in Prop. \ref{lemma:graphon-graph-filter-weighted}, with two important differences. The first is that, in addition to depending on the node stochasticity constant $\alpha(n,\chi_1)$, the transferability constant also depends on the edge stochasticity $\beta(n,\chi_3)$ [cf. Definition \ref{def:edge_stoc}] which accounts for the randomness of the edges of $\bbG_{\tilde{n}}$. Besides modifying the value of the transferability constant for stochastic graphs, note that $\beta(n, \chi_3)$ also lowers the confidence of the bound. The second difference is that the part of the bound due to non-transferable spectral components is twice that of Prop. \ref{lemma:graphon-graph-filter-weighted}. This is due to the summation of the non-transferable energy between the graphon and the weighted graph (Prop. \ref{lemma:graphon-graph-filter-weighted}), and between the weighted graph and the stochastic graph (Lemma \ref{lemma:transf-weighted-stochastic}).

Since a graphon identifies a family of graphs, the graphon filter approximation result in Thm. \ref{lemma:graphon-graph-filter} and Propositions \ref{lemma:graphon-graph-filter-template}--\ref{thm:graphon-graph-filter-stochastic} can be readily extended to a graph filter transferability result. To stick with the same example and avoid repetition, herein we only show transferability of graph filters across stochastic graphs, but note that this holds for any graph sampled from the graphon (including template and weighted) by Thm. \ref{lemma:graphon-graph-filter} and the tringle inequality. If we can bound the error made when we transfer (i) a graphon filter to a graph $\bbG_{\tilde{n}_1}$ and (ii) the same graphon filter to a graph $\bbG_{\tilde{n}_2}$, then we can bound the error of transferring a graph filter across $\bbG_{\tilde{n}_1}$ and $\bbG_{\tilde{n}_2}$ by the sum of (i) and (ii).

\begin{proposition}[Graph filter transferability on stochastic graphs]
\label{thm:graph-graph-filter-stochastic}
Let $Y=\Phi (X; \bbh, \bbW)$ be a graphon filter [cf. \eqref{eqn:spec-graphon_filter}] satisfying assumptions AS\ref{as2}--AS\ref{as3}. Let $\bbG_{\tilde{n}_1}$ and $\bbG_{\tilde{n}_2}$, $n_1 \neq n_2$, be two stochastic graphs with GSOs $\bbS_{\tilde{n}_1}$ and $\bbS_{\tilde{n}_2}$ respectively [cf. Definition \ref{def:stochastic}], and let $\bby_{\tilde{n}_1} = \Phi(\bbx_{\tilde{n}_1};\bbh,\bbS_{\tilde{n}_1}/n)$ and $\bby_{\tilde{n}_2} = \Phi(\bbx_{\tilde{n}_2};\bbh,\bbS_{\tilde{n}_2}/n)$ be graph filters instantiated from $\Phi (X; \bbh, \bbW)$ on these graphs.  Let $\chi_1, \chi_2, \chi_3 \in (0,0.3]$. For any $0 < c \leq 1$ and $n_1, n_2 \geq 4/\chi_2$ satisfying assumption AS\ref{as4}, with probability at least $[1-2\chi_1]^2\times[1-\chi_2]^2\times[1-\chi_3]^2$ it holds that
\begin{align*}
\begin{split}
\|&Y_{\tilde{n}_1}-Y_{\tilde{n}_2}\| \\
&\leq 4\bigg(A_h + \frac{\pi B^c_{\mbox{{\scriptsize max}}}}{\delta^c_{{\mbox{\scriptsize min}}}}\bigg)\max_{i \in \{1,2\}}\frac{A_w\alpha(n_{i},\chi_1) + {\beta(n_{i},\chi_3)}}{n_{i}}\|X\| \\
&+ 2A_x(a_h c+2)\max_{i \in \{1,2\}}\frac{\alpha(n_{i},\chi_1)}{n_{i}} + 8a_h c\|X\|
\end{split}
\end{align*}
where $B^c_{\mbox{{\scriptsize max}}} = \max_{i \in \{1,2\}}\max\big(B^c_{\mbW_{\tilde{n}_i}},B^c_{\bbW_{\tilde{n}_i}}\big)$ and $\delta^c_{{\mbox{\scriptsize min}}} = \min_{i \in \{1,2\}}\min\big(\delta^c_{\bbW\mbW_{\tilde{n}_i}},\delta^c_{\mbW_{\tilde{n}_i}\bbW_{\tilde{n}_i}}\big)$.
\end{proposition}
\begin{proof}
Prop. \ref{thm:graph-graph-filter-stochastic} follows from Prop. \ref{thm:graphon-graph-filter-stochastic} and the triangle inequality.
\end{proof}

Graph filters are thus transferable across graphs in the same graphon family. 
Since stochastic graphs have random edges,  Prop. \ref{thm:graph-graph-filter-stochastic} can be applied to graphs of same size. If $n_1\neq n_2$, the transferability bound is slightly loose because it is a simplification of the sum of the graphon filter approximation bounds for $\bbG_{\tilde{n}_1}$ and $\bbG_{\tilde{n}_2}$. This simplification is however useful as it shows that the transferability error of graph filters is dominated by the graph with the largest ratio of node/edge stochasticity to graph size, typically the smallest graph. 

Transferability of graph filters is an important result because it means that we can design a filter for one graph and transfer it to another. This is possible even if the graphs have different sizes, which considerably simplifies the linear processing of data supported on large-scale graphs. 
In the following, we will show that graph filter transferability also enables large-scale graph machine learning via GNNs. This is because the transferability properties of graph filters are not only inherited, but also augmented by GNNs.

\begin{remark}[Graph filter transferability on template and weighted graphs]
The graph filter transferability result for stochastic graphs in Prop \ref{thm:graph-graph-filter-stochastic} also holds for weighted and template graphs. Simply set $\beta(n, \chi_3) = 0$ for template and weighted graphs and $\alpha(n,\chi_1)=1$ for template graphs. Tighter bounds can be achieved by combining the triangle inequality with Propositions \ref{lemma:graphon-graph-filter-template} and \ref{lemma:graphon-graph-filter-weighted} respectively, in the same way that Prop \ref{thm:graphon-graph-filter-stochastic} was used to show Prop. \ref{thm:graph-graph-filter-stochastic}.
\end{remark}

\section{Transferability of graph neural networks} \label{sec:gnn-transferability}




 
In the following, we discuss the transferability properties of GNNs. We begin by defining $\bbW$NNs in Sec. \ref{sbs:WNNs}. Then, in Sec. \ref{sbs:gnn-transf} we interpret $\bbW$NNs as generative models for GNNs, and prove Thm. \ref{thm:graphon-graph-nn}, which derives an upper bound for the transferability or approximation error between a $\bbW$NN and a GNN sampled from it on a generic graph sampled from the graphon.
The particularization of this theorem to stochastic graphs is stated in Prop. \ref{thm:graphon-graph-nn-stochastic}; finally, in Prop. \ref{thm:graph-graph-nn-stochastic}, we show how Prop. \ref{thm:graphon-graph-nn-stochastic} implies transferability of GNNs across stochastic graphs sampled from the same graphon.


\subsection{Graphon Neural Networks} \label{sbs:WNNs}

A graphon neural network ($\bbW$NN) is a deep convolutional architecture consisting of layers where each layer implements a convolutional filterbank followed by a pointwise nonlinearity \cite{ruiz20-transf}.
Consider layer $\ell$, which maps the incoming $F_{\ell-1}$ features from layer $\ell-1$ into $F_\ell$ features.
The first step in this layer is to process the features $X_{\ell-1}^g$, $1 \leq g \leq F_{\ell-1}$, with a convolutional filterbank to generate the $F_\ell$ intermediate linear features $U_{\ell}^f$,
\begin{equation} \label{eqn:wnn-int}
U^f_{\ell} = \sum_{g=1}^{F_{\ell-1}} T_{\bbH^{fg}_\ell} X^g_{\ell-1}
\end{equation}
where $1 \leq f \leq F_\ell$. Each intermediate feature $U_\ell^f$ is obtained by aggregating the outputs of $F_{\ell-1}$ filters like the one in \eqref{eqn:lsi-wf} with coefficients $\bbh^{fg}_\ell$.
Since there are $F_\ell$ such intermediate features, the filterbanks at each layer of the $\bbW$NN contain a total of $F_\ell \times F_{\ell-1}$ convolutional filters. 

The next step is to process the intermediate features $U_\ell^f$ with a pointwise nonlinearity, e.g., the ReLU. Denoting this nonlinearity $\sigma$, the $f$th feature of the $\ell$th layer is given by 
\begin{equation} \label{eqn:wnn-lay}
X^f_{\ell} = \sigma\left(U_\ell^f\right)
\end{equation}
for $1 \leq f \leq F_\ell$. Because the nonlinearity is pointwise, $X^f_\ell(u) = \sigma(U_\ell^f(u))$ for all $u \in [0,1]$. If the $\bbW$NN has $L$ layers, \eqref{eqn:wnn-int}--\eqref{eqn:wnn-lay} are repeated $L$ times.
The input features at the first layer, $X_0^g$, are the input data $X^g$ for $1 \leq g \leq F_0$, and the $\bbW$NN output is given by $Y^f = X_L^f$ for $1 \leq f \leq F_L$. 

Similarly to the graphon filter, a $\bbW$NN with inputs $X=\{X^g\}_{g=1}^{F_0}$ and outputs $Y = \{Y^f\}_{f=1}^{F_L}$ can be represented more compactly as the map $Y = \Phi(X; \ccalH, \bbW)$, where $\ccalH$ is a tensor grouping the coefficients $\bbh_\ell^{fg}$ for all features and all layers of the $\bbW$NN, i.e., $\ccalH = [\bbh_\ell^{fg}]_{\ell f g}$ for $1 \leq \ell \leq L$, $1 \leq f \leq F_{\ell}$ and $1 \leq g \leq F_{\ell-1}$. Comparing this map with the GNN map in \eqref{eqn:gcn_map}, we see that, except for the fact that their supports---a graphon and a graph respectively---are different, if the tensors $\ccalH$ are equal, these maps are the same. This allows interpreting $\bbW$NNs as generative models for GNNs where the graph $\bbG_n$ is instantiated from $\bbW$ as in Definitions \ref{def:template}, \ref{def:weighted} or \ref{def:stochastic}, and the graph signal $\bbx_n$ is instantiated from $X$ as in \eqref{eqn:template_graph_signal} or \eqref{eqn:weighted_graph_signal}. 

The interpretation of $\bbW$NNs as generative models for GNNs is important for two reasons. First, it allows designing one $\bbW$NN and instantiating as many GNNs as desired from it. I.e, it allows designing neural networks in the limit of very large graphs and transferring them to finite graphs without changes to the architecture. Second, it motivates analyzing the ability to transfer GNNs across graphs of same or different size, since a sequence of graphs instantiated from a graphon following any of Definitions \ref{def:template}, \ref{def:weighted} or \ref{def:stochastic} converges in probability \cite[Chapter 11]{lovasz2012large}.
 

To be able to compare $\bbW$NNs with GNNs, or two GNNs supported on graphs of different sizes, we will need to define $\bbW$NNs induced by GNNs. The $\bbW$NN induced by a GNN $\bby_n = \Phi(\bbx_n; \ccalH, \bbS_n/n)$ on the graph $\bbG_n$ is given by 
\begin{equation} \label{eqn:wnn_ind}
Y_n = \Phi(X_n; \ccalH, \bbW_n) 
\end{equation}
where the graphon $\bbW_n = \bbW_{\bbG_n}$ is the graphon induced by $\bbG_n$ \eqref{eqn:graphon_ind} and $X_n$ is the graphon signal induced by the graph signal $\bbx_n$ \eqref{eqn:graphon_signal_ind}. Note the normalization by $n$, which is necessary to have $Y_n(u) = \sum_{i=1}^n [\bby_n]\mbI(u \in I_i)$, see Lemma \ref{lemma:induced_filters}.


\subsection{Graph Neural Network Transferability} \label{sbs:gnn-transf}

Consider a $\bbW$NN with $L$ layers, $F_0=1$ input feature, $F_L = 1$ output feature, and $F_\ell = F$ features per layer for $1 \leq \ell \leq L-1$. 
Under a Lipschitz continuity assumption on the nonlinearity $\sigma$, this $\bbW$NN can be approximated by a GNN on a graph $\bbG_n$ sampled from the graphon $\bbW$.

\begin{assumption} \label{as5}
The activation functions are normalized Lipschitz, i.e., $|\sigma(x)-\sigma(y)| \leq |x-y|$, and $\sigma(0)=0$.
\end{assumption} 

\begin{theorem}[$\bbW$NN transferability to a generic graph] \label{thm:graphon-graph-nn}
Let $Y=\Phi (X; \ccalH, \bbW)$ be a $\bbW$NN with $L$ layers, $F_0=F_L=1$ input and output features and $F_\ell = F$, $1 \leq \ell < L$ [cf. \eqref{eqn:wnn-int}--\eqref{eqn:wnn-lay}]. Assume that the convolutional filters that make up its layers all satisfy assumption AS\ref{as2}. Given a generic graph $\bbG_n$ with GSO $\bbS_n$ sampled from $\bbW$, let $\bby_n = \Phi(\bbx_n;\ccalH,\bbS_n/n)$ be a GNN instantiated from $\Phi (X; \ccalH, \bbW)$ on this graph. For any $0 < c \leq 1$, it holds that
\begin{align} \label{eqn:thm-graphon-graph-nn}
\begin{split}
\|Y_{n}-Y\| &\leq LF^{L-1}\bigg(A_h + \frac{\pi B_{\bbW_n}^c}{\delta^c_{\bbW\bbW_n}}\bigg)\|\bbW-\bbW_n\|\|X\| \\
&+ (A_h c + 2)\|X-X_n\| + LF^{L-1}2a_h c\|X\|
\end{split}
\end{align}
where $\bbW_n := \bbW_{\bbG_n}$ is the graphon induced by $\bbG_n$ and $Y_{n} = \Phi (X_{n}; \ccalH, \bbW_n)$ is the $\bbW$NN induced by $\bby_n = \Phi (\bbx_n; \ccalH, \bbS_n/n)$ \eqref{eqn:wnn_ind}.
\end{theorem}
\begin{proof}
Refer to Appendix \ref{sec:appendixC} in the supplementary material.
\end{proof}

This theorem follows recursively from Thm. \ref{lemma:graphon-graph-filter} because, provided that Assumption AS\ref{as5} is met, $|\sigma(U_\ell + \Delta U_\ell) - \sigma (U_\ell)|$ can be upper bounded by $|\Delta U_\ell|$. Most common activation functions, e.g., the hyperbolic tangent and the ReLU, satisfy this assumption. Thus, we can bound the error incurred of approximating $\bbW$NN with a GNN by the transferability error of a cascade of $L$ graphon filterbanks.

Akin to the graphon filter approximation bound, the approximation bound in Thm. \ref{thm:graphon-graph-nn} has three terms: the first, which is controlled by the transferability constant and the distance between the graph and the graphon; the second, which is controlled by the distance between the graph signal and the graphon signal; and the third, which corrresponds to the non-transferable energy of the input signal. 
Because it arises from the sampling error of the input signal, the second error term is exactly the same as in Thm. \ref{lemma:graphon-graph-filter}, while the others differ by a scaling factor of $LF^{L-1}$. Hence, the deeper and the wider the $\bbW$NN, the more difficult it is to approximate it with a GNN.

In Prop. \ref{thm:graphon-graph-nn-stochastic}, we give a concrete example of how a $\bbW$NN may be approximated by a GNN on a graph sampled from the graphon. For brevity, we only particularize the approximation result of Thm. \ref{thm:graphon-graph-nn} to the more general case of the stochastic graphs in Definition \ref{def:stochastic}, but it is easy to write the same result for template and weighted graphs (see Remark \ref{remark2}).

\begin{proposition}[$\bbW$NN transferability to a stochastic graph] \label{thm:graphon-graph-nn-stochastic}
Let $Y=\Phi (X; \ccalH, \bbW)$ be a $\bbW$NN with $L$ layers, $F_0=F_L=1$ input and output features and $F_\ell = F$, $1 \leq \ell < L$ [cf. \eqref{eqn:wnn-int}--\eqref{eqn:wnn-lay}]. Assume that this $\bbW$NN satisfies assumptions AS\ref{as1}, AS\ref{as3}, and AS\ref{as5}, and that the convolutional filters that make up its layers all satisfy assumption AS\ref{as2}. Given a stochastic graph $\bbG_{\tilde{n}}$ with GSO $\bbS_{\tilde{n}}$ [cf. Definition \ref{def:stochastic}], let $\bby_{\tilde{n}} = \Phi(\bbx_{\tilde{n}};\ccalH,\bbS_{\tilde{n}}/n)$ be a GNN instantiated from $\Phi (X; \ccalH, \bbW)$ on this graph. Let $\chi_1, \chi_2, \chi_3 \in (0,0.3]$. For any $0 < c \leq 1$ and $n \geq 4/\chi_2$  satisfying assumption AS\ref{as4}, with probability at least $[1-2\chi_1]\times[1-\chi_2]\times[1-\chi_3]$ it holds that
\begin{align} \label{eq:thm3}
\begin{split}
\|Y_{\tilde{n}}&-Y\| \leq LF^{L-1}\bigg(A_h + \frac{\pi B_{\bbW_{\tilde{n}}}^c}{\delta^c_{\bbW\bbW_{\tilde{n}}}}\bigg) \\
&\times \frac{2(A_w\alpha (n,\chi_1)+\beta(n,\chi_3))}{n}\|X\| \\
&+ \frac{A_x\alpha (n,\chi_1)(a_h c+2)}{n} + LF^{L-1}4a_h c\|X\|
\end{split}
\end{align}
where $\bbW_{\tilde{n}} = \bbW_{\bbG_{\tilde{n}}}$ is the graphon induced by $\bbG_{\tilde{n}}$ and $Y_{\tilde{n}} = \Phi (X_{\tilde{n}}; \ccalH, \bbW_{\tilde{n}})$ is the $\bbW$NN induced by $\bby_{\tilde{n}} = \Phi (\bbx_{\tilde{n}}; \ccalH, \bbS_{\tilde{n}}/n)$ \eqref{eqn:wnn_ind}.
\end{proposition}
\begin{proof} \renewcommand{\qedsymbol}{}
The proof follows the same steps of the proof of Thm. \ref{thm:graphon-graph-nn} in Appendix \ref{sec:appendixC} of the supplementary material, using Prop. \ref{thm:graphon-graph-filter-stochastic} instead of Thm. \ref{lemma:graphon-graph-filter} in \eqref{eqn:step_change}.
\end{proof} 

From Prop. \ref{thm:graphon-graph-nn-stochastic}, it is ready to show that GNNs are transferable across stochastic graphs.

\begin{proposition}[GNN transferability across stochastic graphs]
\label{thm:graph-graph-nn-stochastic}
Let $Y=\Phi (X; \ccalH, \bbW)$ be a $\bbW$NN [cf. \eqref{eqn:gcn_map}] satisfying assumptions AS\ref{as1}, AS\ref{as3}, AS\ref{as5}, and such that the convolutional filters at all layers satisfy assumption AS\ref{as2}. Let $\bbG_{\tilde{n}_1}$ and $\bbG_{\tilde{n}_2}$, $n_1 \neq n_2$, be two stochastic graphs with GSOs $\bbS_{\tilde{n}_1}$ and $\bbS_{\tilde{n}_2}$ respectively [cf. Definition \ref{def:stochastic}], and let $\bby_{\tilde{n}_1} = \Phi(\bbx_{\tilde{n}_1};\ccalH,\bbS_{\tilde{n}_1}/n)$ and $\bby_{\tilde{n}_2} = \Phi(\bbx_{\tilde{n}_2};\ccalH,\bbS_{\tilde{n}_2}/n)$ be GNNs instantiated from $\Phi (X; \ccalH, \bbW)$ on these graphs.  Let $\chi_1, \chi_2, \chi_3 \in (0,0.3]$. For any $0 < c \leq 1$ and $n_1, n_2 \geq 4/\chi_2$ satisfying assumption AS\ref{as4}, with probability at least $[1-2\chi_1]^2\times[1-\chi_2]^2\times[1-\chi_3]^2$ it holds that
\begin{align*}
\begin{split}
\|Y_{\tilde{n}_1}&-Y_{\tilde{n}_2}\|\leq 4LF^{L-1}\bigg(A_h + \frac{\pi B^c_{\mbox{{\scriptsize max}}}}{\delta^c_{{\mbox{\scriptsize min}}}}\bigg)\\
&\times \max_{i \in \{1,2\}}\frac{A_w\alpha(n_{i},\chi_1) + {\beta(n_{i},\chi_3)}}{n_{i}}\|X\|\\
&+2A_x(a_h c+2)\max_{i \in \{1,2\}}\frac{\alpha(n_{i},\chi_1)}{n_{i}} + 8LF^{L-1}a_h c\|X\|
\end{split}
\end{align*}
where $B^c_{\mbox{{\scriptsize max}}} = \max_{i \in \{1,2\}}\max\big(B^c_{\mbW_{\tilde{n}_i}},B^c_{\bbW_{\tilde{n}_i}}\big)$ and $\delta^c_{{\mbox{\scriptsize min}}} = \min_{i \in \{1,2\}}\min\big(\delta^c_{\bbW\mbW_{\tilde{n}_i}},\delta^c_{\mbW_{\tilde{n}_i}\bbW_{\tilde{n}_i}}\big)$.
\end{proposition}
\begin{proof}
Prop. \ref{thm:graph-graph-nn-stochastic} follows directly from Prop. \ref{thm:graphon-graph-nn-stochastic} and the triangle inequality.
\end{proof}

When $n_1=n_2$, the GNN transferability bound in Prop. \ref{thm:graph-graph-nn-stochastic} is approximately twice the $\bbW$NN approximation bound in Prop. \ref{thm:graphon-graph-nn-stochastic}. When $n_1 \neq n_2$, this bound can be improved by explicitly writing the sum of \eqref{eq:thm3} for $\bbG_{\tilde{n}_1}$ and  $\bbG_{\tilde{n}_2}$; we take the maximum to highlight that the error is dominated by the inverse of the size of smallest graph. Although Prop. \ref{thm:graph-graph-nn-stochastic} is explicitly written for stochastic graphs, it also holds for weighted and template graphs associated with the same graphon; see Remark \ref{remark2}.

The main implication of Prop. \ref{thm:graph-graph-nn-stochastic} is that a GNN can be trained on one graph to be transferred to another graph, which is especially helpful when the graph on which we want to execute the GNN is large and training on it is costly. In these cases, we can use the bound in Prop. \ref{thm:graph-graph-nn-stochastic} to determine the minimum size of the graph on which the GNN should be trained to meet a given error allowance. The transferability property of the GNN thus makes it a suitable model for  machine learning on large-scale graphs.

\begin{remark} \label{remark2}
The GNN transferability result derived for stochastic graphs in Prop. \ref{thm:graph-graph-nn-stochastic} holds for weighted graphs by setting $\beta(n, \chi_3) = 0$ for weighted and template graphs and $\alpha(n,\chi_1)=1$ for the latter. However, the resulting bounds are slightly loose. For tighter bounds, combine the triangle inequality with Props. \ref{lemma:graphon-graph-filter-template} and \ref{lemma:graphon-graph-filter-weighted} respectively to obtain the graph filter transferability bounds for template and stochastic graphs. Then, follow the same proof steps used to show Thm. \ref{thm:graphon-graph-nn} from Thm. \ref{lemma:graphon-graph-filter} in Appendix \ref{sec:appendixC} of the supplementary material. The graph filter transferability result derived for stochastic graphs in Prop. \ref{thm:graph-graph-filter-stochastic} also holds for weighted and template graphs. Simply set $\beta(n, \chi_3) = 0$ for both types of graphs and $\alpha(n,\chi_1)=1$ for template graphs. Tighter bounds can be achieved by combining the triangle inequality with Props. \ref{lemma:graphon-graph-filter-template} and \ref{lemma:graphon-graph-filter-weighted} respectively, in the same way that Prop. \ref{thm:graphon-graph-filter-stochastic} was used to show Prop. \ref{thm:graph-graph-filter-stochastic}.
\end{remark}

\subsection{Discussion} \label{sbs:discussion}

\myparagraph{Non-transferable energy, filters with constant band, and asymptotics.}
In order to be transferable, filters and GNNs have to be able to ``match'' the eigenvalues of the source graph with those of the target graph so that the amplifications of the output signal's spectral components match on both graphs. This is possible because, as illustrated in Fig. \ref{fig:eigenvalues_and_filter}, as $n \to \infty$ the eigenvalues of a graph $\bbG_n$ sampled from a graphon $\bbW$ converge to the graphon eigenvalues \cite[Chapter 11.6]{lovasz2012large}. Hence, for large enough $\bbG_{n_1}$ and $\bbG_{n_2}$, their eigenvalues are close. However, because the graphon eigenvalues accumulate near zero, for small eigenvalues (i.e., for $\lambda_j$ such that $|j| \to \infty$), this matching becomes very hard. This is due to the fact that the distance between the eigenvalues $\lambda_j(\bbG_{n_1})$ and $\lambda_j(\bbG_{n_2})$ might be larger than the distance between consecutive eigenvalues in this range. 
To mitigate this problem, as shown in Fig. \ref{fig:eigenvalues_and_filter} we restrict the variability of the filters to $a_h < A_h$ below a certain threshold $c$ [cf. Assumption AS\ref{as4}]. This ensures that the amplifications of the spectral components below $c$ won't be too different, and the fact that they cannot be discriminated less problematic. Still, because $a_h$ is nonzero, the transference of these spectral components will incur in an error---the third term of the transferability bound in Thms. \ref{lemma:graphon-graph-filter} and \ref{thm:graphon-graph-nn}. We refer to the energy of this error as the \textit{non-transferable energy}, since it stems from spectral components which cannot be transferred from graph to graph. 

Due to $a_h$ being small, the non-transferable energy does not contribute much to the transferability error and is dominated by the part of the bound that depends on the transferability constant. However, this is largely dependent on the value of $c$. Reducing the value of $c$ reduces the contribution of the non-transferable energy to the transferability bound. But decreasing $c$ also has the effect of increasing the transferability constant both through $B^c_{\mbox{\tiny max}}$, because a lower value of $c$ results in a larger number of eigenvalues in $[-1,-c] \cup [c,1]$; and through $\delta^c_{\mbox{\tiny min}}$, because as $c$ approaches zero so does the margin between consecutive eigenvalues, which is the limit of $\delta^c_{\mbox{\tiny min}}$ as $n \to \infty$. 

The non-transferable energy is so called because, for fixed $c$, it is a constant term in the transferability bound which does not decrease with the size of the graphs. To avoid non-transferable spectral components, the graph filter (or, respectively, the graph filters of the GNN) would need to have a constant frequency response for $|\lambda| < c$ or, equivalently, $a_h =0$. However, such filters are undesirable in practice because since they are not analytic they cannot be written in convolutional form \eqref{eqn:graph_convolution}.  
Nonetheless, in the limit it is possible to show that the difference between the outputs of Lipschitz continuous graph convolutions supported on $\bbG_{n_1}$ and $\bbG_{n_2}$ converging to the same graphon (and therefore of GNNs constructed with such convolutions) vanishes as $n_1,n_2\to\infty$. This convergence result, which can be seen as an asymptotic transferability result, is proved in \cite[Thm. 4]{ruiz2020graphonsp}.

\myparagraph{Transferability-discriminability tradeoff and the effect of nonlinearities.} In both the graph filter transferability result (Thm. \ref{lemma:graphon-graph-filter}) and its GNN counterpart (Thm. \ref{thm:graphon-graph-nn}), the transferability constant (i.e., the first term of the bound) depends on the parameters $B^c_{\mbox{\tiny max}}$ and $\delta^c_{\mbox{\tiny min}}$ of the graph convolutional filters, which in turn depend on the value of $c$. The parameter $B^c_{\mbox{\tiny max}}$ is the maximum $c$-band cardinality of a graphon [cf. Definition \ref{def:c_band_card}], which counts the number of graphon eigenvalues larger than $c$. The parameter $\delta^c_{\mbox{\tiny min}}$ is the minimum $c$-eigenvalue margin between two graphons [cf. Definition \ref{def:c_eig_margin}], which measures the minimum distance between eigenvalues of these two graphons with consecutive indices where one is smaller and the other is larger than $c$. 
Since the eigenvalues of the limit graphon accumulate near zero, if $c$ is large (i.e., close to one), $B^c_{\mbox{\tiny max}}$ is small---because there are less eigenvalues in the $[-1,-c] \cup [c,1]$ interval---and $\delta^c_{\mbox{\tiny min}}$ is large---because the further away eigenvalues are from zero, the larger the distance between two consecutive eigenvalues. This leads to a smaller transferability constant, decreasing the transferability error bound. On the other hand, larger values of $c$ also reduce the model's discriminative power (or discriminability) because they decrease the length of the interval $[-1,-c]\cup[c,1]$ where the filter has full variability (i.e., $A_h$). Hence, the bounds in Thms. \ref{lemma:graphon-graph-filter} and \ref{thm:graphon-graph-nn} exhibit a tradeoff between transferability and discriminability for both graph filters and GNNs. The value of the Lipschitz constant $A_h$ also plays a role in this trade-off, as higher values of $A_h$ lead to more discriminability but increase the transferability bound. Note that while for finite graphs this tradeoff arises from an analysis of the error bounds in Thms. \ref{lemma:graphon-graph-filter} and \ref{thm:graphon-graph-nn} (and not of the transferability error per se, whose specific form we do not know), this tradeoff is not an artifact of the specific form of the bound, as it also arises in the asymptotic convergence result from \cite{ruiz2020graphonsp}.

In the case of GNNs, we conjecture that the transferability-discriminability tradeoff is better than in graph filters because of the addition of nonlinearities. This is because nonlinearities have an effect akin to rectifiers, effectively \textit{scattering} some of the spectral components associated with small eigenvalues to the middle range of the spectrum where they can then be discriminated by the filters in the next layer. Indeed, we will observe this empirically in the experiments of Sec. \ref{sec:sims}: for the same level of discriminability (i.e., comparable performances on the graphs on which they are trained), GNNs exhibit a lower transferability error than graph filters. 
It is hypothesized that nonlinearities play a similar role in the stability of GNNs, but in that case, it is the scattering of the components associated with large eigenvalues that improves stability \cite{gama2019stability}.

\myparagraph{Graph as a design parameter.} While to prove transferability we focus on the limit object interpretation of $\bbW$NNs, their interpretation as generative models is also valuable because it allows looking at the graph as a tunable parameter of the GNN. I.e., instead of considering the graph to be a fixed hyperparameter, we could think of it it as a learnable parameter of the architecture like the weights $\ccalH$. The interpretation of the graph as a design parameter motivates a number of interesting research directions, e.g., building more general GNN architectures (i.e., with larger degrees of freedom); designing adversarial graph perturbations for GNNs; and drawing deeper connections between GNNs and transformers \cite{vaswani2017attention}, which can be thought of as GNN architectures where the graph is learned.


\section{Numerical Experiments} \label{sec:sims}



We illustrate the transferability properties of graph filters and GNNs in two applications: movie recommendation on a movie similarity network, and flocking via decentralized robot control. All architectures are trained using ADAM with learning rate $5\times 10^{-4}$ and forgetting factors $0.9$ and $0.999$.

\begin{figure*}[t]
\centering
\begin{subfigure}{0.3\textwidth} 
\centering
\includegraphics[height=4.5cm]{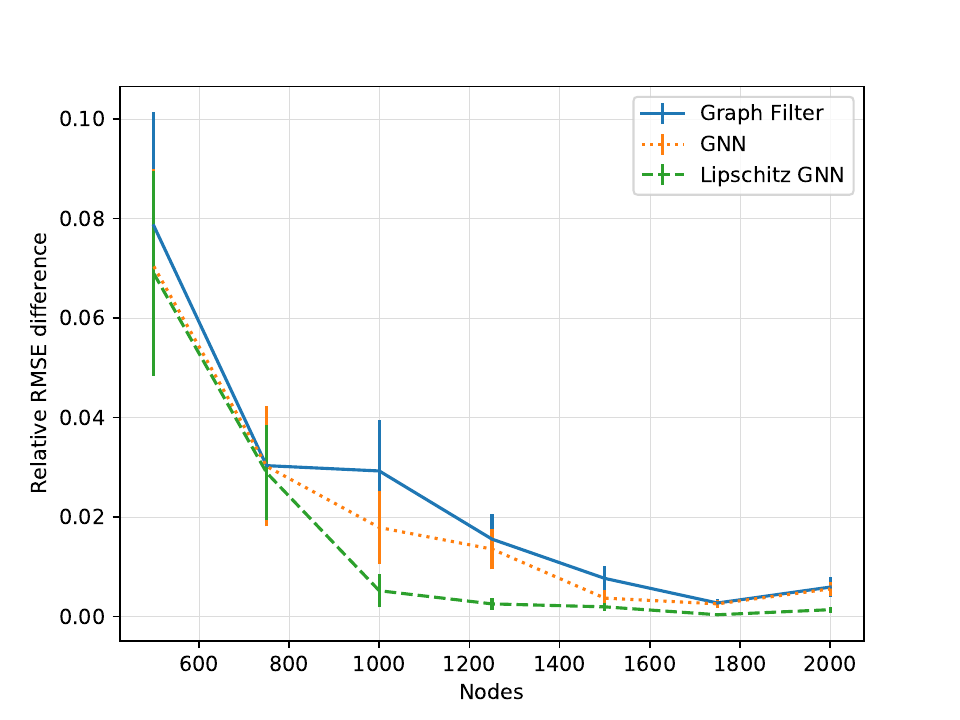}
\caption{}
\label{fig:movie}
\end{subfigure}
\begin{subfigure}{0.3\textwidth}
\centering
\includegraphics[height=4.5cm]{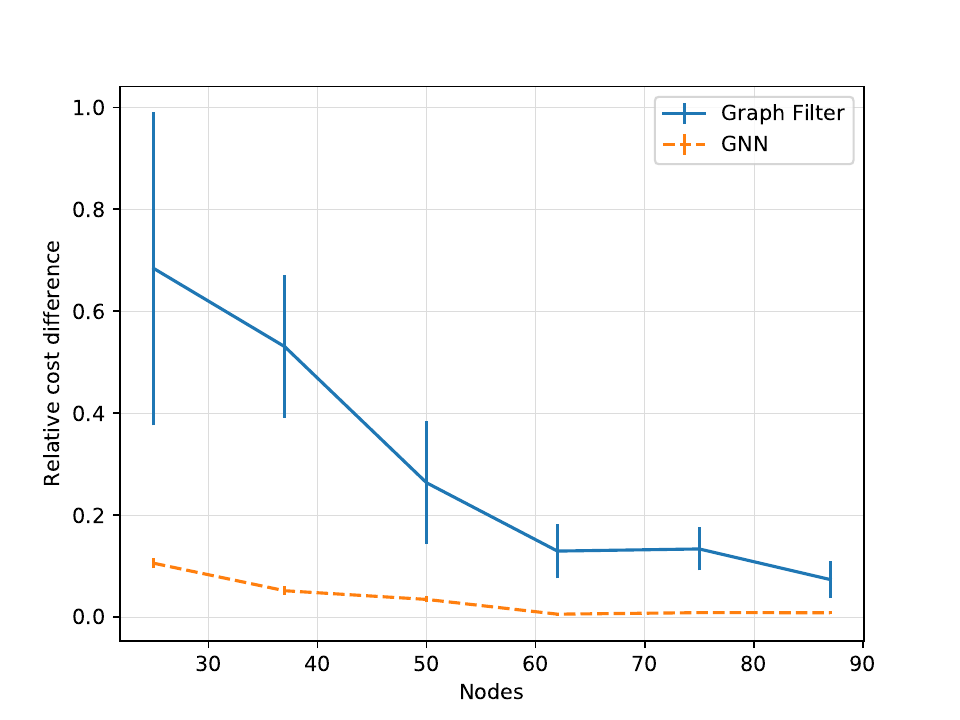}
\caption{}
\label{fig:flocking1}
\end{subfigure}
\begin{subfigure}{0.3\textwidth}
\centering
\includegraphics[height=4.5cm]{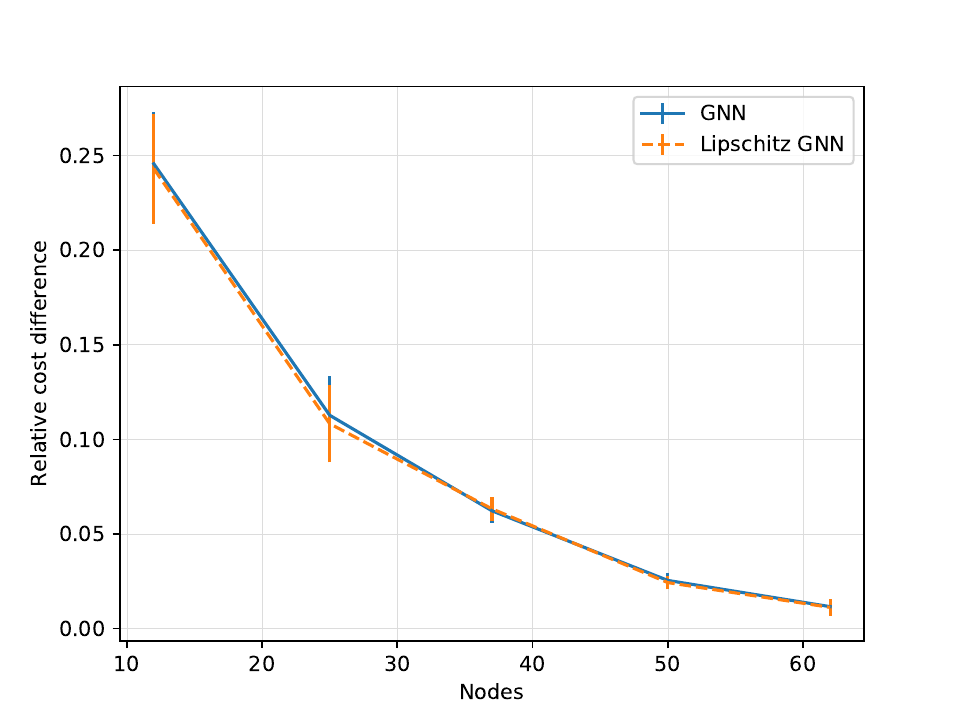}
\caption{}
\label{fig:flocking2}
\end{subfigure}
\caption{\subref{fig:movie} Difference between the RMSEs achieved on the training network and on the full $3416$-movie network for the graph filter, the GNN, and the GNN trained by penalizing convolutions with high Lipschitz constant.
The errors achieved by training on the full movie network are $0.81\pm 0.04$ for the graph filter and $0.81\pm 0.04$ for the GNN.
\subref{fig:flocking1} Difference between the control cost achieved on the training network and on the full $100$-agent network, relative to the cost on the training network, for a graph filter and a GNN. 
\subref{fig:flocking2} Difference between the control cost achieved on the training network and on the full $100$-agent network, relative to the cost on the original network, for a GNN and a GNN trained by penalizing convolutions with high Lipschitz constant. 
The costs achieved by training on the $100$ agent network are $19.31\pm 20.66$ for the graph filter and $1.46 \pm 0.01$ for the GNN. The error bars in \subref{fig:movie}--\subref{fig:flocking2} are scaled by $0.5$.}
\label{fig:experiments}
\end{figure*}

\subsection{Movie Recommendation}

We consider the MovieLens-1M dataset \cite{harper16-movielens}, which consists of one million ratings given by $6000$ users to $4000$ movies. Each rating is a score between 1 and 5 and higher scores indicate higher preference. Specifically, we aim to predict the ratings given by different users to the movie ``Star Wars: Episode IV - A New Hope'', which has a total of $2991$ ratings. To do so, we define a movie similarity network, where each node is a movie and each edge is the similarity between two movies. We restrict attention to movies with at least $5$ ratings, which brings the number of nodes of the full movie similarity network down to $3416$.  We compute this network by defining a training set with 90\% of the users, and by defining the similarity between two movies as the correlation between the ratings given by users in the training set to these movies. 
Each user corresponds to a graph signal. At each node, the value of a signal is the rating given by the user to the corresponding movie, or zero if a rating is not available. To train graph filters and GNNs in a supervised manner, we constructed input-output pairs where, for each user, the output is the the rating to `Star Wars'' and the input is the user's graph signal with the rating to this movie zeroed out.

The architectures we consider are a $1$-layer graph filter and a $1$-layer GNN. Both have $F_0=1$ input features, $F_1=64$ features in the first layer, and $K=5$ filter taps. The nonlinearity of the GNN is the ReLU and both the graph filter and the GNN are followed by a readout layer which outputs $1$ feature, the rating.
These architectures are trained by minimizing the MSE at the node corresponding to the movie ``Star Wars''. We set aside 10\% of the training samples for validation and train over $30$ epochs with batch size $16$. Performance is measured by recording the root mean squared error (RMSE) achieved by each architecture on the test set. To analyze transferability, we train the graph filter and the GNN on subnetworks with $n=500,750,1000,1250,1500,1750,2000$ movies, and plot the difference between the RMSE they achieve on the subnetwork of size $n$ and on the full $3416$-movie network, relative to the former, in Figure \ref{fig:movie}.

In Figure \ref{fig:movie}, we observe that the RMSE difference decreases with the size of the movie subnetwork for both the graph filter and the GNN as predicted by Thms. \ref{lemma:graphon-graph-filter} and \ref{thm:graphon-graph-nn}.
Although the GNN (orange) achieves a lower transferability error than the graph filter (blue) for most values of $n$, the difference is very small. We hypothesize that this is because the graph convolutions being learned in the GNN have high variability. To test this hypothesis, we retrain the GNN by minimizing the penalized MSE loss with penalization factor given by the maximum Lispchitz constant of the filters of the GNN scaled by a penalty multiplier. The transferability error of this GNN is shown in green. The Lipschitz GNN is more transferable than the graph filter and the GNN trained without regularization, corroborating that GNNs are more transferable than graph filters, and  illustrating the transferability-discriminability tradeoff of Thm. \ref{thm:graphon-graph-nn}.

\subsection{Flocking via Decentralized Robot Control}

Decentralized control problems consist of a team of $n$ agents which must accomplish a shared goal. 
Each agent has access to local states $\bbx_i$ and generates local control actions $\bba_i$. In order to learn which actions to perform, agents exchange information across pairwise communication links determined by their geographical proximity, which defines an agent proximity network $\bbG_n$. Because communication incurs in delays, if agents $i$ and $j$ are $k$ hops away from one another in $\bbG_n$, at time $t$ $i$ only has access to the delayed state $\bbx_j(t-k)$. We define the information history of agent $i$ as \cite{gama2021graph}
\begin{equation} \label{eqn:partialInformation}
    \ccalX_{i}(t) 
       = \bigcup_{k=0}^{K-1} 
          \Big\{ 
             \bbx_{j}(t-k) 
                : j \in \ccalN_{i}^{k}(t) 
                   \Big\} \text{.}
\end{equation}
This emphasizes that at time $t$ agent $i$ only knows its current state $\bbx_i(t)$ and the states of its $k$-hop neighbors $j \in \ccalN_i^k$ at time $t-k$. 

The information history in \eqref{eqn:partialInformation} allows defining a decentralized control scheme in which the actions $\bba_i(t)$ can be calculated as functions of the history $\ccalX_{i}(t)$. We can then use graph filters and GNNs to parametrize these functions by incorporating the delayed information structure into \eqref{eqn:graph_convolution}. Explicitly, to account for communication delays we rewrite the terms $\bbS^k\bbx$ in \eqref{eqn:graph_convolution} as $\prod_{\kappa=1}^{k}\bbS(t-\kappa)\bbx(t-k)$.
The convolutional filters in the GNN \eqref{eqn:gcn_layer1}--\eqref{eqn:gcn_layer2} are modified in the same way.

In the flocking problem \cite{tolstaya2020learning}, the shared goal is for all the agents to move with the same velocity while avoiding collisions. The states $\bbx_{i}(t) \in \reals^{6}$ are given by \cite{gama2021graph}
\begin{align} \label{eqn:flockingState}
   \bbx^T_{i}(t) 
      = \Bigg[ 
           \sum_{j \in \ccalN_{i}} 
              \bbv_{ij}(t) ;
                 \sum_{j \in \ccalN_{i}(t)} 
                    \frac{\bbr_{ij}}
                       {\| \bbr_{ij}(t)\|^{4}} ; 
                          \sum_{j \in \ccalN_{i}} 
                             \frac{\bbr_{ij}(t)}
                                {\| \bbr_{ij}(t)\|^{2}}
                                   \Bigg]
\end{align}
where $\bbr_{ij}(t)$ are the positions and $\bbv_{ij}(t)$ the velocities of agent $j$ measured relative to the positions and velocities of agent $i$ respectively. The neighborhood $\ccalN_i$ consists of nodes $j$ such that $\|\bbr_{ij}\|\leq R$, i.e., which are within a communication and sensing radius of length $R$ from $i$. We consider $R=2$. At $t=0$, the agents' positions and velocities are initialized at random. The actions $\bba_i(t)$ to predict are the agents' accelerations.

While a centralized solution to the flocking problem is straightforward---it suffices to direct all the agents to move in the same direction---the optimal decentralized controller is unknown. Hence, we train decentralized graph filters and GNNs following the information structure in \eqref{eqn:partialInformation} to ``imitate'' the centralized controller. We consider 2-layer graph filters and a GNNs with $F_0=6$ input features, $F_1=64$ features in the first layer, and $F_2=32$ features in the second layer. At all layers, the convolutional filters have $K=3$ filter taps. The readout layer outputs 2 features per node corresponding to the agents' accelerations in the $x$ and $y$ directions. The nonlinearity used in the GNN is the hyperbolic tangent.

Both architectures are trained by minimizing the mean squared error (MSE) over $400$ training trajectories of duration equal to $100$ steps. We train for $30$ epochs using minibatches of size $20$, and validate the model over $20$ validation trajectories every $5$ training steps. Performance is measured by recording the cost of the decentralized controller, given by the sum of the deviations of each agent's velocity from the mean, relative to the cost of the centralized controller. The test set consists of $20$ trajectories with the same duration.

In decentralized control problems, transferability is important because, since we are training to imitate a centralized controller, the learning architecture has to be trained offline. Hence, the networks observed during training are different than those observed during execution, and typically smaller, because the cost of training graph filters and GNNs can be prohibitive for large graphs.
To assess whether the policies learned with graph filters and GNNs are transferable in the flocking problem, we perform the following experiment. We train the models on networks of size $n=25, 37, 50, 75, 87$. Then, we test them on both the original network and on a network with $n=100$ agents, and record the difference between the cost achieved on the original network and on the $100$-agent network relative to the cost on the original network. The results of this experiment for $10$ random realizations of the dataset are shown in Fig. \ref{fig:flocking1}.

In Fig. \ref{fig:flocking1}, we observe that the difference between the outputs of the graph filter and the GNN on these networks both decrease as the number of nodes increases. This is consistent with the asymptotic behavior of the transferability bounds in Thms. \ref{lemma:graphon-graph-filter} and \ref{thm:graphon-graph-nn}, which decrease with $n$. We also observe that, for fixed $n$, the relative cost difference of the GNN is smaller than the relative cost difference of the graph filter. This evidences that GNNs are more transferable than graph filters as discussed in Sec. \ref{sec:gnn-transferability}.

We further validate our results by comparing the 2-layer GNN with a GNN with same architecture but trained by penalizing filters with large Lipschitz constant. In this case, we minimize the sum of the MSE and of a penalization term given by the largest Lipschitz constant across all filters multiplied by a penalization factor. The results are shown in Fig. \ref{fig:flocking2}. Note that the models were only trained in networks with up to $n=62$ agents due to memory constraints. Although as expected the Lipschitz GNN is more transferable than the GNN, the difference is very small. This can be interpreted to mean that, in flocking, the optimal GNN filters have low variability regardless, which makes sense considering that flocking is a type of consensus task.

\subsection{Discussion}

The framework developed in this paper assumes the existence of an underlying random graph model in the form of a graphon. Two questions that arise are then whether the graphon is a suitable model in a given application and, if so, \textit{which} graphon. 
In the movie recommendation problem, the movie similarity graph is clearly sampled from a graphon. We do not know the closed-form expression of this graphon, but we do know that it is the limit of a sequence of correlation graphs. In practice, however, we do not need to know the graphon. The training graphs generated by sampling nodes uniformly at random from the large target graph are valid samples from this graphon.

In the decentralized robot control problem, the communication graphs are clearly \textit{not} sampled from a graphon, as their degree does not necessarily scale with the graph size. In this case, a better continuous model is perhaps the manifold, as we can define manifolds embedded in $d$-dimensional spaces and manifold graph models allow sampling sparser graphs. Still, even if in the flocking problem the graphon is not the most suitable limit object, we do observe a transferability behavior that agrees with our theoretical results. This suggests that transferability is a more global property of GNNs, not necessarily restricted to graphs sampled from graphons.

\section{Conclusions} \label{sec:conclusions}



In this paper, we defined graphon convolutions and $\bbW$NNs, and showed that graph filters and GNNs sampled from them on stochastic and weighted graphs can be used to approximate graphon convolutions and $\bbW$NNs. Building upon this result, we then showed that graph filters and GNNs are transferable across weighted and stochastic graphs. The transferability error decreases with the size of the graphs, however, some spectral components are not transferable even when the graphs are large, which is a consequence of the fact that the graphon eigenvalues accumulate near zero. Furthermore, our transferability results reveal a tradeoff between the transferability and the spectral discriminability of graph convolutions, which is inherited by GNNs. In practice, however, in GNNs this tradeoff is alleviated by the pointwise nonlinearities. These findings were corroborated empirically in the problems of movie recommendation and decentralized robot control.

A limitation of our analysis is that graphons are only good models for limits of dense graphs. Therefore, a future research direction is to extend the transferability analysis of this paper to sparser graph limits, such as manifolds \cite{wang2021stability} and $L^p$ graphons \cite{borgs2019lp}. Another limitation of our work is that it does not address scenarios where the graph on which the GNN should be trained to meet a specific error allowance is still too large. To address this problem, we will leverage the transferability results in this paper to design training algorithms yielding more transferable GNNs \cite{cervino2021increase}. A third relevant research direction is the empirical evaluation of the tightness of the graph filter and GNN transferability error bounds. This is important for practical purposes, to inform the choice of the training graph size, as well as for theoretical purposes, to motivate attempts at tighter results.

\bibliographystyle{IEEEtran}
\bibliography{myIEEEabrv,bib-transferability}

\appendices

\section{Proof of Thm. 1} \label{sec:appendixA}

To prove Thm. \ref{lemma:graphon-graph-filter}, we will need the following propositions.

\begin{proposition} \label{prop:eigenvalue_diff}
Let $\bbW:[0,1]^2\to[0,1]$ and $\bbW':[0,1]^2\to[0,1]$ be two graphons with eigenvalues given by $\{\lambda_i(T_\bbW)\}_{i\in\mbZ\setminus\{0\}}$ and $\{\lambda_i(T_{\bbW'})\}_{i\in\mbZ\setminus\{0\}}$, ordered according to their sign and in decreasing order of absolute value. Then, for all $i \in \mbZ \setminus \{0\}$, the following inequalities hold
\begin{equation*}
|\lambda_i(T_{\bbW'})-\lambda_i(T_\bbW)| \leq \vertiii{T_{\bbW'-\bbW}} \leq \|\bbW'-\bbW\|\ .
\end{equation*}
\end{proposition}
\begin{proof} \renewcommand{\qedsymbol}{}
See \cite[Prop. 4]{ruiz20-transf}.
\end{proof}

\begin{proposition}\label{thm:davis_kahan}
Let $T$ and $T^\prime$ be two self-adjoint operators on a separable Hilbert space $\ccalH$ whose spectra are partitioned as $\gamma \cup \Gamma$ and $\omega \cup \Omega$ respectively, with $\gamma \cap \Gamma = \emptyset$ and $\omega \cap \Omega = \emptyset$. If there exists $d > 0$ such that $\min_{x \in \gamma,\, y \in \Omega} |{x - y}| \geq d$ and $\min_{x \in \omega,\, y \in \Gamma}|{x - y}| \geq d$, then the spectral projections $E_T(\gamma)$ and $E_{T^\prime}(\omega)$ satisfy
\begin{equation*}\label{eqn:davis_kahan}
	\vertiii{E_T(\gamma) - E_{T^\prime}(\omega)} \leq \frac{\pi}{2} \frac{\vertiii{{T - T^\prime}}}{d}
\end{equation*}
\end{proposition}
\begin{proof} \renewcommand{\qedsymbol}{}
See \cite{seelmann2014notes}.
\end{proof}

To prove Lemma \ref{lemma:graphon-graph-filter}, we fix a constant $0 < c < 1$ and decompose the filter $h$ into filters $h^{\geq c}$ and $h^{< c}$ defined as
\begin{align}
&h^{\geq c}(\lambda) \begin{cases} 0 &\mbox{if } |\lambda| < c \\
h(\lambda)-h(c) & \mbox{if } \lambda \geq c \\
h(\lambda)-h(-c) & \mbox{if } \lambda \leq -c \end{cases} \quad \mbox{and}\\
&h^{<c}(\lambda) \begin{cases} h(\lambda) &\mbox{if } |\lambda| < c \\
h(c) & \mbox{if } \lambda \geq c \\
h(-c) & \mbox{if } \lambda \leq -c \end{cases}
\end{align}
so that $h= h^{\geq c} + h^{<c}$. Note that both functions have the same Lipschitz constants as $h$. We start by analyzing the transferability of $h^{\geq c}$.

Let $T_{\bbH}^{\geq c}$ and $T_{\bbH_n}^{\geq c}$ be graphon filters with filter function $h^{\geq c}$ on the graphons $\bbW$ and $\bbW_n$ respectively. Using the triangle inequality, we can write the norm difference $\|T_{\bbH}^{\geq c}X-T_{\bbH_n}^{\geq c}X_n\|$ as
\begin{align*}
\begin{split}
\left\|T^{\geq c}_\bbH X-T^{\geq c}_{\bbH_n}X_n\right\| &= \left\|T^{\geq c}_\bbH X +T^{\geq c}_{\bbH_n} X - T^{\geq c}_{\bbH_n}X -T^{\geq c}_{\bbH_n}X_n\right\| \\
&\leq \left\|T^{\geq c}_\bbH X-T^{\geq c}_{\bbH_n} X\right\| \hspace{1mm} \mbox{\textbf{(1)}} \\
&+ \left\|T^{\geq c}_{\bbH_n}\left(X-X_n\right)\right\| \hspace{2.6mm} \mbox{\textbf{(2)}}
\end{split}
\end{align*}
where the LHS is split between terms \textbf{(1)} and \textbf{(2)}. 

Writing the inner products $\int_0^1 X(u) \varphi_i(u) du$ and $\int_0^1 X(u) \varphi^n_i(u) du$ as $\hat{X}(\lambda_i)$ and $\hat{X}(\lambda^n_i)$ for simplicity, we can then express \textbf{(1)} as
\begin{align*}
\begin{split}
\big\|T^{\geq c}_\bbH &X-T^{\geq c}_{\bbH_n} X\big\| \\
&=\Bigg\|\sum_{i} h^{\geq c}(\lambda_i)\hat{X}(\lambda_i) \varphi_i - \sum_i h^{\geq c}(\lambda_i^n)\hat{X}(\lambda^n_i)\varphi_i^n\Bigg\| \\
&=\left\|\sum_{i} h^{\geq c}(\lambda_i)\hat{X}(\lambda_i) \varphi_i -  h^{\geq c}(\lambda_i^n)\hat{X}(\lambda^n_i)\varphi_i^n\right\| . 
\end{split}
\end{align*}
Using the triangle inequality, this becomes
\begin{align*}
\begin{split}
\Big\|T^{\geq c}_\bbH &X-T^{\geq c}_{\bbH_n} X\Big\| \\
&= \Bigg\|\sum_{i} h^{\geq c}(\lambda_i)\hat{X}(\lambda_i) \varphi_i + h^{\geq c}(\lambda_i^n)\hat{X}(\lambda_i)\varphi_i \\
&\quad - h^{\geq c}(\lambda_i^n)\hat{X}(\lambda_i)\varphi_i - h^{\geq c}(\lambda_i^n)\hat{X}(\lambda_i^n)\varphi_i^n \Bigg\| \\
&\leq \left\|\sum_{i} \left(h^{\geq c}(\lambda_i)-h^{\geq c}(\lambda_i^n)\right)\hat{X}(\lambda_i) \varphi_i \right\| \hspace{3.8mm} \mbox{  \textbf{(1.1)}} \\
&+ \left\|\sum_{i} h^{\geq c}(\lambda_i^n) \left(\hat{X}(\lambda_i) \varphi_i - \hat{X}(\lambda^n_i)\varphi_i^n \right) \right\| \hspace{1mm} \mbox{  \textbf{(1.2)}}
\end{split}
\end{align*}
where we have now split \textbf{(1)} between \textbf{(1.1)} and \textbf{(1.2)}. 

Focusing on \textbf{(1.1)}, note that the filter's Lipschitz property allows writing $|h(\lambda_i)-h^{\geq c}(\lambda_i^n)| \leq A_h |\lambda_i-\lambda_i^n|$. 
Hence, using Prop. \ref{prop:eigenvalue_diff} together with the Cauchy-Schwarz inequality, we get
\begin{align} \label{eqn:proofpart1}
\begin{split}
\Bigg\|\sum_{i} \Big(h^{\geq c}(\lambda_i)&-h^{\geq c}(\lambda_i^n)\Big)\hat{X}(\lambda_i) \varphi_i \Bigg\| \\
&\leq A_h \|\bbW-\bbW_n\|\left\|\sum_{i} \hat{X}(\lambda_i) \varphi_i\right\| \\
&= A_h \|\bbW-\bbW_n\|\|X\|\ .
\end{split}
\end{align}

For \textbf{(1.2)}, we use the triangle and Cauchy-Schwarz inequalities to write
\begin{align*}
\begin{split}
\Bigg\|\sum_{i} h^{\geq c}(\lambda_i^n) &\left(\hat{X}(\lambda_i) \varphi_i - \hat{X}(\lambda^n_i)\varphi_i^n \right) \Bigg\| \\
&= \Bigg\|\sum_{i} h^{\geq c}(\lambda_i^n) \bigg(\hat{X}(\lambda_i) \varphi_i + \hat{X}(\lambda_i) \varphi_i^n \\
&\quad \quad \quad \quad \quad - \hat{X}(\lambda_i) \varphi_i^n - \hat{X}(\lambda_i^n)\varphi_i^n \bigg) \Bigg\| \\
&\leq \left\|\sum_{i} h^{\geq c}(\lambda_i^n) \hat{X}(\lambda_i) (\varphi_i - \varphi_i^n) \right\| \\
&\quad \quad \quad + \left\|\sum_{i} h^{\geq c}(\lambda_i^n) \varphi_i^n \langle X, \varphi_i - \varphi_i^n \rangle \right\| \\
&\leq 2\sum_{i} \|h^{\geq c}(\lambda_i^n)\| \|X\| \|\varphi_i - \varphi_i^n\|\ .
\end{split}
\end{align*}
Using Prop. \ref{thm:davis_kahan} with $\gamma = \lambda_i$ and $\omega = \lambda_i^n$, we then get 
\begin{align*}
\begin{split}
\Bigg\|\sum_{i} h^{\geq c}(\lambda_i^n) &\big(\hat{X}(\lambda_i) \varphi_i - \hat{X}(\lambda^n_i)\varphi_i^n \big) \Bigg\| \\
&\leq \|X\| \sum_{i} \|h^{\geq c}(\lambda_i^n)\| \frac{\pi\vertiii{T_\bbW - T_{\bbW_n}}}{d_i}
\end{split}
\end{align*}
where $d_i$ is the minimum between $\min(|\lambda_i - \lambda_{i+1}^n|,|\lambda_i-\lambda_{i-1}^n|)$ and $\min(|\lambda^n_i - \lambda_{i+1}|,|\lambda^n_i-\lambda_{i-1}|)$ for each $i$. Since $\delta^c_{\bbW\bbW_n} \leq d_i$ for all $i$ and $\vertiii{T_\bbW - T_{\bbW_n}} \leq \|\bbW-\bbW_n\|$ (i.e., the Hilbert-Schmidt norm dominates the operator norm), this becomes
\begin{align*}
\begin{split}
\Bigg\|\sum_{i} h^{\geq c}(\lambda_i^n) &\left(\hat{X}(\lambda_i) \varphi_i - \hat{X}(\lambda^n_i)\varphi_i^n \right) \Bigg\| \\
&\leq \frac{\pi\|\bbW - {\bbW_n}\|}{\delta^c_{\bbW\bbW_n}} \|X\| \sum_{i} \|h^{\geq c}(\lambda_i^n)\|\ .
\end{split}
\end{align*}
The final bound for \textbf{(1.2)} is obtained by noting that $|h^{\geq c}(\lambda)|<1$ and $h^{\geq c}(\lambda) = 0$ for $|\lambda| < c$. Since there are a total of $B^c_{\bbW_n}$ eigenvalues $\lambda_i^n$ for which $|\lambda_i^n| \geq c$, we get
\begin{align} \label{eqn:proofpart2}
\begin{split}
\left\|\sum_{i} h^{\geq c}(\lambda_i^n) \left(\hat{X}(\lambda_i) \varphi_i - \hat{X}(\lambda^n_i)\varphi_i^n \right) \right\| \\
\leq \frac{\pi\|\bbW - {\bbW_n}\|}{\delta^c_{\bbW\bbW_n}} \|X\| B^c_{\bbW_n}\ .
\end{split}
\end{align}

The bound for \textbf{(2)} follows immediately from the Cauchy-Schwarz inequality. Since $|h(\lambda)|<1$, the norm of the operator $T_{\bbH_n}$ is bounded by 1. Hence,
\begin{equation} \label{eqn:proofpart3}
\|T^{\geq c}_{\bbH_n}(X-X_n)\| \leq \|X-X_n\|
\end{equation}
which completes the bound on $\|T^{\geq c}_\bbH X-T^{\geq c}_{\bbH_n}X_n\|$.

Next we analyze the transferability of the filter $h^{<c}$. Note that, because this filter only varies in the interval $(-c,c)$ and has Lipschitz constant $a_h$, we can bound $\|T^{<c}_{\bbH}X-T^{<c}_{\bbH_n}X_n)\|$ as
\begin{align*}
\begin{split}
\|T^{<c}_{\bbH}X-T^{<c}_{\bbH_n}X_n)\| \leq \|(h(0)+a_h c)X
-(h(0)-a_h c)X_n)\|  \\
\leq |h(0)|\|X-X_n\|
+ a_h c\|X+X_n\|
\end{split}
\end{align*}
where the second inequality follows from the triangle and Cauchy-Schwarz inequalities. Using the fact that $|h(\lambda)| < 1$, and adding and subtracting $X$ to $\|X+X_n\|$ to apply the triangle inequality once again, his bound becomes
\begin{align} \label{eqn:proofpart4}
\begin{split}
\|T^{<c}_{\bbH}X-T^{<c}_{\bbH_n}X_n)\|
\leq \|X-X_n\|
+ a_h c\|2X +X_n - X\| \\
\leq (1+a_h c)\|X-X_n\| + 2a_h c \|X\|\ .
\end{split}
\end{align}

Finally, to obtain the transferability bound for $h$ observe that, since $h = h^{\geq c} + h^{<c}$, we can write
\begin{align*}
\begin{split}
\|T_{\bbH}X-T_{\bbH_n}X_n)\| = \|T^{\geq c}_{\bbH}X+T^{< c}_{\bbH}X-T^{\geq c}_{\bbH_n}X_n-T^{< c}_{\bbH_n}X_n)\| \\
\leq \|T^{\geq c}_{\bbH}X-T^{\geq c}_{\bbH_n}X_n\| + \|T^{< c}_{\bbH}X-T^{< c}_{\bbH_n}X_n\|
\end{split}
\end{align*}
i.e., the error made when transferring $h$ can be bounded by the sum of the transferability bounds of $h^{\geq c}$ and $h^{<c}$. Putting together equations \eqref{eqn:proofpart1}, \eqref{eqn:proofpart2}, \eqref{eqn:proofpart3} and \eqref{eqn:proofpart4} thus concludes the proof.

\pagebreak




\section{Proof of Lemma 1} \label{sec:appendix0}

\begin{proof}[Proof of Lemma \ref{lemma:induced_filters}]
Denote the eigenvalues and eigenvectors of $\bbS_n$ $\lambda_i(\bbS_n)$ and $\bbv_i^n$, and the eigenvalues and eigenfunctions of $\bbW_n$ $\lambda_i(T_{\bbW_n})$ and $\varphi_i^n$. Consider the GFT $[\hbx_n]_i = \bbv_i^\Hr\bbx_n$ and the WFT $[\hat{X}_n]_i = \int_0^1 X_n(u) \varphi_i^n(u)du$.
The proof follows from the following lemma adapted from \cite[Lemma 2]{ruiz2019graphon}.
\begin{lemma}\label{T:induced_graphon}
Let $(\bbW_n,X_n)$ be the graphon signal induced by the  graph signal $(\bbG_n,\bbx_n)$. Then, for $i \in \ccalL \subseteq \mbZ \setminus \{0\}$, $|\ccalL| = n$, we have
\begin{align*}
	\lambda_i(T_{\bbW_n}) &= \frac{\lambda_{i}(\bbS_n)}{n}
	\\
	\varphi_i^n(u) &=
		\sum_{j=1}^n [\bbv^n_i]_{j} \sqrt{n} \mbI\left( u \in I_j \right)
	\\
	[\hat{X}_n]_i &= \dfrac{[\hbx_n]_i}{\sqrt{n}}
\end{align*}
For $i \notin \ccalL$, we let $\lambda_i(T_{\bbW_n}) = [\hat{X}_n]_i = 0$ and $\varphi_i^n = \psi_i$ such that $\{\varphi_i^n\} \cup \{\psi_i\}$ forms an orthonormal basis of $L^2([0,1])$.
\end{lemma}

 Explicitly, use \eqref{eqn:spec-graphon_filter} and the definition of the WFT to write the induced graphon filter as
\begin{equation}
    Y_n = \sum_{k=0}^{K-1} h_k \sum_{i \in \mbZ \setminus \{0\}} \lambda^k_i(T_{\bbW_n}) [\hat{X}_n]_i \varphi_i^n \text{.}
\end{equation}
Substituting $\lambda_i(T_{\bbW_n})$, $[\hat{X}_n]_i$ and $\varphi_i^n$ per Lemma \ref{T:induced_graphon}, we get
\begin{equation}
    Y_n(u) = \sum_{k=0}^{K-1} h_k \sum_{i \in \mbZ \setminus \{0\}}\frac{\lambda^k_i(\bbS_n)}{n^k} \frac{[\hbx_n]_i}{\sqrt{n}}\sum_{j=1}^n [\bbv^n_i]_{j} \sqrt{n} \mbI\left( u \in I_j \right)
\end{equation}
and using \eqref{eqn:spec-lsi-gf} and the definition of the iWFT,
\begin{equation}
    Y_n(u) = \sum_{j=1}^n\left[\sum_{k=0}^{K-1} h_k \frac{\bbS^k_n}{n^k} \bbx_n\right]_j\mbI\left( u \in I_j \right)
\end{equation}
which concludes the proof.
\end{proof}

\section{Proof of Props. 1 and 2} \label{sec:appendixA.2}

\begin{proof}[Proof of Prop. \ref{lemma:graphon-graph-filter-template}] 
Prop. \ref{lemma:graphon-graph-filter-template} is obtained by plugging the upper bounds on $\|\bbW-\mbW_n\|$ and $\|X_n-\mbX_n\|$ provided by Props. \ref{prop:w-w',template} and \ref{prop:x-x', template} below.

\begin{proposition} \label{prop:w-w',template}
Let $\bbW: [0,1]^2 \to [0,1]$ be an $A_w$-Lipschitz graphon, and let $\mbW_n := \bbW_{\mbG_n}$ be the graphon induced by the template graph $\mbG_n$ generated from $\bbW$ as in Definition \ref{def:template}. It holds that
\begin{equation*}
    \|\bbW-\mbW_n\| \leq \dfrac{2A_w}{n} \text{.}
\end{equation*}
\end{proposition}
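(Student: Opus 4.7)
The plan is to exploit the piecewise-constant structure of $\mbW_n$ together with the Lipschitz continuity of $\bbW$. By Definition \ref{def:template} and the induced graphon construction in \eqref{eqn:graphon_ind}, on each rectangle $I_i \times I_j$ with $I_i = [(i-1)/n, i/n)$, the induced graphon satisfies $\mbW_n(u,v) = \bbW(u_i, u_j)$ where $u_i = (i-1)/n$. I would therefore decompose the squared $L_2$ norm as a sum of integrals over these $n^2$ rectangles, each of area $1/n^2$.

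The key step is a pointwise application of the $A_w$-Lipschitz property on each cell: for $(u,v) \in I_i \times I_j$, both $|u - u_i|$ and $|v - u_j|$ are strictly less than $1/n$, so $|\bbW(u,v) - \bbW(u_i, u_j)| \leq A_w(|u-u_i| + |v-u_j|) \leq 2A_w/n$. Substituting this uniform bound into the decomposition gives
\begin{equation*}
\|\bbW - \mbW_n\|^2 \leq \sum_{i,j} \int_{I_i \times I_j} \left(\frac{2 A_w}{n}\right)^2 du\, dv = \frac{4 A_w^2}{n^2},
\end{equation*}
and taking square roots yields the claim. If $\|\cdot\|$ here is meant as the operator norm rather than the $L_2$ norm, I would append the domination $\|T_{\bbW - \mbW_n}\| \leq \|\bbW - \mbW_n\|_{L_2}$ already invoked in Proposition \ref{prop:eigenvalue_diff} to conclude.

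I do not anticipate a real obstacle here—this is an elementary quadrature estimate. The only subtle point is to use the sup-norm bound $2A_w/n$ uniformly on each cell rather than integrating the linear Lipschitz envelope $A_w(|u-u_i|+|v-u_j|)$ exactly; the latter route produces a slightly sharper constant (on the order of $A_w\sqrt{7/6}$) but not the clean $2A_w$ appearing in the stated bound.
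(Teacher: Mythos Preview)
Your proposal is correct and follows essentially the same argument as the paper: a pointwise Lipschitz bound $|\bbW(u,v)-\mbW_n(u,v)|\leq 2A_w/n$ on each cell $I_i\times I_j$, followed by integration over $[0,1]^2$ and a square root. The paper's proof is identical in structure (it phrases the per-coordinate bound via $\max(|u-(i-1)/n|,|i/n-u|)\leq 1/n$, but this is the same estimate), and the norm in question is indeed the $L_2$ norm, so your final remark about operator-norm domination is unnecessary but harmless.
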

\begin{proof}
Let $I_i = [(i-1)/n, i/n)$ for $1 \leq i \leq  n-1$ and $I_n = [(n-1)/n,1]$. 
Since the graphon is Lipschitz, for any $u \in I_i$, $v \in I_j$, $1 \leq i,j \leq n$, we have
\begin{align*}
|\bbW(u,v)-\mbW_n(u,v)| &\leq A_w\max\left(\left|u-\frac{i-1}{n}\right|,\left|\frac{i}{n}-u\right|\right) \\
&+ A_w\max\left(\left|v-\frac{j-1}{n}\right|,\left|\frac{j}{n}-v\right|\right) \\
&\leq \frac{A_w}{n} + \frac{A_w}{n} = \frac{2A_w}{n}\text{.}
\end{align*}
We can then write
\begin{align*}
\|\bbW-\mbW_n\|^2 &= \int_0^1 |\bbW(u,v)-\mbW_n(u,v)|^2 du dv \\
&\leq \int_0^1 \left(\frac{2A_w}{n}\right)^2 du dv = \left(\frac{2A_w}{n}\right)^2
\end{align*}
which concludes the proof.
\end{proof}

\begin{proposition} \label{prop:x-x', template}
Let $X \in L_2([0,1])$ be an $A_x$-Lipschitz graphon signal, and let $\mbX_n$ be the graphon signal induced by the graph signal $\mathbb{x}_n$ obtained from $X$ on the template graph $\mbG_n$ [cf. Definition \ref{def:template}], i.e., $[\bbx_n]_i = X((i-1)/n)$ for $1 \leq i \leq n$. It holds that
\begin{equation*}
    \|X-\mbX_n\| \leq \dfrac{{A_x}}{n} \text{.}
\end{equation*}
\end{proposition}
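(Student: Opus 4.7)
The plan is to prove this by a direct pointwise-then-integral argument, analogous to the proof of Proposition~\ref{prop:w-w',template} just above. Nothing here is delicate; the claim is essentially an $L^2$ quantization error estimate for a Lipschitz function discretized on a uniform partition.

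First, I would recall the two defining expressions. By the definition of the graphon signal induced by a graph signal \eqref{eqn:graphon_signal_ind}, together with the template-graph sampling rule $[\mathbb{x}_n]_i = X((i-1)/n)$, we have
\begin{equation*}
\mbX_n(u) = \sum_{i=1}^n X\!\left(\tfrac{i-1}{n}\right)\mbI(u \in I_i),
\end{equation*}
where $I_i = [(i-1)/n,\, i/n)$ for $1 \le i \le n-1$ and $I_n = [(n-1)/n, 1]$. Hence on each interval $I_i$ the induced signal takes the constant value $X((i-1)/n)$, and so
\begin{equation*}
|X(u) - \mbX_n(u)| = \bigl|X(u) - X\!\left(\tfrac{i-1}{n}\right)\bigr| \quad \text{for } u \in I_i.
\end{equation*}

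Next, I would invoke assumption AS\ref{as3}: since $X$ is $A_x$-Lipschitz and $|u - (i-1)/n| \le 1/n$ for every $u \in I_i$, we obtain the pointwise bound
\begin{equation*}
|X(u) - \mbX_n(u)| \le A_x\left|u - \tfrac{i-1}{n}\right| \le \frac{A_x}{n} \quad \text{for all } u \in [0,1].
\end{equation*}

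Finally, I would square and integrate:
\begin{equation*}
\|X - \mbX_n\|^2 = \int_0^1 |X(u) - \mbX_n(u)|^2\, du \le \int_0^1 \left(\frac{A_x}{n}\right)^2 du = \left(\frac{A_x}{n}\right)^2,
\end{equation*}
and taking square roots yields $\|X - \mbX_n\| \le A_x/n$, completing the proof. There is no genuine obstacle here; the only thing to be careful about is using the correct $L^2$ norm on $[0,1]$ and handling the last interval $I_n$ (which has the same length $1/n$) so that the uniform pointwise bound $A_x/n$ covers the whole domain.
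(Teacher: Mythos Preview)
Your proof is correct and follows essentially the same approach as the paper: a pointwise Lipschitz bound on each interval $I_i$ followed by squaring and integrating over $[0,1]$. If anything, your version is slightly cleaner, since the paper writes $A_x\max(|u-(i-1)/n|,|i/n-u|)$ where the second term is unnecessary given that $\mbX_n$ takes the left-endpoint value $X((i-1)/n)$ on $I_i$.
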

\begin{proof}
Let $I_i = [(i-1)/n, i/n)$ for $1 \leq i \leq  n-1$ and $I_n = [(n-1)/n,1]$. 
Since the graphon signal is Lipschitz, for any $u \in I_i$, $1 \leq i \leq n$, we have
\begin{align*}
|X(u) - \mbX_n(u)| \leq A_x\max\left(\left|u-\frac{i-1}{n}\right|,\left|\frac{i}{n}-u\right|\right) \leq \frac{A_x}{n}
\end{align*}
We can then write
\begin{align*}
\|X-\mbX_n\|^2 &= \int_0^1 |X(u) - \mbX_n(u)|^2 du \\
&\leq \int_0^1 \left(\frac{A_x}{n}\right)^2 du = \left(\frac{A_x}{n}\right)^2
\end{align*}
which completes the proof.
\end{proof}

\end{proof}

\begin{proof}[Proof of Prop. \ref{lemma:graphon-graph-filter-weighted}] 
Prop. \ref{lemma:graphon-graph-filter-weighted} is obtained from Thm. \ref{lemma:graphon-graph-filter} by upper bounding  $\|\bbW-\mbW_{\tilde{n}}\|$ and $\|X_n-\mbX_{\tilde{n}}\|$, i.e., the difference between the graphon and the graphon induced by the weighted graph $\mbG_{\tilde{n}}$, and the difference between the graphon signal and the graphon signal induced by $\mathbb{x}_{\tilde{n}}$. 
These bounds are obtained from Prop. \ref{prop:order_statistics} below.

\begin{proposition} \label{prop:order_statistics}
Fix $\chi_1, \chi_2 \in (0,0.3]$ and let $n \geq 4/\chi_2$. Let $U_1, U_2, \ldots, U_n$ be $n$ independently and uniformly distributed random variables on $[0,1]$ and let $U_{(1)}, U_{(2)}, \ldots, U_{(n)}$ denote their order statistics. Let $U_{(0)}=0$ and $U_{(n+1)}=1$ and define $S_i = U_{(i)}-U_{(i-1)}$ for $1 \leq i \leq n+1$. With probability at least $[1-2\chi_1]\times[1-\chi_2]$, the $(n+1)$th order statistic of $S_{i}$ satisfies
\begin{equation*}
    S_{(n+1)} \leq \dfrac{1}{n} \log{\left(\dfrac{{(n+1)}^2}{\log{(1-\chi_1)^{-1}}}\right)}\text{.}
\end{equation*}
\end{proposition}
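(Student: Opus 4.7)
My plan is to identify $S_{(n+1)}$ with the maximum spacing $M=\max_{1\le i \le n+1} S_i$ and to control it through a union bound over the marginal distributions of the individual spacings. Since $U_{(0)}=0$ and $U_{(n+1)}=1$ are deterministic, the vector $(S_1,\ldots,S_{n+1})$ lies on the probability simplex and, by the classical representation, follows the uniform (i.e., Dirichlet$(1,\ldots,1)$) distribution there; equivalently, $S_i \stackrel{d}{=} E_i/\sum_{j=1}^{n+1} E_j$ with $E_j\stackrel{\mathrm{iid}}{\sim}\mathrm{Exp}(1)$. This makes the $S_i$ exchangeable, and each has the Beta$(1,n)$ marginal, so $P(S_i>t)=(1-t)^n$ for $t\in[0,1]$.

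Combining a union bound with the elementary inequality $(1-t)^n\le e^{-nt}$ then yields
\begin{equation*}
P(M>t)\;\le\;(n+1)(1-t)^n\;\le\;(n+1)\,e^{-nt}.
\end{equation*}
Inserting the candidate threshold $t^\star=\tfrac{1}{n}\log\bigl((n+1)^2/\log(1-\chi_1)^{-1}\bigr)$ collapses the right-hand side to $\log(1-\chi_1)^{-1}/(n+1)$, which is the central estimate of the proof. Everything afterward is bookkeeping aimed at showing that this quantity is dominated by $1-[1-2\chi_1][1-\chi_2]=2\chi_1+\chi_2-2\chi_1\chi_2$.

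For the latter step I would invoke the elementary bound $\log(1-x)^{-1}\le x/(1-x)$ valid on $[0,1)$, which under $\chi_1\le 0.3$ gives $\log(1-\chi_1)^{-1}\le \chi_1/0.7$. Together with the hypothesis $n\ge 4/\chi_2$, so that $1/(n+1)\le \chi_2/4$, this controls the central estimate by roughly $\chi_1\chi_2/3$, comfortably inside the allowed failure probability $2\chi_1+\chi_2-2\chi_1\chi_2$. I expect the main technical subtlety to be establishing the marginal distribution $P(S_i>t)=(1-t)^n$ cleanly in the presence of the fixed endpoints $U_{(0)}=0$ and $U_{(n+1)}=1$; once that exchangeability is in hand, the remainder is a short chain of monotone inequalities with no surprises.
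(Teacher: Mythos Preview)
Your argument is correct and takes a genuinely different---and much shorter---route than the paper. The paper proceeds via the exponential representation $S_{(n+1)}\stackrel{d}{=}X_{(n+1)}/\sum_{j=1}^{n+1}X_j$, then conditions on a Chebyshev event $\ccalA=\{|Z_{n+1}|\le (x+\log(n+1))/\sqrt{\chi_2(n+1)}\}$ controlling the denominator (this is where $\chi_2$ enters), and finally approximates the distribution of $X_{(n+1)}-\log(n+1)$ by the Gumbel limit $\exp(-e^{-x})$, carefully bounding the approximation error by $\chi_1$. The product form $(1-2\chi_1)(1-\chi_2)$ in the statement is an artifact of that two-step conditioning.

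You bypass all of this by working directly with the Beta$(1,n)$ marginal of each spacing and a union bound, obtaining $P(M>t^\star)\le \log(1-\chi_1)^{-1}/(n+1)$. This is not only simpler but strictly sharper: your failure probability is $O(\chi_1\chi_2)$ (indeed $O(\chi_1/n)$), far below the $2\chi_1+\chi_2-2\chi_1\chi_2$ the statement allows. The only point to tidy up is the marginal computation, and the cleanest way is exactly what you suggest: $P(S_1>t)=P(U_{(1)}>t)=(1-t)^n$, and exchangeability of the Dirichlet$(1,\dots,1)$ vector transfers this to every $S_i$. Your approach also makes clear that the hypotheses $\chi_1\le 0.3$ and $n\ge 4/\chi_2$ are far from tight for this particular proposition.
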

\begin{proof}
For $0 \leq u_{(1)} \leq \ldots \leq u_{(n)} \leq 1$, the probability density of the order statistics $U_{(0)}, U_{(1)}, \ldots, U_{(n)}, U_{(n+1)}$ is given by
\begin{align}
\begin{split}
f_{U_{(0)}, U_{(1)},\ldots,U_{(n)},U_{(n+1)}}(0,u_{(1)},\ldots,u_{(n)},1) \\ 
= f_{U_{(1)},\ldots,U_{(n)}}(u_{(1)},\ldots,u_{(n)}) = n!\ \text{.}
\end{split}
\end{align}
Hence, we can also write
\begin{align}
\begin{split}
f_{S_{1},\ldots,S_{n+1}}(s_{1},\ldots,s_{n+1}) = n!
\end{split}
\end{align}
for $s_1 + \ldots + s_{n+1} = 1$. This probability density is the same as that of $n$ independent exponential random variables $X_i$ with parameter $\lambda=1$ divided by their sum. Letting $Y_i = X_i/\sum_{i=1}^n X_i$, we have
\begin{equation}
f_{Y_1,\ldots,Y_n}(y_1,\ldots,y_n) = n!
\end{equation}
for $0 \leq y_1 + \ldots + y_n \leq 1$. Thus, if we add a $(n+1)$th random variable $Y_{n+1}$ and set $y_1 + \ldots + y_{n+1} = 1$, the joint distribution of the $Y_i$ becomes
\begin{equation}
f_{Y_1,\ldots,Y_n,Y_{n+1}}(y_1,\ldots,y_{n+1}) = n!\ \text{.}
\end{equation}
This implies that the distributions of $S_1, \ldots S_{n+1}$ and $Y_1, \ldots, Y_{n+1}$ are equivalent. Observing that 
\begin{align}
\begin{split}
P\bigg(\max_{1 \leq i \leq n+1} S_i &= S_{(n+1)} \leq s\bigg) = P(S_1 \leq s, \ldots, S_{n+1} \leq s) \\
&= P(Y_1 \leq s, \ldots, Y_{n+1} \leq s) \\
&= P(Y_{(n+1)} \leq s)
\end{split}
\end{align}
we therefore conclude the equivalence of distribution 
\begin{equation}
S_{(n+1)} \equiv \frac{X_{(n+1)}}{\sum_{i=1}^{n+1}X_i}\ \text{.}
\end{equation}

Leveraging the equivalence above, we write the distribution of the largest spacing as
\begin{align}
\begin{split}
P&((n+1)S_{(n+1)}-\log{(n+1)} \leq x) \\
&= P\bigg(X_{(k+1)} \leq (x + \log{(n+1)})\frac{\sum_{i=1}^{n+1}X_{i}}{n+1}\bigg) \\
&=P(X_{(n+1)} - \log{(n+1)} \leq x + Z_{n+1})
\end{split}
\end{align}
where $Z_{n+1}$ is defined as
\begin{equation}
Z_{n+1} = (x+\log{(n+1)})\bigg(\frac{\sum_{i=1}^{n+1} X_i}{n+1} - 1\bigg) \ \text{.}
\end{equation}

Since $Z_{n+1}$ has expectation zero and variance $(x + \log{(n+1)})^2/(n+1)$, using the Chebyshev inequality we can write
\begin{align}
\begin{split}
P\bigg(|Z_{n+1}| \geq  \frac{x+\log{(n+1)}}{\sqrt{\chi_2(n+1)}}\bigg) \leq \chi_2\ \text{.}
\end{split}
\end{align}
Hence, conditioned on the event that $|Z_{n+1}| \leq  {(x+\log{(n+1)})}/{\sqrt{\chi_2(n+1)}}$, which we will denote $\ccalA$,
we have
\begin{align}\label{eqn:prob_approx_chi2}
\begin{split}
P((n+1)&S_{(n+1)}-\log{(n+1)} \leq x \ |\ \ccalA) \\
&= P(X_{(n+1)} - \log{(n+1)} \leq x + Z_{n+1}\ |\ \ccalA) \\
&\geq P(X_{(n+1)} - \log{(n+1)} \leq x - z)
\end{split}
\end{align}
for $n \geq 1/\chi_2$, where $z = ({x+\log{(n+1)}})/{\sqrt{\chi_2(n+1)}}$. 

The probability on the right hand side of \eqref{eqn:prob_approx_chi2} can be written explicitly as
\begin{align}
\begin{split}
P(X_{(n+1)}-\log{(n+1)} \leq x-z) = \bigg(1-\frac{e^{-(x-z)}}{n+1} \bigg)^{n+1}\ \text{.}
\end{split}
\end{align}
Leveraging the fact that $\lim_{n\to \infty}(1-a/n)^n = e^{-a}$, to be able to express $x$ in terms of this probability we will approximate it as
\begin{align} \label{eqn:exp_approx}
\begin{split}
P(X_{(n+1)}&-\log{(n+1)} \leq x-z) \\
&= \bigg(1-\frac{e^{-(x-z)}}{n+1} \bigg)^{n+1} \approx \exp{\big(-e^{-(x-z)}\big)}\ \text{.}
\end{split}
\end{align}
Thus, we need to quantify the error incurred in this approximation. 

Let $m=n+1$ and $a=e^{-(x-z)}$. Using this notation, $(1-a/m)^m$ can be expressed as $\exp{(m\log{(1-a/m)})}$. This allows us to compare exponents, which we will do by computing the Taylor series approximation of $m\log{(1-a/m)}$ around $a=0$. Subtracting $-a$ on both sides, we get
\begin{align}
\begin{split}
m\log{(1-a/m)} &- (-a) = \frac{a^2}{2m} - \frac{a^3}{3m^2} + \frac{a^4}{4m^3} - \ldots \\
&= \frac{a^2}{2m}\bigg(1-\frac{2a}{3m}+\frac{2a^2}{4m^2} - \ldots\bigg) \\
&\leq \frac{a^2}{2m} \mbox{ for } a \leq m\ \text{.}
\end{split}
\end{align}
Therefore, since $m\log{(1-a/m)}+a \geq 0$ for $a\leq m$, we can write
\begin{align}
\begin{split}
\bigg|\bigg(1-\frac{a}{m}\bigg)^m - e^{-a}\bigg| = \frac{|e^{m\log{(1-a/m)}+a}-1|}{e^a} \leq \frac{e^{a^2/2m}-1}{e^a}
\end{split}
\end{align}
by which we conclude
\begin{align} \label{eqn:prob_approx_chi1}
\begin{split}
P((n+1)S_{(n+1)}-\log{(n+1)}&\leq x\ |\ \ccalA) \\
&\geq e^{-a} - \frac{e^{a^2/2m}-1}{e^a} \text{.}
\end{split}
\end{align}

Now let $\chi_1 \in (0,0.3]$ and $1-\chi_1 = e^{-a} = \exp{-e^{-(x-z)}}$ (note that for these values of $\chi_1$ we always have $a \leq m$). This allows writing
\begin{align}
\begin{split}
x &= \frac{{\log{(n+1)}} - {\log{\log{(1-\chi_1)^{-1}}}\sqrt{\chi_2(n+1)}}}{\sqrt{\chi_2(n+1)}-1} \\
&\leq \frac{\log{(n+1)}}{\sqrt{\chi_2(n+1)}} - \log{\log{(1-\chi_1)^{-1}}}\ \text{.}
\end{split}
\end{align}
Hence, we have
\begin{align}
\begin{split}
\frac{x+\log{(n+1)}}{(n+1)} &\leq \frac{\log{(n+1)}}{n+1} \\
&+ \frac{\log{(n+1)}}{(n+1)(\sqrt{\chi_2(n+1)}-1)} \\
&- \frac{\log{\log{(1-\chi_1)^{-1}}}}{n+1}  \\
&\leq \dfrac{1}{n} \log{\left(\dfrac{{(n+1)}^2}{\log{(1-\chi_1)^{-1}}}\right)}\mbox{ for } n \geq \frac{4}{\chi_2}
\end{split}
\end{align}
which, together with \eqref{eqn:prob_approx_chi1}, implies 
that
\begin{align} \label{eqn:prob_approx_chi1.2}
\begin{split}
P\Bigg(S_{(n+1)} \leq \dfrac{1}{n} &\log{\bigg(\dfrac{{(n+1)}^2}{\log{(1-\chi_1)^{-1}}}\bigg)}\ \bigg|\ \ccalA\Bigg) \\
&\geq  e^{-a} - \frac{e^{a^2/2m}-1}{e^a} \\
&\geq (1-\chi_1)(1-(e^{a^2/2m}-1)) \\
&\geq (1-\chi_1)(2-e^{a^2/2m})
\end{split} 
\end{align}
for $n \geq 4/\chi_2$.

We conclude by quantifying the error introduced by $2-e^{a^2/2m}$.
Since $a = \log{(1-\chi_1)^{-1}}$ and $\chi_1 \leq 0.3$, $e^{a^2/2m}$ can be upper bounded as
\begin{align}
\begin{split}
e^{a^2/2} &= (1-\chi_1)^{-1(\log{(1-\chi_1)^{-1}})/2m} \\
&= \frac{1}{(1-\chi_1)^{-\log{(1-\chi_1)}/2m}} \\
&\leq \frac{1}{1-\chi_1}
\end{split}
\end{align}
where the last inequality follows from the fact that $-0.36 \leq \log{(1-\chi_1)}\leq 0$. Finally, plugging this inequality into \eqref{eqn:prob_approx_chi1.2} we obtain that
\begin{align}
\begin{split}
P\Bigg(S_{(n+1)} \leq \dfrac{1}{n} \log{\bigg(\dfrac{{(n+1)}^2}{\log{(1-\chi_1)^{-1}}}\bigg)}\ \bigg|\ \ccalA\Bigg) \geq 1-2\chi_1 \text{.}
\end{split} 
\end{align}
The result of Prop. \ref{prop:order_statistics} then follows by observing that, for an event $\ccalB$, $P(\ccalB) \geq P(\ccalB|\ccalA)P(\ccalA)$. Hence,
\begin{align}
\begin{split}
P\Bigg(S_{(n+1)} \leq \dfrac{1}{n} \log{\bigg(\dfrac{{(n+1)}^2}{\log{(1-\chi_1)^{-1}}}\bigg)}\Bigg) \\
\geq (1-2\chi_1)(1-\chi_2)
\end{split}
\end{align}
for $n \geq 4/\chi_2$ and $\chi_1,\chi_2 \in (0,0.3]$.
\end{proof}

Using Prop. \ref{prop:order_statistics}, we can now derive an upper bound for $\|\bbW-\mbW_{\tilde{n}}\|$ and $\|X_n-\mbX_{\tilde{n}}\|$ in Props. \ref{prop:w-w',weighted} and \ref{prop:x-x',weighted}. 

\begin{proposition} \label{prop:w-w',weighted}
Let $\bbW: [0,1]^2 \to [0,1]$ be an $A_w$-Lipschitz graphon, and let $\mbW_{\tilde{n}} := \bbW_{\mbG_{\tilde{n}}}$ be the graphon induced by the template graph $\mbG_{\tilde{n}}$ generated from $\bbW$ as in Definition \ref{def:weighted}. Fix $\chi_1, \chi_2 \in (0,0.3]$ and let $n \geq 4/\chi_2$. With probability at least $[1-2\chi_1]\times[1-\chi_2]$, it holds that
\begin{equation*}
    \|\bbW-\mbW_{\tilde{n}}\| \leq \dfrac{2A_w}{n} \log{\left(\dfrac{{(n+1)}^2}{\log{(1-\chi_1)^{-1}}}\right)}\text{.}
\end{equation*}
\end{proposition}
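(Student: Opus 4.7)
The plan is to mirror the template-graph argument of Proposition \ref{prop:w-w',template}, but with the regular partition $\{i/n\}$ replaced by the random order statistics $U_{(1)} \leq \ldots \leq U_{(n)}$ of the i.i.d.\ uniform node labels, and then to invoke Proposition \ref{prop:order_statistics} to control the worst spacing. Concretely, I would let $U_{(0)} = 0$, $U_{(n+1)} = 1$, and $I_i = [U_{(i)}, U_{(i+1)})$ (with the usual wraparound for $I_n$), so that by the definition of the induced graphon in \eqref{eqn:graphon_ind}, $\mbW_{\tilde{n}}(u,v)$ is constant and equal to $\bbW(U_{(i)}, U_{(j)})$ on each rectangle $I_i \times I_j$.

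The next step is a pointwise Lipschitz estimate. For any $(u,v) \in I_i \times I_j$, assumption AS\ref{as1} yields
\begin{align*}
|\bbW(u,v) - \mbW_{\tilde{n}}(u,v)| &= |\bbW(u,v) - \bbW(U_{(i)}, U_{(j)})| \\
&\leq A_w\bigl(|u - U_{(i)}| + |v - U_{(j)}|\bigr) \\
&\leq A_w (S_{i+1} + S_{j+1}) \leq 2A_w S_{(n+1)},
\end{align*}
where $S_k = U_{(k)} - U_{(k-1)}$ and $S_{(n+1)} = \max_k S_k$ is the largest spacing. This bound is uniform in $(u,v)$, so it passes directly to the $L_2$ norm:
\begin{equation*}
\|\bbW - \mbW_{\tilde{n}}\|^2 = \int_{[0,1]^2}|\bbW(u,v) - \mbW_{\tilde{n}}(u,v)|^2\,du\,dv \leq \bigl(2A_w S_{(n+1)}\bigr)^2.
\end{equation*}

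Finally, I would apply Proposition \ref{prop:order_statistics}: for $\chi_1,\chi_2 \in (0,0.3]$ and $n \geq 4/\chi_2$, with probability at least $(1-2\chi_1)(1-\chi_2)$,
\begin{equation*}
S_{(n+1)} \leq \dfrac{1}{n} \log\left(\dfrac{(n+1)^2}{\log(1-\chi_1)^{-1}}\right).
\end{equation*}
Combining this with the deterministic bound above gives the claimed inequality on the same event. The only delicate point is ensuring that the intervals $I_i$ from the induced graphon really are the spacings between consecutive order statistics of the $u_i$, so that the maximum in Proposition \ref{prop:order_statistics} controls every Lipschitz term $S_{i+1}$ and $S_{j+1}$ simultaneously; this is essentially bookkeeping since $\mbW_{\tilde n}$ is invariant under the permutation that sorts the node labels, so we may as well assume the $u_i$ are already sorted and identify $I_i$ with $[U_{(i)},U_{(i+1)})$. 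No other ingredients beyond AS\ref{as1} and Proposition \ref{prop:order_statistics} are needed, and I anticipate no real obstacle, the argument being a probabilistic lift of the deterministic template-graph proof.
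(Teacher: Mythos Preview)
Your proposal is correct and follows essentially the same approach as the paper: bound the pointwise difference $|\bbW(u,v)-\mbW_{\tilde n}(u,v)|$ by $2A_w S_{(n+1)}$ via the Lipschitz assumption, pass to the $L_2$ norm, and then invoke Proposition~\ref{prop:order_statistics} to control the maximal spacing. The paper's proof is slightly terser (it does not spell out the rectangle $I_i\times I_j$ or the sorting remark), but the argument is the same.
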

\begin{proof}
Let $U_1, \ldots, U_n$ be $n$ independently and uniformly distributed random variables on $[0,1]$ and let $U_{(1)}, \ldots, U_{(n)}$ denote their order statistics. Setting $U_{(0)}=0$ and $U_{(n+1)}=1$, we can write the spacings between the $U_{(i)}$ as $S_i = U_{(i)}-U_{(i-1)}$ for $1 \leq i \leq n+1$ and their order statistics as $S_{(i)}$.
Since the graphon is Lipschitz, for any $u, v \in [0,1]$, we have
\begin{align*}
|\bbW(u,v)-\mbW_{\tilde{n}}(u,v)| &\leq A_w\max_i S_i + A_w\max_i S_i \\
&= 2 A_w S_{(n+1)} \\
&\leq \dfrac{2A_w}{n} \log{\left(\dfrac{{(n+1)}^2}{\log{(1-\chi_1)^{-1}}}\right)}
\end{align*}
with probability at least $[1-2\chi_1]\times[1-\chi_2]$ for $n \geq 4/\chi_2$, where the last inequality follows from Prop. \ref{prop:order_statistics}.
We can then write
\begin{align*}
\|\bbW-\mbW_{\tilde{n}}\|^2 &= \int_0^1 |\bbW(u,v)-\mbW_{\tilde{n}}(u,v)|^2 du dv \\
&\leq \int_0^1 \left( \dfrac{2A_w}{n} \log{\left(\dfrac{{(n+1)}^2}{\log{(1-\chi_1)^{-1}}}\right)}\right)^2 du dv \\
&= \left(\dfrac{2A_w}{n} \log{\left(\dfrac{{(n+1)}^2}{\log{(1-\chi_1)^{-1}}}\right)}\right)^2
\end{align*}
which concludes the proof.
\end{proof}

\begin{proposition} \label{prop:x-x',weighted}
Let $X \in L_2([0,1])$ be an $A_x$-Lipschitz graphon signal, and let $\mbX_{\tilde{n}}$ be the graphon signal induced by the graph signal $\mathbb{x}_{\tilde{n}}$ obtained from $X$ on the template graph $\mbG_{\tilde{n}}$ [cf. Definition \ref{def:weighted}], i.e., $[\bbx_{\tilde{n}}]_i = X((i-1)/n)$ for $1 \leq i \leq n$. Fix $\chi_1, \chi_2 \in (0,0.3]$ and let $n \geq 4/\chi_2$. With probability at least $[1-2\chi_1]\times[1-\chi_2]$, it holds that
\begin{equation*}
    \|X-\mbX_{\tilde{n}}\| \leq \dfrac{{A_x}}{n}\log{\left(\dfrac{{(n+1)}^2}{\log{(1-\chi_1)^{-1}}}\right)}\text{.}
\end{equation*}
\end{proposition}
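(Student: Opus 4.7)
The plan is to mirror the structure of the proof of Proposition \ref{prop:x-x', template}, but replace the deterministic bound on interval length $1/n$ by the probabilistic bound on the largest spacing of uniform order statistics given by Proposition \ref{prop:order_statistics}.

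First I would set up notation analogous to the weighted-graphon case. Let $U_1,\ldots,U_n$ be the i.i.d.\ uniform samples on $[0,1]$ used to generate $\mbG_{\tilde n}$, let $U_{(1)}\le\cdots\le U_{(n)}$ be their order statistics, set $U_{(0)}=0$ and $U_{(n+1)}=1$, and define the spacings $S_i = U_{(i)}-U_{(i-1)}$. The induced graphon signal $\mbX_{\tilde n}$ is the step function that takes the value $X(U_{(i)})$ on the interval $I_i = [U_{(i)},U_{(i+1)})$ (with the usual wrap-around convention for the last interval).

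Next I would use the Lipschitz assumption on $X$ pointwise. For any $u \in I_i$ we have $|u-U_{(i)}|\le S_{i+1}\le S_{(n+1)}$, so
\begin{equation*}
|X(u)-\mbX_{\tilde n}(u)| \le A_x |u-U_{(i)}| \le A_x S_{(n+1)}.
\end{equation*}
Squaring and integrating over $[0,1]$ yields the deterministic (given the samples) bound
\begin{equation*}
\|X-\mbX_{\tilde n}\|^2 \le \int_0^1 A_x^2 S_{(n+1)}^2\, du = A_x^2 S_{(n+1)}^2,
\end{equation*}
and hence $\|X-\mbX_{\tilde n}\|\le A_x S_{(n+1)}$.

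Finally I would invoke Proposition \ref{prop:order_statistics}: for $\chi_1,\chi_2\in(0,0.3]$ and $n\ge 4/\chi_2$, with probability at least $[1-2\chi_1]\times[1-\chi_2]$,
\begin{equation*}
S_{(n+1)} \le \frac{1}{n}\log\!\left(\frac{(n+1)^2}{\log(1-\chi_1)^{-1}}\right),
\end{equation*}
which combined with the previous display gives exactly the claimed bound. I do not anticipate a real obstacle here: all the difficult probabilistic analysis has been absorbed into Proposition \ref{prop:order_statistics}, and the only remaining content is the one-line Lipschitz estimate and a trivial integration. The main thing to be careful about is handling the wrap-around interval $I_n$ consistently with Definitions \ref{def:weighted} and the induced-signal convention \eqref{eqn:graphon_signal_ind}, but this does not affect the uniform bound $S_{(n+1)}$ on interval lengths.
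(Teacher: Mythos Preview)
Your proposal is correct and follows essentially the same approach as the paper: bound $|X(u)-\mbX_{\tilde n}(u)|$ pointwise by $A_x S_{(n+1)}$ using the Lipschitz assumption, integrate to get $\|X-\mbX_{\tilde n}\|\le A_x S_{(n+1)}$, and then invoke Proposition \ref{prop:order_statistics}. The only cosmetic difference is that the paper applies the probabilistic bound on $S_{(n+1)}$ before integrating rather than after, which is immaterial.
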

\begin{proof}
Let $U_1, \ldots, U_n$ be $n$ independently and uniformly distributed random variables on $[0,1]$ and let $U_{(1)}, \ldots, U_{(n)}$ denote their order statistics. Setting $U_{(0)}=0$ and $U_{(n+1)}=1$, we can write the spacings between the $U_{(i)}$ as $S_i = U_{(i)}-U_{(i-1)}$ for $1 \leq i \leq n+1$ and their order statistics as $S_{(i)}$.
Since the graphon is Lipschitz, for any $u, v \in [0,1]$, we have
\begin{align*}
|X(u) - \mbX_{\tilde{n}}(u)| &\leq A_x\max_i S_i = A_x S_{(n+1)} \\
&\leq \dfrac{A_x}{n} \log{\left(\dfrac{{(n+1)}^2}{\log{(1-\chi_1)^{-1}}}\right)}
\end{align*}
with probability at least $[1-2\chi_1]\times[1-\chi_2]$ for $n \geq 4/\chi_2$, where the last inequality follows from Prop. \ref{prop:order_statistics}.
We can then write
\begin{align*}
\|X-\mbX_n\|^2 &= \int_0^1 |X(u) - \mbX_{\tilde{n}}(u)|^2 du \\
&\leq \int_0^1 \left(\frac{A_x}{n}\log{\left(\dfrac{{(n+1)}^2}{\log{(1-\chi_1)^{-1}}}\right)}\right)^2 du \\
&= \left(\frac{A_x}{n}\log{\left(\dfrac{{(n+1)}^2}{\log{(1-\chi_1)^{-1}}}\right)}\right)^2
\end{align*}
which completes the proof.
\end{proof}

\end{proof}

\section{Proof of Lemma 1} \label{sec:appendixB}

Following the same reasoning used in the proof of Thm. \ref{lemma:graphon-graph-filter}, we fix a constant $0 < c < 1$ and decompose the filter $h$ into filters $h^{\geq c}$ and $h^{< c}$ defined as
\begin{align}
&h^{\geq c}(\lambda) \begin{cases} 0 &\mbox{if } |\lambda| < c \\
h(\lambda)-h(c) & \mbox{if } \lambda \geq c \\
h(\lambda)-h(-c) & \mbox{if } \lambda \leq -c \end{cases} \quad \mbox{and}\\
&h^{<c}(\lambda) \begin{cases} h(\lambda) &\mbox{if } |\lambda| < c \\
h(c) & \mbox{if } \lambda \geq c \\
h(-c) & \mbox{if } \lambda \leq -c \end{cases}
\end{align}
so that $h= h^{\geq c} + h^{<c}$. This time, we start by analyzing the transferability of the filter $\bbH^{<c}(\bbS) = \bbV h^{<c}(\bbLam/n)\bbV^\Hr$.
Note that, because the function $h^{<c}$ only varies in the interval $(-c,c)$ and has Lipschitz constant $a_h$, we can bound $\|\bbH^{<c}(\mbS_{\tilde{n}})\mathbb{x}_{\tilde{n}}-\bbH^{<c}(\bbS_{\tilde{n}})\bbx_{\tilde{n}}\|$ as
\begin{align} \label{eqn_lemma1_3}
\begin{split}
\|\bbH^{<c}(\mbS_{\tilde{n}})&\mathbb{x}_{\tilde{n}}-\bbH^{<c}(\bbS_{\tilde{n}})\bbx_{\tilde{n}}\| \\
&= \|\bbH^{<c}(\mbS_{\tilde{n}})\bbx_{\tilde{n}}-\bbH^{<c}(\bbS_{\tilde{n}})\bbx_{\tilde{n}}\|
\\
&\leq \|(h(0)+a_h c)
-(h(0)-a_h c)\|\|\bbx_{\tilde{n}}\| \\
&= 2a_h c\|\bbx_{\tilde{n}}\|
\end{split}
\end{align}
since $\mathbb{x}_{\tilde{n}} = \bbx_{\tilde{n}}$. 

Next, we analyze the transferability of the filter $\bbH^{\geq c}(\bbS) = \bbV h^{\geq c}(\bbLam/n)\bbV^\Hr$.
Using the triangle inequality, we bound $\|\bbH^{\geq c}(\mbS_{\tilde{n}})\bbx_{\tilde{n}}-\bbH^{\geq c}(\bbS_{\tilde{n}})\bbx_{\tilde{n}}\|$ as
\begin{align*}
\begin{split}
\|&\bbH^{\geq c}(\mbS_{\tilde{n}})\bbx_{\tilde{n}}-\bbH(\bbS_{\tilde{n}})\bbx_{\tilde{n}}\|  \\
&\leq \left\|\sum_i \left(h^{\geq c}(\lambda_i(\mbS_{\tilde{n}})/n)-h^{\geq c}(\lambda_i(\bbS_{\tilde{n}})/n)\right)\mathbb{v}_i^\Hr\bbx_{\tilde{n}} \mathbb{v}_i\right\|\mbox{  {\bf (1)}}\\
&+\left\|\sum_i h^{\geq c}(\lambda_i(\bbS_{\tilde{n}})/n)\left(\mathbb{v}_i^\Hr\bbx_{\tilde{n}} \mathbb{v}_i - \bbv_i^\Hr\bbx_{\tilde{n}} \bbv_i \right)\right\|\text{.} \quad \ \ \ \ \ \ \mbox{\bf (2)} 
\end{split}
\end{align*}

Using the Lipschitz property of the filter together with Prop. \ref{prop:eigenvalue_diff} and the Cauchy-Schwarz inequality, we can rewrite {\bf (1)} as
\begin{align*}
\begin{split}
\left\|\sum_i \left(h^{\geq c}(\lambda_i(\mbS_{\tilde{n}})/n)-h^{\geq c}(\lambda_i(\bbS_{\tilde{n}})/n)\right)\mathbb{v}_i^\Hr\bbx_{\tilde{n}} \mathbb{v}_i\right\| \\
\leq A_h \dfrac{\|{\mbS}_{\tilde{n}}-\bbS_{\tilde{n}}\|}{n} \|\bbx_{\tilde{n}}\|\ .
\end{split}
\end{align*}
To bound $\|\mbS_{\tilde{n}}-\bbS_{\tilde{n}}\|$, we will need the following proposition adapted from \cite[Thm. 1]{chung2011spectra}.

\begin{proposition}
Let $\mbG_{\tilde{n}}$ be a template graph with GSO $\mbS_{\tilde{n}}$ and let $\bbG_{\tilde{n}}$ be a stochastic graph with GSO $\bbS_{\tilde{n}}$ such that $[\bbS_{\tilde{n}}]_{ij} = \mbox{Bernoulli}([\mbS_{\tilde{n}}]_{ij})$, i.e., $\mbS_{\tilde{n}} = \mbE(\bbS_{\tilde{n}})$. Let $d_{\bbG_{\tilde{n}}}$ denote the maximum expected degree of $\bbG_{\tilde{n}}$. Let $\chi > 0$, and suppose that for $n$ sufficiently large, $d_{\bbG_{\tilde{n}}} > 4\log{(2n/\chi)}/9$. Then with probability at least $1-\chi$, for $n$ sufficiently large,
\begin{equation*}
\|\bbS_{\tilde{n}}-\mbS_{\tilde{n}}\| \leq \sqrt{4d_{\bbG_{\tilde{n}}}\log{(2n/\chi)}} \leq \sqrt{4n\log{(2n/\chi)}} \text{.}
\end{equation*}
\end{proposition}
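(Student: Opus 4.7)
The plan is to establish this concentration result by decomposing $\bbS_{\tilde{n}}-\mbS_{\tilde{n}}$ as a sum of independent zero-mean symmetric random matrices supported on distinct edges, and then applying a matrix Bernstein (or matrix Chernoff) tail inequality. Specifically, for each pair $i\leq j$, define $\bbY_{ij}\in\reals^{n\times n}$ as the symmetric matrix whose $(i,j)$ and $(j,i)$ entries equal $[\bbS_{\tilde{n}}]_{ij}-[\mbS_{\tilde{n}}]_{ij}$ and whose other entries are zero. By construction $\bbS_{\tilde{n}}-\mbS_{\tilde{n}}=\sum_{i\leq j}\bbY_{ij}$, the $\bbY_{ij}$ are mutually independent (because the Bernoulli edge draws are independent by Definition~\ref{def:stochastic}), each has zero mean, and each satisfies the almost-sure bound $\|\bbY_{ij}\|\leq 1$ since Bernoulli entries lie in $\{0,1\}$ and $[\mbS_{\tilde{n}}]_{ij}\in[0,1]$.

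The next step is to control the matrix variance parameter $\sigma^2=\big\|\sum_{i\leq j}\mbE[\bbY_{ij}^2]\big\|$. A direct calculation shows that $\mbE[\bbY_{ij}^2]$ is a diagonal matrix (for $i\neq j$) with entries equal to $\mbox{Var}([\bbS_{\tilde{n}}]_{ij})=[\mbS_{\tilde{n}}]_{ij}(1-[\mbS_{\tilde{n}}]_{ij})\leq [\mbS_{\tilde{n}}]_{ij}$ at positions $(i,i)$ and $(j,j)$. Summing over all pairs, the $(k,k)$-entry of $\sum_{i\leq j}\mbE[\bbY_{ij}^2]$ is bounded by $\sum_{j}[\mbS_{\tilde{n}}]_{kj}$, i.e.\ the expected degree of node $k$. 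Hence $\sigma^2\leq d_{\bbG_{\tilde{n}}}$, the maximum expected degree of $\bbG_{\tilde{n}}$.

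Applying matrix Bernstein then yields, for any $t>0$,
\begin{equation*}
\Pr{\|\bbS_{\tilde{n}}-\mbS_{\tilde{n}}\|\geq t}\leq 2n\exp\!\left(-\frac{t^2/2}{d_{\bbG_{\tilde{n}}}+t/3}\right).
\end{equation*}
Setting $t=\sqrt{4d_{\bbG_{\tilde{n}}}\log(2n/\chi)}$, the hypothesis $d_{\bbG_{\tilde{n}}}>4\log(2n/\chi)/9$ guarantees that the quadratic term $d_{\bbG_{\tilde{n}}}$ in the denominator dominates the linear term $t/3$, so that the exponent is at most $-\log(2n/\chi)$. This gives failure probability at most $\chi$, which is the desired bound. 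The second inequality in the proposition is then immediate from $d_{\bbG_{\tilde{n}}}\leq n$ since all edge probabilities are at most one.

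The main obstacle I anticipate is bookkeeping around the regime assumption $d_{\bbG_{\tilde{n}}}>4\log(2n/\chi)/9$: one must verify that the constant $4$ appearing both in this hypothesis and in the final bound is consistent with the Bernstein exponent (i.e.\ that the linear-in-$t$ correction is absorbed cleanly). An alternative route, closer to the original Chung--Radcliffe argument, is to use a Chernoff-type bound on the extreme eigenvalues of the symmetrized sum together with a union bound over $2n$ eigenvalues; this avoids invoking matrix Bernstein directly but requires the same degree condition to neutralize the cubic correction term in the moment generating function estimate.
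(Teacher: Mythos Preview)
Your proposal is correct, and in fact supplies more than the paper does: the paper simply defers to \cite[Theorem~1]{chung2011spectra} without reproducing any argument. Your matrix-Bernstein route is essentially the Chung--Radcliffe proof (their original argument is a matrix Chernoff bound, which is equivalent in this regime), and your constant bookkeeping is right: with $t=\sqrt{4d_{\bbG_{\tilde{n}}}\log(2n/\chi)}$ the condition $d_{\bbG_{\tilde{n}}}\geq t/3$ is exactly $d_{\bbG_{\tilde{n}}}\geq \tfrac{4}{9}\log(2n/\chi)$, which then gives $d_{\bbG_{\tilde{n}}}+t/3\leq 2d_{\bbG_{\tilde{n}}}$ and hence exponent at least $t^2/(4d_{\bbG_{\tilde{n}}})=\log(2n/\chi)$, as needed.
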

\begin{proof}
See \cite[Thm. 1]{chung2011spectra}.
\end{proof}

Hence, the term $\|\mbS_{\tilde{n}}-\bbS_{\tilde{n}}\|$ can be bounded with probability $1-\chi$ provided that $d_{\bbG_{\tilde{n}}}$, the maximum expected degree of the graph $\bbG_n$, satisfies $d_{\bbG_n} > 4 \log (2n/\chi)/9$. This is the case for graphs $\bbG_{\tilde{n}}$ for which $n$ satisfies assumption AS\ref{as4}.
To see this, write
\begin{align*}
\frac{d_{\bbG_{\tilde{n}}}}{n} &= \frac{1}{n} \max_i \left(\sum_{j=1}^n[\mbS_{\tilde{n}}]_{ij}\right) = \frac{1}{n}\max_i \left(\sum_{j=1}^n \bbW(u_i,u_j)\right) \\
&= \max_{u \in [0,1]} \int_0^1 \bbW(u,v) dv \\
&\geq \max_{u \in [0,1]} \left(\int_0^1 \bbW(u,v) dv - \int_0^1 |\bbD(u,v)| dv\right)
\end{align*}
where $\bbD(u,v)$ is the degree function of $\bbW$, i.e., $\bbD(u,v)= \int_0^1 \bbW(u,v) dv$. Using the inverse triangle inequality for the maximum, we get
\begin{align*}
\frac{d_{\bbG_{\tilde{n}}}}{n} &\geq \max_{u \in [0,1]} \int_0^1 \bbW(u,v) dv - \max_{u \in [0,1]}\int_0^1 |\bbD(u,v)| dv \\
&= d_\bbW - \max_{u \in [0,1]}\int_0^1 |\bbD(u,v)| dv\ .
\end{align*}
Hence, it suffices to find an upper bound for the maximum of the integral of $|\bbD(u,v)|$. Let $F(u) = 1$. Then,
\begin{align*}
\max_u \int_0^1 |\bbD&(u,v)|dv = \max_u \int_0^1 |\bbD(u,v)F(v)|dv \\
&\leq  \sqrt{\int_0^1 \left(\int_0^1 |\bbD(u,v)F(v)|dv\right)^2 du} \\
&\leq \sqrt{\int_0^1 \int_0^1 |\bbD(u,v)|^2dv\int_0^1|F(v)|^2dvdu} \\
&= \sqrt{\int_0^1 \int_0^1 |\bbD(u,v)|^2dvdu}
\end{align*}
where the first inequality follows from the fact that the $L^2$ norm dominates the $L^\infty$ norm, and the second from Cauchy-Schwarz. Given the graphon's Lipschitz property, we know that $|\bbD(u,v)|$ is at most $2A_w/n$.  Hence,
\begin{align*}
\max_u \int_0^1 |\bbD(u,v)|dv \leq \sqrt{\int_0^1 \int_0^1 \frac{4A_w^2}{n^2}dvdu} = \frac{2A_w}{n}
\end{align*}
and, thus, ${d_{\bbG_{\tilde{n}}}} \geq n d_\bbW - {2A_w}$, which by assumption AS\ref{as4} entails $d_{\bbG_{\tilde{n}}} > 4 \log (2n/\chi)/9$.

Given that assumption AS\ref{as4} is satisfied, with probability at least $1-\chi_3$ we can write
\begin{align} \label{eqn_lemma1_1}
\begin{split}
\Big\|\sum_i \big(h^{\geq c}(\lambda_i(\mbS_{\tilde{n}})/n)&-h^{\geq c}(\lambda_i(\bbS_{\tilde{n}})/n)\big)\mathbb{v}_i^\Hr\bbx_{\tilde{n}} \mathbb{v}_i\Big\| \\
&\leq A_h \dfrac{2\sqrt{n\log(2n/\chi_3)}}{n} \|\bbx_{\tilde{n}}\|\ \text{.}
\end{split}
\end{align}

For \textbf{(2)}, we use the triangle and Cauchy-Schwarz inequalities to decompose the norm difference as
\begin{align*}
\begin{split}
\left\|\sum_i h^{\geq c}(\lambda_i(\bbS_{\tilde{n}})/n)\left(\mathbb{v}_i^\Hr\bbx_{\tilde{n}} \mathbb{v}_i - \bbv_i^\Hr\bbx_{\tilde{n}} \bbv_i \right)\right\|
\\
\leq 2\sum_{i} \|h^{\geq c}(\lambda_i(\bbS_{\tilde{n}})/n)\| \|\bbx_{\tilde{n}}\| \|\mathbb{v}_i - {\bbv}_i\| \text{.}
\end{split}
\end{align*}
Prop. \ref{thm:davis_kahan} gives an upper bound to the norm difference between the eigenvectors, by which we obtain
\begin{align*}
\begin{split}
\left\|\sum_i h^{\geq c}(\lambda_i(\bbS_{\tilde{n}})/n)\left(\mathbb{v}_i^\Hr\bbx_{\tilde{n}} \mathbb{v}_i - \bbv_i^\Hr\bbx_{\tilde{n}} \bbv_i \right)\right\| \\
\leq \|\bbx_{\tilde{n}}\| \sum_{i} \|h^{\geq c}(\lambda_i(\bbS_{\tilde{n}})/n)\| \frac{\pi\|\mbS_{\tilde{n}}-\bbS_{\tilde{n}}\|}{d_i}
\end{split}
\end{align*}
where $d_i$ is the minimum between $\min(|\lambda_i(\mbS_{\tilde{n}}) - \lambda_{i+1}(\bbS_{\tilde{n}})|,|\lambda_i(\mbS_{\tilde{n}})-\lambda_{i-1}(\bbS_{\tilde{n}})|)$ and $\min(|\lambda_i(\bbS_{\tilde{n}}) - \lambda_{i+1}(\mbS_{\tilde{n}})|,|\lambda_i(\bbS_{\tilde{n}})-\lambda_{i-1}(\mbS_{\tilde{n}})|)$ for each $i$. Since the eigenvalues of the graphon induced by a graph are equal to the eigenvalues of the graph divided by $n$ \cite[Lemma 1]{ruiz2020graphonsp}, we have $n \delta^c_{\mbW_{\tilde{n}}\bbW_{\tilde{n}}} \leq d_i$ for all $i$. Leveraging this together with \cite[Thm. 1]{chung2011spectra}, it thus holds with probability $1-\chi_3$ that 
\begin{align} \label{eqn_lemma1_2}
\begin{split}
\left\|\sum_i h^{\geq c}(\lambda_i(\bbS_{\tilde{n}})/n)\left(\mathbb{v}_i^\Hr\bbx_{\tilde{n}} \mathbb{v}_i - \bbv_i^\Hr\bbx_{\tilde{n}} \bbv_i \right)\right\| \\ 
\leq \|\bbx_{\tilde{n}}\| B^c_{\bbW_{\tilde{n}}} \frac{2\pi\sqrt{n\log (2n/\chi_3)}}{n\delta^c_{\mbW_{\tilde{n}}\bbW_{\tilde{n}}}}\text{.}
\end{split}
\end{align}
Putting together \eqref{eqn_lemma1_3}, \eqref{eqn_lemma1_1} and \eqref{eqn_lemma1_2} completes the proof. \qed

\section{Proof of Thm. 2} \label{sec:appendixC}

To compute a bound for $\|Y-Y_{{n}}\|$, we start by writing it in terms of the last layer's features as
\begin{equation} \label{eqn:proof2.0}
\left\|Y-Y_{{n}}\right\|^2 = \sum_{f=1}^{F_L} \left\|X^f_{L} - X^f_{{n},L}\right\|^2.
\end{equation}
At  layer $\ell$ of the WNN $\Phi(X; \ccalH, \bbW)$, we have
\begin{align*}
\begin{split}
X^f_{\ell} = \sigma\left(\sum_{g=1}^{F_{\ell-1}}T_{\bbH_\ell^{fg}} X_{\ell-1}^g\right)
\end{split}
\end{align*}
and similarly for $\bbPhi(X_{{n}};\ccalH, \bbW_{{n}})$,
\begin{align*}
\begin{split}
X^f_{{n},\ell} = \rho\left(\sum_{g=1}^{F_{\ell-1}}T_{\bbH_{{n},\ell}^{fg}} X_{{n},\ell-1}^g\right)
\end{split}
\end{align*}
where $T_{\bbH_{{n},\ell}^{fg}}$ is the convolutional filter \eqref{eqn:lsi-wf} mapping feature $X_{\ell-1,{n}}^{g}$ to feature $X_{\ell,{n}}^{f}$. 
We can thus write $\|X^f_{\ell}-X^f_{{n},\ell}\|$ as
\begin{align*}
\begin{split}
\big\|X^f_{\ell}&-X^f_{{n},\ell}\big\| \\
&= \left\|\sigma\left(\sum_{g=1}^{F_{\ell-1}}T_{\bbH_\ell^{fg}} X_{\ell-1}^g\right)-\rho\left(\sum_{g=1}^{F_{\ell-1}}T_{\bbH_{{n},\ell}^{fg}} X_{{n},\ell-1}^g\right)\right\|
\end{split}
\end{align*}
and, since by assumption AS\ref{as5} $\sigma$ is normalized Lipschitz,
\begin{align*}
\begin{split}
\left\|X^f_{\ell}-X^f_{{n},\ell}\right\| &\leq \left\|\sum_{g=1}^{F_{\ell-1}}T_{\bbH_\ell^{fg}} X_{\ell-1}^g-T_{\bbH_{{n},\ell}^{fg}} X_{{n},\ell-1}^g\right\|\\
&\leq \sum_{g=1}^{F_{\ell-1}}\left\|T_{\bbH_\ell^{fg}} X_{\ell-1}^g-T_{\bbH_{{n},\ell}^{fg}} X_{{n},\ell-1}^g\right\|.
\end{split}
\end{align*}
where the second inequality follows from the triangle inequality. Looking at each feature $g$ independently, we apply the triangle inequality once again to obtain
\begin{align*}
\begin{split}
\left\|T_{\bbH_\ell^{fg}} X_{\ell-1}^g-T_{\bbH_{{n},\ell}^{fg}} X_{{n},\ell-1}^g\right\| \leq \left\|T_{\bbH_\ell^{fg}} X^g_{\ell-1}-T_{\bbH_{{n},\ell}^{fg}} X^g_{\ell-1}\right\| \\
+ \left\|T_{\bbH^{fg}_{{n},\ell}}\left(X^g_{\ell-1}-X^g_{{n},\ell-1}\right)\right\|.
\end{split}
\end{align*}
From Thm. \ref{lemma:graphon-graph-filter}, 
the first term on the RHS of this inequality is bounded by 
\begin{align} \label{eqn:step_change}
\begin{split}
&\big\|T_{\bbH_\ell^{fg}} X^g_{\ell-1}-T_{\bbH_{{n},\ell}^{fg}} X^g_{\ell-1}\big\| \\
&\leq \bigg(A_h + \frac{\pi B_{\bbW_{{n}}}^c}{\delta^c_{\bbW\bbW_{{n}}}}\bigg)
\|\bbW-\bbW_n\|\|X_{\ell-1}^g\| + 4A_h c\|X_{\ell-1}^g\|
\end{split}
\end{align}
Note that the term 
$(A_h c + 2)\|X-X_n\|$, which measures the distance between the input signals of the two WNNs, disappears here because the input is $X_{\ell-1}^g$ for both.
The second term can be decomposed using Cauchy-Schwarz and recalling that $|h(\lambda)| < 1$ for all graphon convolutions in the WNN (assumption AS\ref{as2}). We thus obtain a recursion for $\|X^f_{\ell}-X^f_{{n},\ell}\|$, which is given by
\begin{align} \label{eqn:proof2.1}
\begin{split}
\big\|X^f_{\ell}-&X^f_{n,\ell}\big\| \\
&\leq \bigg(A_h + \frac{\pi B_{\bbW_{{n}}}^c}{\delta^c_{\bbW\bbW_{{n}}}}\bigg)
\|\bbW-\bbW_n\|\|X_{\ell-1}^g\| \\
&+ 4A_h c\|X_{\ell-1}^g\| + \sum_{g=1}^{F_{\ell-1}}\left\|X^g_{\ell-1}-X^g_{{n},\ell-1}\right\| \text{.}
\end{split}
\end{align}

To solve the recursion in \eqref{eqn:proof2.1}, we need to compute the norm $\|X_{\ell-1}^g\|$. Since the nonlinearity $\sigma$ is normalized Lipschitz and $\sigma(0)=0$ by assumption AS\ref{as5}, this bound can be written as
\begin{align*}
\begin{split}
\left\|X_{\ell-1}^g\right\| \leq \left\|\sum_{g=1}^{F_{\ell-1}} T_{\bbH_\ell^{fg}} X_{\ell-1}^g \right\|
\end{split}
\end{align*}
and using the triangle and Cauchy Schwarz inequalities,
\begin{align*}
\begin{split}
\left\|X_{\ell-1}^g\right\| \leq \sum_{g=1}^{F_{\ell-1}}\vertiii{T_{\bbH_\ell^{fg}}} \left\|X_{\ell-1}^g \right\| \leq \sum_{g=1}^{F_{\ell-1}}\left\|X_{\ell-1}^g \right\|
\end{split}
\end{align*}
where the second inequality follows from $|h(\lambda)| < 1$. Expanding this expression with initial condition $X_0^g = X^g$ yields
\begin{align} \label{eqn:proof2.2}
\begin{split}
\left\|X_{\ell-1}^g\right\| \leq \prod_{\ell'=1}^{\ell-1} F_{\ell'} \sum_{g=1}^{F_{0}}\left\|X^g \right\|
\end{split}
\end{align}
and substituting it back in \eqref{eqn:proof2.1} to solve the recursion, we get
\begin{align} \label{eqn:proof2.3}
\begin{split}
\big\|X^f_{\ell}&-X^f_{{n},\ell}\big\| \leq L 
\bigg(A_h + \frac{\pi B_{\bbW_{{n}}}^c}{\delta^c_{\bbW\bbW_{{n}}}}\bigg)
\|\bbW-\bbW_n\|\\
&\times 
\left(\prod_{\ell'=1}^{\ell-1} F_{\ell'}\right) \sum_{g=1}^{F_{0}}\left\|X^g \right\| + 4LA_h c \left(\prod_{\ell'=1}^{\ell-1} F_{\ell'}\right) \sum_{g=1}^{F_{0}}\left\|X^g \right\| \\
&+ F_0
(A_h c + 2)\|X_g-X_{n}^g\|\text{.}
\end{split}
\end{align}

To arrive at the result of Thm. \ref{thm:graphon-graph-nn}, we  evaluate \eqref{eqn:proof2.3} with $\ell=L$ and substitute it into \eqref{eqn:proof2.0} to obtain
\begin{align} \label{eqn:proof2.4}
\begin{split}
\big\|Y-&Y_{{n}}\big\|^2 = \sum_{f=1}^{F_L} \left\|X^f_{L} - X^f_{{n},L}\right\|^2 \\
&\leq \sum_{f=1}^{F_L} \Bigg(L \bigg(A_h + \frac{\pi B_{\bbW_{{n}}}^c}{\delta^c_{\bbW\bbW_{{n}}}}\bigg)
\|\bbW-\bbW_n\|\\
&\times 
\left(\prod_{\ell'=1}^{\ell-1} F_{\ell'}\right) \sum_{g=1}^{F_{0}}\left\|X^g \right\| + 4LA_h c \left(\prod_{\ell'=1}^{\ell-1} F_{\ell'}\right) \sum_{g=1}^{F_{0}}\left\|X^g \right\| \\
&+ F_0
(A_h c + 2)\|X^g-X^g_n\|
\Bigg)^2.
\end{split}
\end{align}
Substituting $F_0 = F_L = 1$ and $F_\ell = F$ for $1 \leq \ell \leq L-1$ concludes the proof. \qed

\end{document}